\theoremstyle{plain}
\newtheorem{theorem}{Theorem}[section]
\newtheorem{proposition}[theorem]{Proposition}
\theoremstyle{definition}
\theoremstyle{remark}
\definecolor{myDarkGreen}{rgb}{0,0.4,0}
\definecolor{myDarkRed}{rgb}{0.545,0.0,0.0}
\definecolor{lightgray}{gray}{0.9}
\title{Decoupled Entropy Minimization}
\author{
  Jing Ma$^*$, Hanlin Li$^*$, Xiang Xiang\thanks{Equal contribution, co-first author; Correspondence to \url{xex@hust.edu.cn} (also with Peng Cheng Lab)} \\
   HUST AI and Visual Learning Lab (HAIV Lab) \\ 
   Huazhong University of Science and Technology (HUST), China
}
\begin{document}

\maketitle

\begin{abstract}
Entropy Minimization (EM) is beneficial to reducing class overlap, bridging domain gap, and restricting uncertainty for various tasks in machine learning, yet its potential is limited. 
To study the internal mechanism of EM, we reformulate and decouple the classical EM into two parts with opposite effects: cluster aggregation driving factor (CADF) rewards dominant classes and prompts a peaked output distribution, while gradient mitigation calibrator (GMC) penalizes high-confidence classes based on predicted probabilities.
Furthermore, we reveal the limitations of classical EM caused by its coupled formulation: 
1) reward collapse impedes the contribution of high-certainty samples in the learning process,
and 2) easy-class bias induces misalignment between output distribution and label distribution. 
To address these issues, 
we propose \textbf{Ada}ptive \textbf{D}ecoupled \textbf{E}ntropy \textbf{M}inimization (AdaDEM),
which normalizes the reward brought from CADF and employs a marginal entropy calibrator (MEC) to replace GMC.
AdaDEM outperforms DEM*, an upper-bound variant of classical EM, and achieves superior performance across various imperfectly supervised learning tasks in noisy and dynamic environments.
\end{abstract}

\section{Introduction}
\label{introduction}

\textit{Entropy Minimization} (EM) is a common self-supervised optimization method, which minimizes the conditional entropy of model predictions to reduce the class overlap \cite{grandvalet2004semi}.
It facilitates a low-density separation between classes and enhances confident predictions.
EM has been shown to be useful for clustering, semi-supervised, and unsupervised learning \cite{li2021contrastive,miyato2018virtual,caron2020unsupervised}.
Conditional entropy is also a widely validated calibration tool for Deep Neural Networks (DNNs), since it measures the prediction error and distribution shifts to some extent \cite{wang2020tent}. 
EM helps to bridge the domain gap and push the model's decision boundary towards the low-density regions of the target distribution.
Therefore, EM is widely used in active learning, domain adaptation, and online learning as well \cite{wang2020tent,houlsby2011bayesian,liang2020we,niu2022efficient,ma2025ptta}.

Although Entropy Minimization is applied to a variety of tasks due to its simplicity and generality,
previous literature indicates that the potential of classical EM is limited \cite{zhai2019s4l,lee2024entropy,hu2017learning}, causing unsatisfactory performance.
To investigate the internal mechanisms by which EM effectively optimizes model parameters in an unsupervised manner, we attempt to reformulate and decouple the EM.
We divide the conditional entropy in EM into two independent parts with entirely opposite effects: a positive-acting factor termed \textbf{C}luster \textbf{A}ggregation \textbf{D}riving \textbf{F}actor (CADF) and a negative-acting factor called \textbf{G}radient \textbf{M}itigation \textbf{C}alibrator (GMC).
Minimizing CADF rewards dominant classes and promotes a peaked output distribution of the model, while minimizing GMC (log-sum-exp of the logits) penalizes the maximum logit and high-confidence classes based on predicted probabilities and calibrates the optimization direction, serving as a regularization term.
This reformulation enables systematic analysis of these two parts and refining the classical EM, as shown in Fig.~\ref{illustrations_of_loss}.

Importantly, we demonstrate that the highly coupled formulation of conditional entropy inherently constrains the effectiveness of classical EM.
First, samples with lower confidence exert a more pronounced impact in the learning process, while the contribution of high-certainty samples diminishes significantly as their predicted probabilities approach $1.0$, as shown in Fig.~\ref{limitations of EM} (left).
This phenomenon is named \textit{reward collapse}, which is not preferable because samples with higher certainty can provide more reliable and informative signals for self-supervised learning.
Second, classical EM tends to exhibit systematic bias toward dominant and easy classes, or assign most samples to a single cluster, resulting in severe misalignment between the model's output distribution and the ground-truth label distribution, as shown in Fig.~\ref{limitations of EM} (right).
\textit{Easy-class bias} undermines the adaptability of classical EM in handling noisy or imbalanced tasks, particularly in dynamic or non-stationary environments.

\begin{wrapfigure}{r}{0.59\textwidth}
\vspace{-2mm}
\centering
\includegraphics[width=0.59\columnwidth]{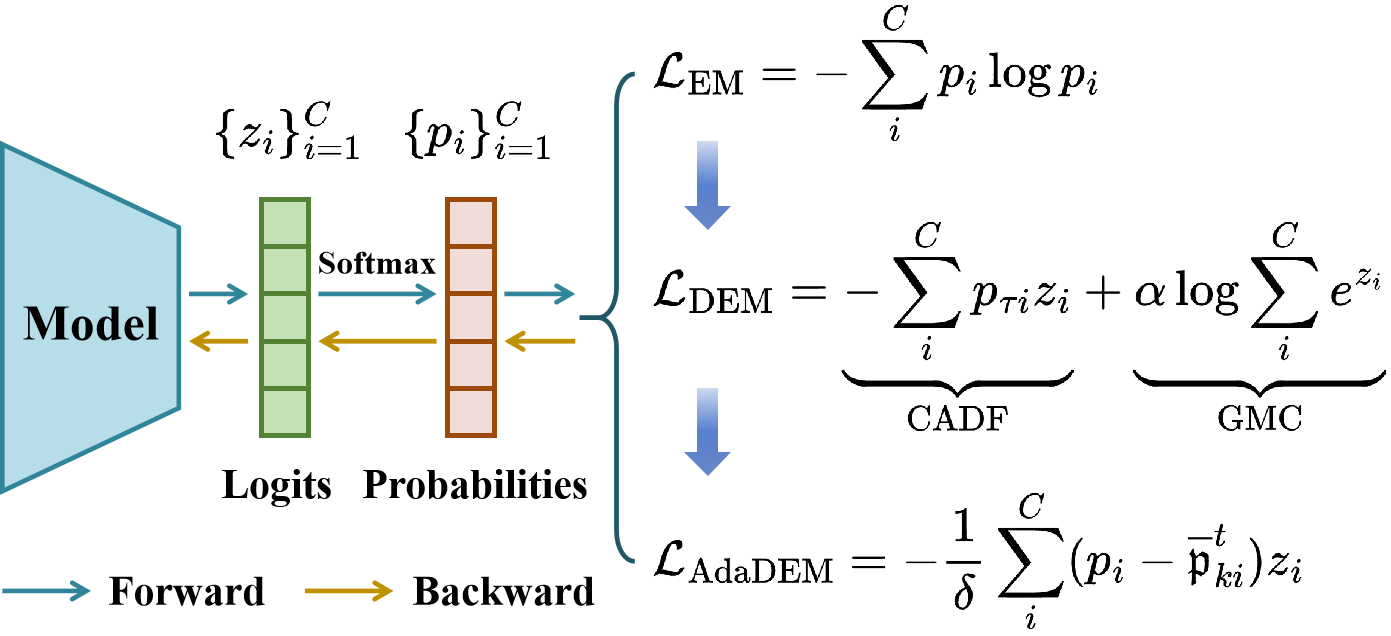}
\caption{
EM is decoupled into two parts with opposite effects: CADF and GMC.
DEM* softens the model's prediction via temperature $\tau$ and scales GMC via weight $\alpha$, searching for optimal $(\tau*, \alpha*)$ to maximize classical EM's performance. 
AdaDEM normalizes CADF reward by $\delta$ (L1-norm of the gradients) to prevent reward collapse, and replaces GMC with Marginal Entropy Calibrator (MEC, \textit{i.e.}, $\overline{\mathfrak{p}}_{k}^t$) to reduce easy-class bias.
}
\label{illustrations_of_loss}
\vspace{-3mm}
\end{wrapfigure}

Through the investigation in Sec.~\ref{DEM method}, we find that replacing the classical EM with minimizing CADF \textit{alone} significantly enhances EM performance,
yet it exhibits weaker robustness to target data distribution shifts.
However, introducing a temperature $\tau$ to reshape the reward curve (by raising the value of $\tau$ when model's average predicted probability on target data increases) effectively improves CADF's robustness.
Meanwhile, minimizing GMC helps reject erroneous highly confident predictions from poorly calibrated models \cite{rizve2021defense}.
Consequently, a weight $\alpha$ is employed to scale and control GMC's influence, mitigating model overfitting in noisy tasks and dynamic environments.
\textbf{D}ecoupled \textbf{E}ntropy \textbf{M}inimization* (DEM*) integrates CADF and GMC with searched optimal hyperparameters $(\tau^*, \alpha^*)$ in target data distributions.
DEM* represents the performance \textit{upper bound} of classical EM on the target task to some extent,
but it incurs additional computational overhead for identifying optimal hyperparameters.

To address these issues, we propose 
\textbf{Ada}ptive \textbf{D}ecoupled \textbf{E}ntropy \textbf{M}inimization (AdaDEM).
To enhance the relative contribution of high-certainty samples in the learning process,
AdaDEM employs the L1-norm of rewards brought from CADF to normalize the conditional entropy.
Additionally, 
a \textbf{M}arginal \textbf{E}ntropy \textbf{C}alibrator (MEC) is proposed to replace the GMC, which counteracts the overwhelming influence of dominant and easy classes by maximizing the estimated marginal entropy.
Unlike existing methods that assume uniform label distributions \cite{liang2020we,hu2017learning}, MEC eliminates the need for label priors and instead leverages dynamic estimation during learning.
In this way, AdaDEM, without requiring tunable hyperparameters, achieves comparable or even superior performance to DEM*.

In summary:
1) We provide an insightful view to study Entropy Minimization by reformulating and decoupling the classical EM into CADF and GMC with opposite effects.
We investigate the roles of these two parts and reveal the inherent limitations of classical EM caused by its highly coupled formulation, \textit{i.e.}, reward collapse and easy-class bias phenomena.
2) We propose DEM* to explore the performance upper bound of classical EM, and propose AdaDEM to mitigate reward collapse and easy-class bias while eliminating hyperparameter tuning requirements, thereby unleashing EM's potential.
AdaDEM is validated to outperform DEM*.
3) Extensive experiments\footnote{Source code is available at \url{https://github.com/HAIV-Lab/DEM}} across various tasks, including semi-supervised and unsupervised learning, domain adaptation, and reinforcement learning, demonstrate AdaDEM's superior performance.
Additional evaluations on noisy/imbalanced benchmarks and dynamic/non-stationary environments further validate the effectiveness of AdaDEM. 

\section{Related Work}
\label{related work}

Shannon entropy \cite{shannon1948mathematical} is a fundamental concept in information theory, which is used to measure the degree of uncertainty of a random variable. 
The higher the entropy value, the greater the uncertainty. 
For a discrete random variable $x$ with $n$ possible values $\{x_1,x_2,...,x_n\}$ and corresponding probabilities $\{p_1,p_2,...p_n\}$, where $p_i\ge0$ and $\sum_{i=1}^n p_i = 1$, the Shannon entropy $H(x)$ is defined as 
\begin{equation}
\label{shannon entropy}
    H(x) = - \sum_{i=1}^n p_i(x_i) \log p_i(x_i).
\end{equation}
In machine learning, Shannon entropy is generally used as an information-theoretic measure of uncertainty or impurity. 
It represents the amount of information needed to encode a distribution \cite{settles2009active}.
Given an input distribution $X$, a target distribution $Y$, and the sampled examples $\{x,y\}^N$.
We build a model $f$ that maps $X$ to $Y$ that parameterized by $\theta$. 
The conditional output of $f$ is denoted as $p(y|x,\theta)$.
Substituting it into Eq.~\eqref{shannon entropy}, we can obtain the uncertainty of the model's output, which is defined as the expected value of the information carried by a sample from the output distribution:
\begin{equation}
\label{conditional entropy}
    H(Y|X) = - \frac{1}{N} \sum_x^X \sum_{y}^Y p(y|x,\theta) \log p(y|x,\theta).
\end{equation}
The conditional entropy $H(Y|X)$ is a measure of class overlap \cite{grandvalet2004semi}. 
Intuitively, if we want $p(y|x,\theta)$ to be highly peaked, \textit{i.e.}, the model's prediction for $x$ is certain, we want $H(Y|X)$ to be low. 
On the other hand, if we want $p(y|x,\theta)$ to be flat or predictions to be uncertain, we can maximize the entropy $H(Y|X)$, which, in the limit, will lead to a uniform conditional distribution over classes \cite{springenberg2015unsupervised}.

By minimizing the conditional entropy $H(Y|X)$ for samples, 
the overlap of model's output distribution can be reduced, 
leading the density of data points to get lower at the decision boundary \cite{lee2013pseudo}.
It facilitates a low-density separation between classes, a commonly prior assumption for semi-supervised learning \cite{chapelle2005semi}.
A number of studies show that unlabeled data can be more informative if there is less overlap between classes \cite{sajjadi2016mutual}.
Thus, EM has been demonstrated to be effective for clustering (avoiding trivial solutions where most instances concentrate in a single cluster) \cite{li2021contrastive,hu2017learning,xie2020unsupervised,ma2024aligning}, semi-supervised learning (enhancing model prediction accuracy per data point) \cite{grandvalet2004semi,miyato2018virtual,zhai2019s4l,wang2022freematch,sohn2020fixmatch,berthelot2019mixmatch}, and unsupervised learning (yielding peaked conditional class distributions) \cite{caron2020unsupervised,springenberg2015unsupervised,xie2020unsupervised,yang2025unleashing,xiang2025aligning}.

\begin{figure*}[t]
\begin{center}
\begin{minipage}{0.4\columnwidth}
\centerline{\includegraphics[width=0.96\textwidth]{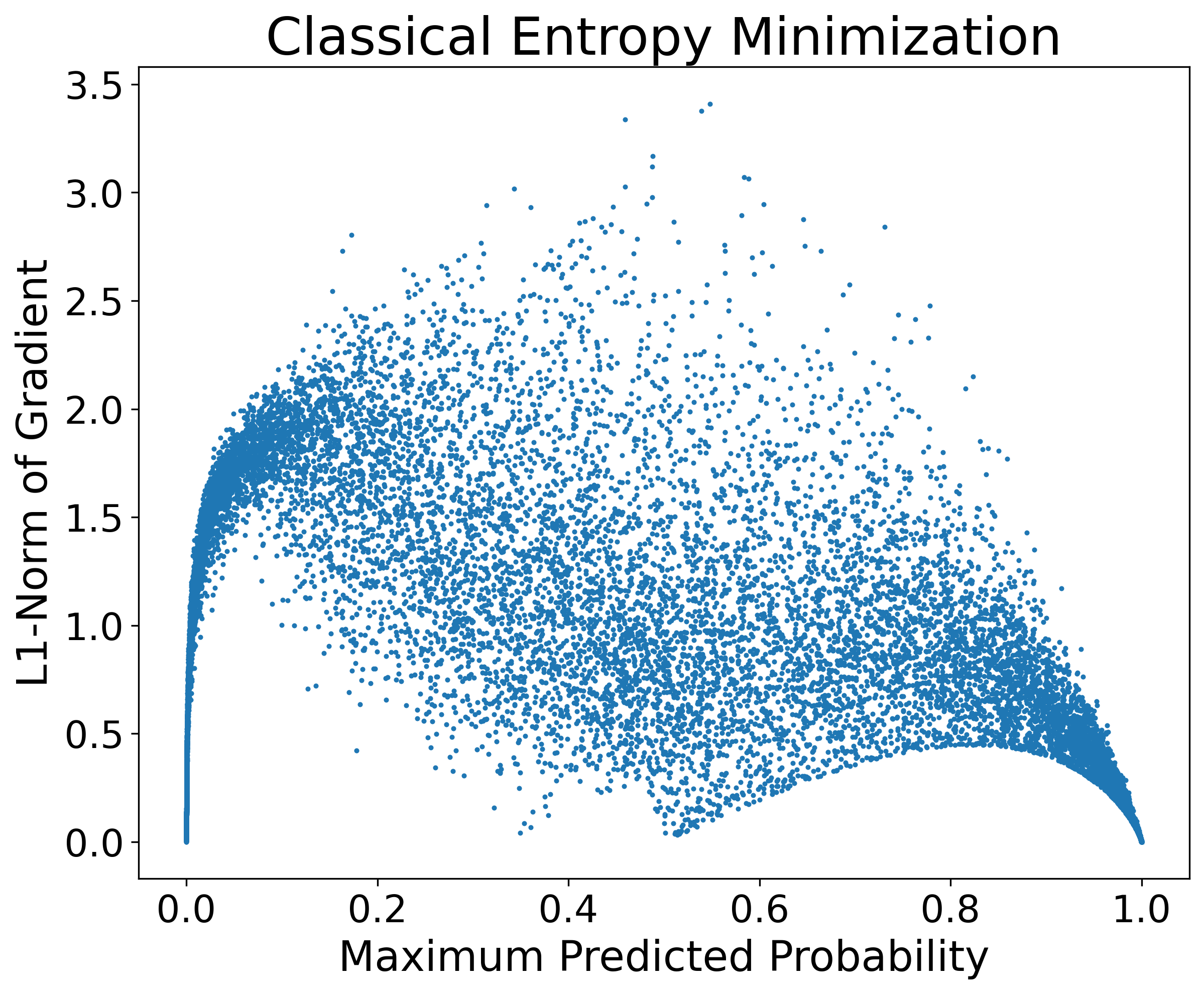}}
\end{minipage}
\hfill
\begin{minipage}{0.59\columnwidth}
\centerline{\includegraphics[width=0.96\textwidth]{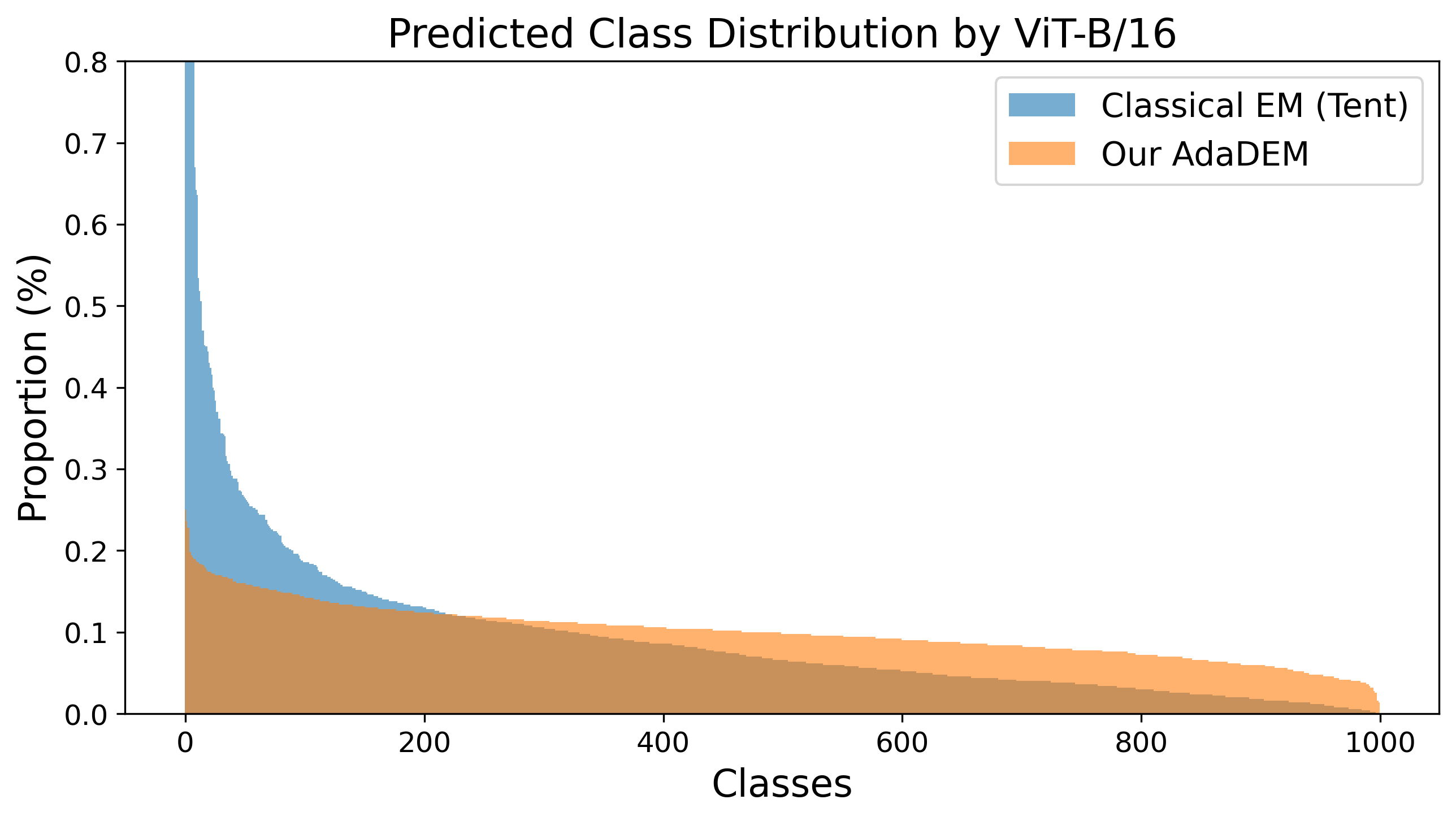}}
\end{minipage}
\end{center}
\vskip -0.13in
\caption{(\textbf{Left: Reward Collapse}) We compare the gradient magnitudes of the classical EM for samples with different predicted probabilities, which collapse to $0.0$ when the maximum probabilities approach $1.0$. (\textbf{Right: Easy-Class Bias}) The output distribution of ViT-B/16 after test-time adaptation using classical EM and our AdaDEM on a \textit{class-balanced} Gaussian-noise-corrupted ImageNet-C benchmark, with all classes sorted by their predicted proportions in descending order.}
\label{limitations of EM}
\vspace{-3mm}
\end{figure*}

Deep Neural Networks (DNNs) suffer from distribution shifts. 
DNNs trained on the source domain tend to produce over-confident (low-entropy) predictions for in-distribution data and under-confident (high-entropy) predictions for out-of-distribution data \cite{vu2019advent}.
Conditional entropy $H(Y|X)$ is a widely validated calibration tool for DNNs, since it measures the prediction error and distribution shifts to some extent. 
More confident predictions are all-in-all more correct. 
More severe shifts result in higher entropy \cite{wang2020tent}.
One possible way to bridge the domain gap is to push the model's decision boundary towards the low-density regions of target distributions.
On this basis, EM is widely used in active learning (reducing the number of possible hypotheses as rapidly as possible) \cite{houlsby2011bayesian,settles2009active}, domain adaptation (aligning the target data distribution with the source data distribution) \cite{liang2020we,vu2019advent,ma2024improved}, and online learning (connecting entropy to error and shift) \cite{wang2020tent,niu2022efficient,lee2024entropy,niu2023towards,shu2022test}.

Entropy Maximization plays a role in encouraging exploration in Reinforcement Learning \cite{mnih2016asynchronous,schulman2017proximal,espeholt2018impala}. 
Adding the entropy of policies to the objective function prevents premature convergence to sub-optimal deterministic policies and is particularly beneficial for tasks that require hierarchical behavior \cite{mnih2016asynchronous}.
It is worth noting that maximizing the conditional entropy $H(Y|X)$ leads to a uniform output distribution to obtain uncertain predictions,
while maximizing the marginal entropy, \textit{i.e.},
\begin{equation}
\label{marginal entropy}
    H(Y)= - \sum_{y}^Y p(y, \theta) \log p(y,\theta)    
\end{equation}
over all data points, where $p(y,\theta) = \frac{1}{N} \sum_{i=1}^N p(y|x_i,\theta)$, 
encourages the cluster sizes of classes to be uniform \cite{hu2017learning}.
The combination of minimizing $H(Y|X)$ and maximizing $H(Y)$ makes the model's output distribution individually certain and globally diverse \cite{liang2020we,hu2017learning,springenberg2015unsupervised}.

Unlike Cross-Entropy and KL-Divergence, which employ external supervision in the learning process, Entropy Minimization is a self-supervised learning paradigm. 
Therefore, addressing the potential limitations of classical EM can promote the development of imperfectly supervised machine learning.
EM exhibits similar concepts to Self-Training \cite{zou2019confidence} and Pseudo-Labeling \cite{lee2013pseudo}, which leverage the model's own predictions for learning.
Reward collapse and easy-class bias are also prevalent in these methods, and AdaDEM can be plug-and-play.
While Early-Learning Regularization \cite{liu2020early} mitigates label noise by using an exponential moving average of predictions as self-supervised signals,
MEC in AdaDEM employs dynamic estimation of predictions to penalize dominant and easy classes.

\section{Rethinking Entropy Minimization}
\label{method}

\subsection{Reformulating Conditional Entropy}

\textbf{Notations.}
For an example $x \in X$, the output of model $f(\theta)$ is $\mathbf{z} = [z_i]_{i=1}^C \in \mathbb{R}^{C}$.
$z_i$ is the logit of the $i$-th class predicted by the model, and $C$ is the number of classes. 
We apply the Softmax function $\sigma(\cdot)$ to convert the real-valued vector $\mathbf{z}$ into a probability vector 
$\mathbf{p}=[p_i]_{i=1}^C \in \mathbb{R}^C$,
\textit{i.e.}, $p_i = e^{z_i} / \sum_{j=1}^C e^{z_j}$,
where $0 \le p_i \le 1, \forall i \in \{1, 2,...,C\}$.

For simplicity, we define the conditional entropy in Eq.~\eqref{conditional entropy} as
\begin{equation}
\label{entropy for logits}
    H(\mathbf{z}) = - \sum_{i=1}^C p_i(\mathbf{z}) \log p_i(\mathbf{z}).
\end{equation}
The objective of EM is to minimize $H(\mathbf{z})$ as the sole loss function or a regularization term, namely,
\begin{equation}
\label{optimization function}
    \theta^* = \arg\min_{\theta} \sum_{x}^X H(\mathbf{z} | x, \theta) = \arg\min_{\theta} - \sum_{x}^X \sum_{i=1}^C p_i(\mathbf{z} | x, \theta) \log p_i(\mathbf{z} | x, \theta),
\end{equation}
where $\theta^*$ is defined as the optimal solution for model $f(\theta)$.

To analyze the contributions of the two decoupled parts with opposite effects from classical EM to the model's output, we define the terms "reward" and "penalty." 
We define "reward" as the negative partial derivative of the loss function $\mathcal{L}=H(\mathbf{z})$ with respect to the logit $z_i$, \textit{i.e.}, $-\partial \mathcal{L}/\partial z_i$, which aligns with the gradient descent direction for solving Eq.~\eqref{optimization function}.
We define "penalty" as the opposite of "reward", \textit{i.e.}, $\partial \mathcal{L} / \partial z_i$.
We subsequently demonstrate that minimizing CADF rewards the model's output logits, while minimizing GMC penalizes them.

\textbf{Reformulation.}
To investigate the internal mechanisms by which EM effectively optimizes model parameters in an unsupervised manner,
we attempt to reformulate and decouple the conditional entropy $H(\mathbf{z})$ in EM, 
which can be rewritten as
\begin{equation}
\label{decouple EM}
\begin{split}
    H(\mathbf{z}) = - \sum_{i=1}^C p_i(\mathbf{z}) \log \frac{e^{z_i}}{\sum_{j=1}^C e^{z_j}} 
         = \underbrace{- \sum_{i=1}^C p_i(\mathbf{z}) z_i}_{\text{CADF}} + \underbrace{\log \sum_{i=1}^C e^{z_i}}_{\text{GMC}}.
\end{split}
\end{equation}
As shown in Eq.~\eqref{decouple EM}, the conditional entropy is reformulated into two independent parts with entirely opposite effects.
To analyze the impact of these two parts on EM,
we derive the partial derivatives of the first term $T(\mathbf{z}) = - \sum_{i=1}^C p_i(\mathbf{z}) z_i$ and the second term $Q(\mathbf{z}) = \log \sum_{i=1}^C e^{z_i}$ with respect to $z_i$,
\begin{equation}
\label{partial Q and T by z_i}
    R_T = - \frac{\partial T(\mathbf{z})}{\partial z_i} = p_i (T(\mathbf{z}) + z_i + 1),
    \ \ \ \ \ \ 
    R_Q = - \frac{\partial Q(\mathbf{z})}{\partial z_i} = - \frac{e^{z_i}}{\sum_{j=1}^C e^{z_j}} = - p_i.
\end{equation}
$R_T$ denotes the reward obtained by minimizing $T(\mathbf{z})$ for logits $z_i$, which leads to an positive increase in logit values.
The reward curve of $R_T$ \textit{w.r.t.} the probability $p_i$ is shown in Fig.~\ref{abl_for_alpha} (left) ($\alpha=0.0$ \& $\tau=1.0$).
Minimizing $T(\mathbf{z})$ results in higher rewards for logits with elevated predicted probabilities, exhibiting an approximately linear trend.
However, when reflected in probabilities through the Softmax function, this trend becomes amplified into exponential growth.
Consequently, minimizing $T(\mathbf{z})$ causes the model to favor predicting classes with higher probabilities, thereby granting more rewards to dominant and easy classes.
Conversely, $R_Q=-p_i\le0$ consistently imposes penalties on logits based on predicted probabilities.
Minimizing $Q(\mathbf{z})$ causes higher-confidence classes to receive greater penalties, which drives the model's output distribution toward uniformity.
So we name $T(\mathbf{z})$ the \textbf{C}luster \textbf{A}ggregation \textbf{D}riving \textbf{F}actor (CADF) and $Q(\mathbf{z})$ the \textbf{G}radient \textbf{M}itigation \textbf{C}alibrator (GMC).
According to Eq.~\eqref{decouple EM} and \ref{partial Q and T by z_i}, 
we introduce $\sum_{i=1}^C \mathfrak{p}_i z_i$ that shares identical partial derivatives \textit{w.r.t.} logits as $Q(\mathbf{z})$, where $\mathfrak{p}_i$ is a constant excluded from the computation graph\footnote{In PyTorch, the "detach()" method can be used to obtain $\mathfrak{p}_i$}.

Therefore, the conditional entropy can also be rewritten as
\begin{equation}
\label{decoupled EM v2}
    H(\mathbf{z}) = - \sum_{i=1}^C (p_i(\mathbf{z}) - \mathfrak{p}_i)z_i.
\end{equation}
Next, we investigate the effects of CADF and GMC on entropy minimization and propose enhancements to address these limitations.

\subsection{Decoupled Entropy Minimization}
\label{DEM method}

\begin{figure}[t]
\begin{center}
\begin{minipage}{0.32\columnwidth}
\centerline{\includegraphics[width=\columnwidth]{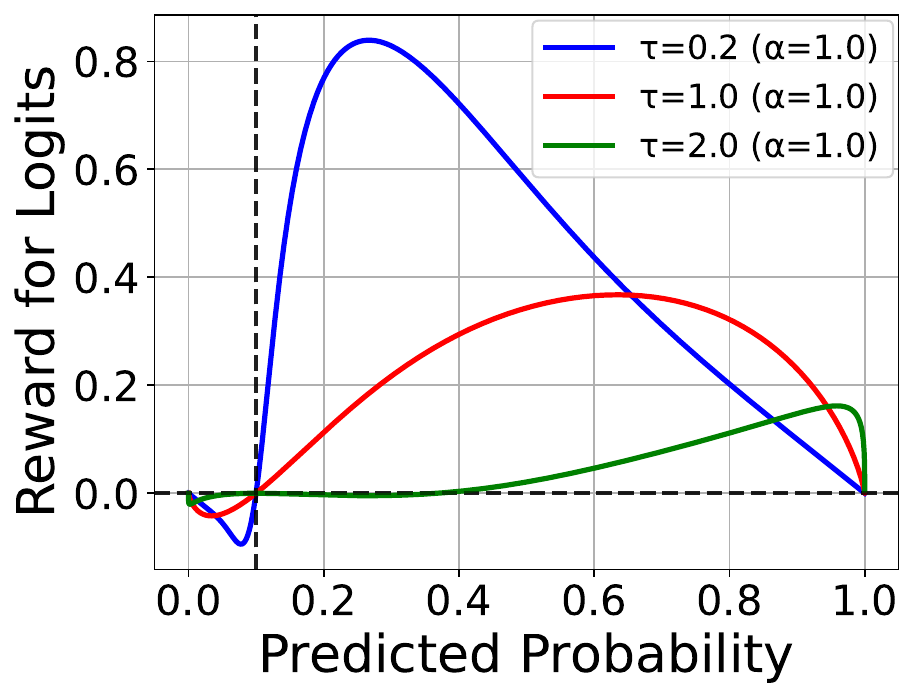}}
\end{minipage}
\hspace{1mm}
\begin{minipage}{0.32\columnwidth}
\centerline{\includegraphics[width=\columnwidth]{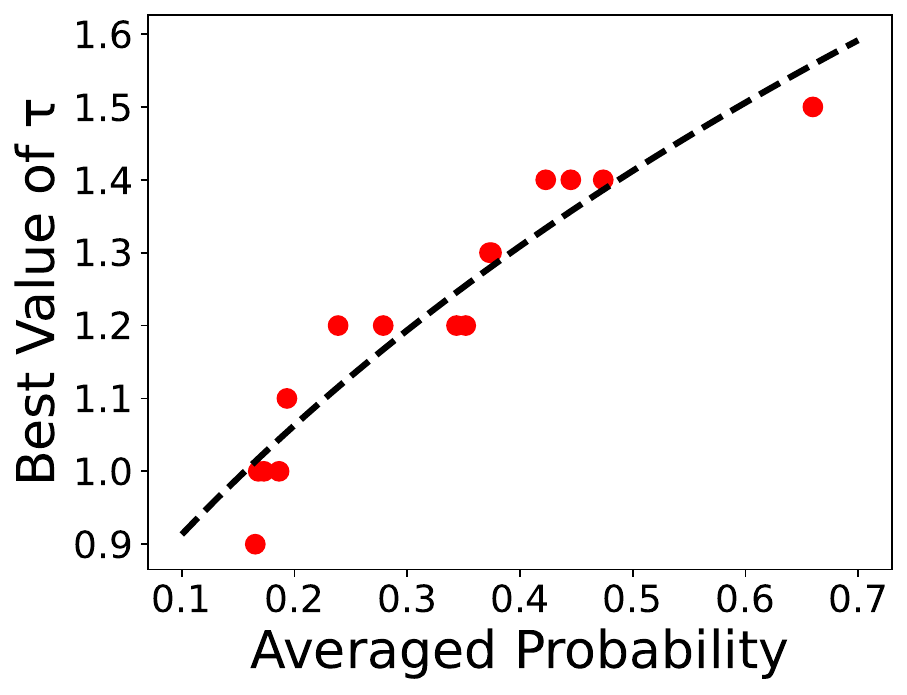}}
\end{minipage}
\hspace{1mm}
\begin{minipage}{0.32\columnwidth}
\centerline{\includegraphics[width=\columnwidth]{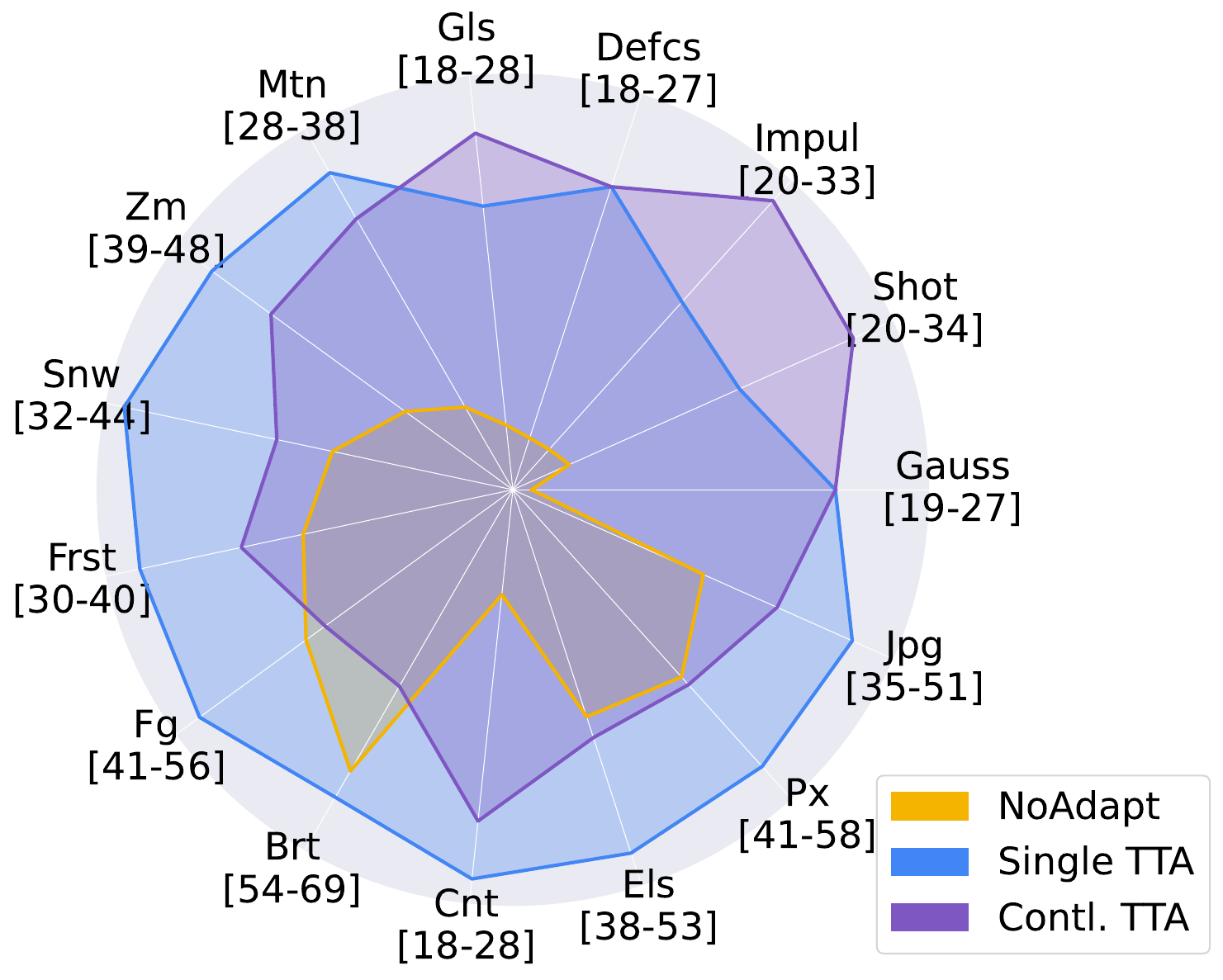}}
\end{minipage}
\end{center}
\vskip -0.13in
\caption{
(\textbf{Left}) Reward curves of DEM with varying $\tau$ values for a $10$-way classification task.
(\textbf{Center}) The best $\tau$ value positively correlates with the average predicted probability of source models on target data.
(\textbf{Right}) Detailed TTA results using the optimal $\tau$ across $15$ target domains, with exact values sourced from Fig.~\ref{abl for tau} (Center). 
"NoAdapt" denotes the baseline using fixed source model parameters without adaptation, hence its performance remains consistent for both single-domain and continual TTA tasks. Values in brackets indicate the corresponding axis range.
}
\label{abl for tau}
\vspace{-2mm}
\end{figure}

\begin{figure}[t]
\begin{center}
\begin{minipage}{0.32\columnwidth}
\centerline{\includegraphics[width=\columnwidth]{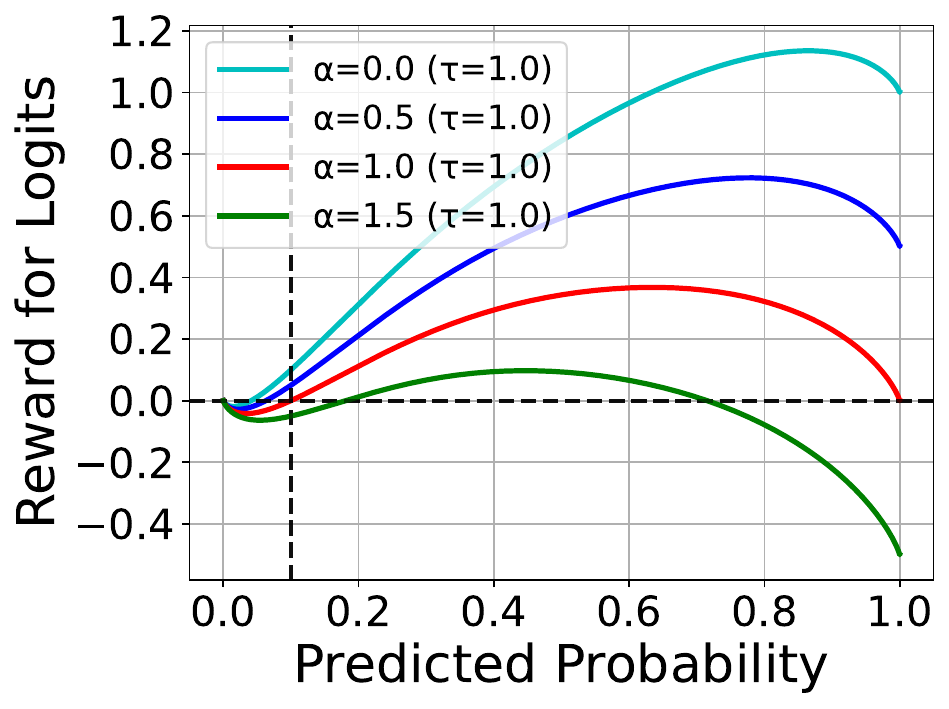}}
\end{minipage}
\hspace{1mm}
\begin{minipage}{0.32\columnwidth}
\centerline{\includegraphics[width=\columnwidth]{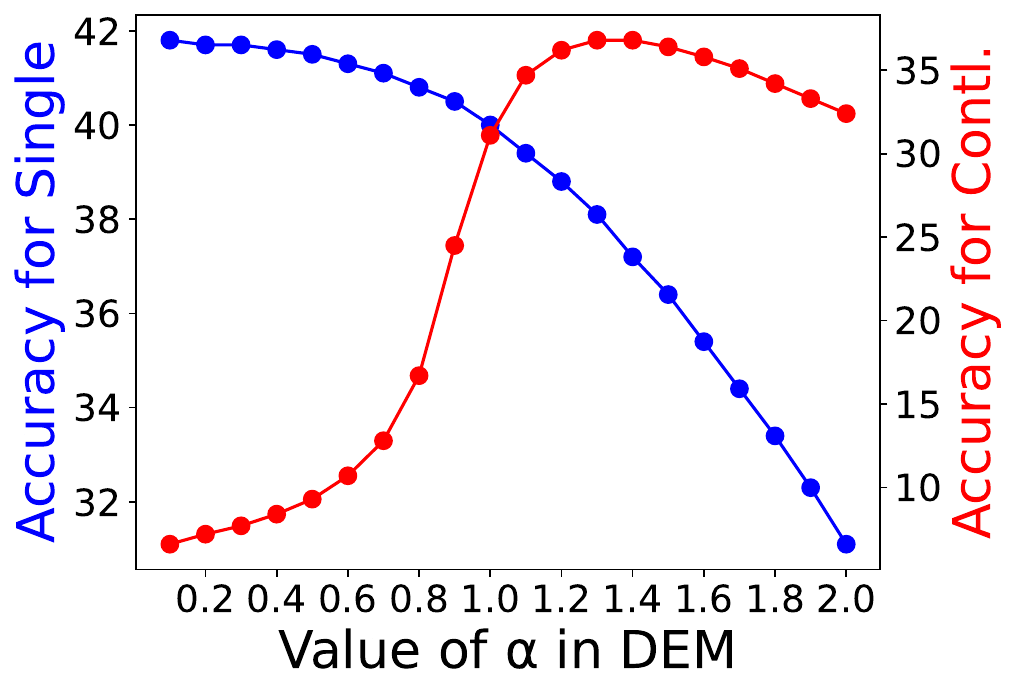}}
\end{minipage}
\hspace{1mm}
\begin{minipage}{0.32\columnwidth}
\centerline{\includegraphics[width=\columnwidth]{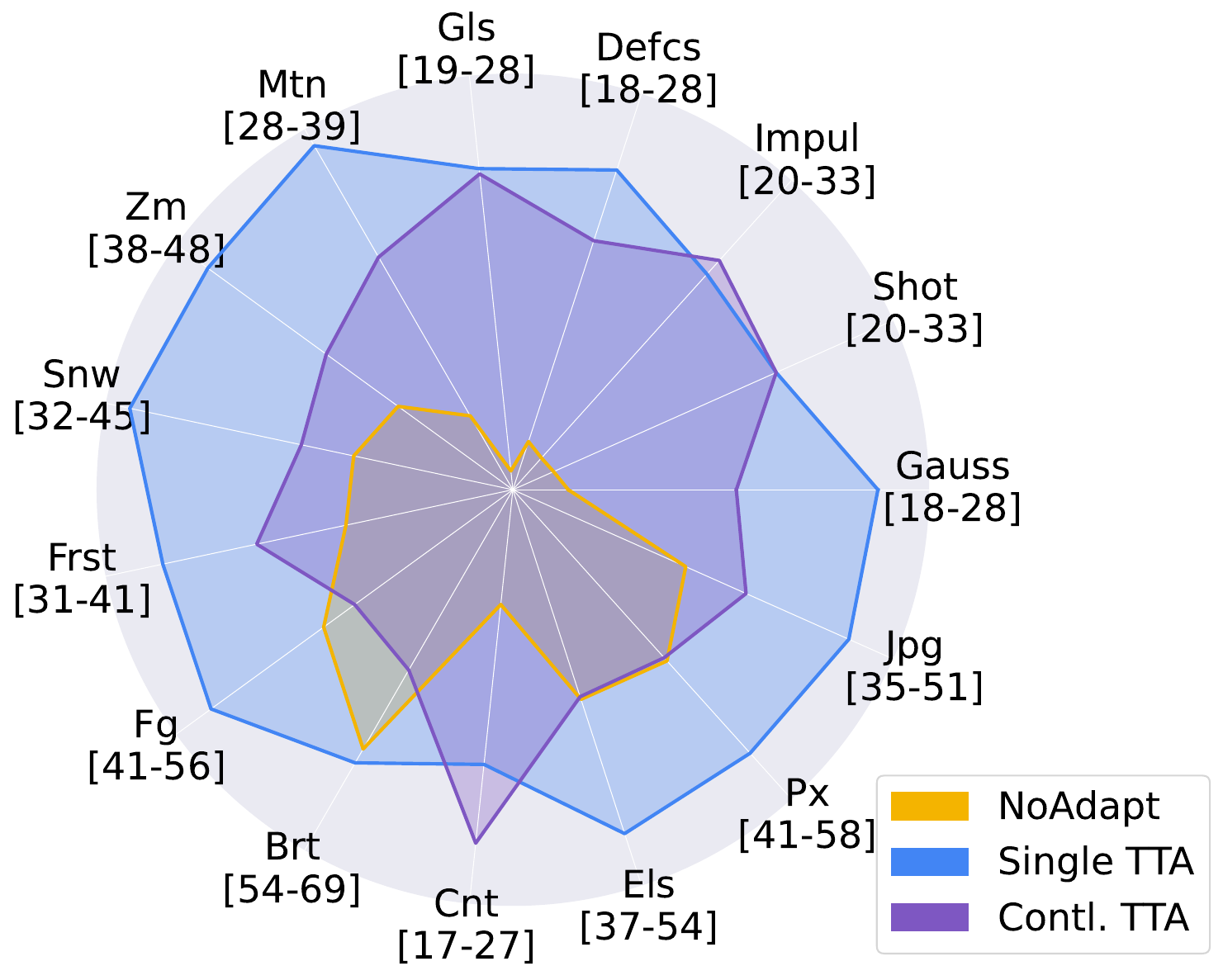}}
\end{minipage}
\end{center}
\vskip -0.13in
\caption{
(\textbf{Left}) Reward curves of DEM with varying $\alpha$ values for a $10$-way classification task.
(\textbf{Center}) Effects of different $\alpha$ values on static and dynamic target data distribution shifts of single-domain and continual tasks.
(\textbf{Right}) Detailed TTA results across $15$ target domains, using the optimal $\alpha=1.0$ for single-domain TTA tasks and $\alpha=1.3$ for continual TTA tasks. These $\alpha$ values are selected based on the ablation results in Fig.~\ref{abl_for_alpha} (Center). The definitions of "NoAdapt" and values in brackets are consistent with Fig.~\ref{abl for tau} (Right).
}
\label{abl_for_alpha}
\vspace{-2mm}
\end{figure}

\textbf{Role of CADF and GMC.}
To investigate the effects of CADF and GMC on EM,
we conduct ablation studies on test-time adaptation (TTA) tasks using Tent \cite{wang2020tent}.
Tent serves as an appropriate study subject since it solely employs the classical EM as the loss function (where conditional entropy is measured from streaming data) to mitigate DNNs' performance degradation during online testing.
Target data distribution shifts are categorized into \textit{static} single-domain tasks and \textit{dynamic} continual tasks.
We adopt ResNet50 \cite{he2016deep} and ViT-B/16 \cite{dosovitskiy2020image}, two widely used DNNs pre-trained on ImageNet-1K \cite{deng2009imagenet}, as source models for TTA starting points.
Other implementation details are provided in Sec.~\ref{experimental setups}.
The experimental results in Table~\ref{ablation studies} demonstrate that replacing the classical EM with CADF \textit{alone} significantly enhances EM performance,
yet it suffers from poor robustness, as exemplified by the single-domain TTA case with ViT-B/16.
To address these issues, we propose our improved solution.

\begin{figure*}[t]
\centering
\captionof{table}{
(\textbf{Left}) Ablation studies in single-domain and continual TTA tasks. DEM* searches optimal hyperparameters $(\tau*, \alpha*)$ on a subset of target data. $\Delta$ denotes the performance improvement relative to the baseline.
(\textbf{Right}) The sensitivity testing of learning rates demonstrates that AdaDEM has expanded 10x tolerance range compared to classical EM.}
\label{ablation studies}
\begin{minipage}{0.64\textwidth}
\vspace{-2mm}
\begin{center}
\begin{small}
\resizebox{1.0\linewidth}{!}{
\setlength{\tabcolsep}{2.65mm}{
\begin{tabular}{l|cc|cc}
\toprule
Methods & Single & $\Delta$ & Contl. & $\Delta$ \\
\midrule

\multicolumn{5}{c}{\textit{ResNet50}} \\
\midrule
NoAdapt     
& 31.5\tiny{±0.00} & -
& 31.5\tiny{±0.00} & - \\
EM (Tent)   
& 40.0\tiny{±0.03} & +0.0
& 31.2\tiny{±0.11} & +0.0 \\
CADF 
& 41.7\tiny{±0.05} & \textcolor{myDarkGreen}{+1.7}
& 36.1\tiny{±0.03} & \textcolor{myDarkGreen}{+4.9} \\
DEM* 
& 41.8\tiny{±0.05} & \textcolor{myDarkGreen}{+1.8}
& \textbf{39.0}\tiny{±0.02} & \textcolor{myDarkGreen}{+7.8} \\
AdaDEM-Norm (w/o MEC)
& 43.7\tiny{±0.10} & \textcolor{myDarkGreen}{+3.7}
& 37.5\tiny{±0.05} & \textcolor{myDarkGreen}{+6.3} \\
AdaDEM-MEC (w/o Norm)
& \underline{44.4}\tiny{±0.04} & \textcolor{myDarkGreen}{+4.4}
& 37.5\tiny{±0.04} & \textcolor{myDarkGreen}{+6.3} \\
AdaDEM (w/ MEC \& Norm)
& \textbf{44.8}\tiny{±0.05} & \textcolor{myDarkGreen}{+4.8}
& \underline{37.7}\tiny{±0.05} & \textcolor{myDarkGreen}{+6.5} \\
\midrule

\multicolumn{5}{c}{\textit{ViT-B/16}} \\
\midrule
NoAdapt 
& 38.8\tiny{±0.00} & -
& 38.8\tiny{±0.00} & - \\
EM (Tent) 
& 53.1\tiny{±0.65} & +0.0
& 58.6\tiny{±0.09} & +0.0 \\
CADF 
& 48.6\tiny{±0.42} & \textcolor{myDarkRed}{-4.5}
& 63.0\tiny{±0.01} & \textcolor{myDarkGreen}{+4.4} \\
DEM*  
& 56.0\tiny{±0.32} & \textcolor{myDarkGreen}{+2.9}
& \textbf{64.1}\tiny{±0.05} & \textcolor{myDarkGreen}{+5.5} \\
AdaDEM-Norm (w/o MEC)
& 54.0\tiny{±0.45} & \textcolor{myDarkGreen}{+0.9}
& 59.3\tiny{±0.04} & \textcolor{myDarkGreen}{+0.7} \\
AdaDEM-MEC (w/o Norm)
& \underline{58.1}\tiny{±0.23} & \textcolor{myDarkGreen}{+5.0}
& 62.4\tiny{±0.14} & \textcolor{myDarkGreen}{+3.8} \\
AdaDEM (w/ MEC \& Norm)
& \textbf{61.5}\tiny{±0.20} & \textcolor{myDarkGreen}{+8.4}
& \underline{63.2}\tiny{±0.16} & \textcolor{myDarkGreen}{+4.6} \\

\bottomrule
\end{tabular}}}
\end{small}
\end{center}
\end{minipage}
\hspace{4mm}
\begin{minipage}{0.28\textwidth}
\begin{center}
\begin{minipage}{\textwidth}
\centerline{\includegraphics[width=\columnwidth]{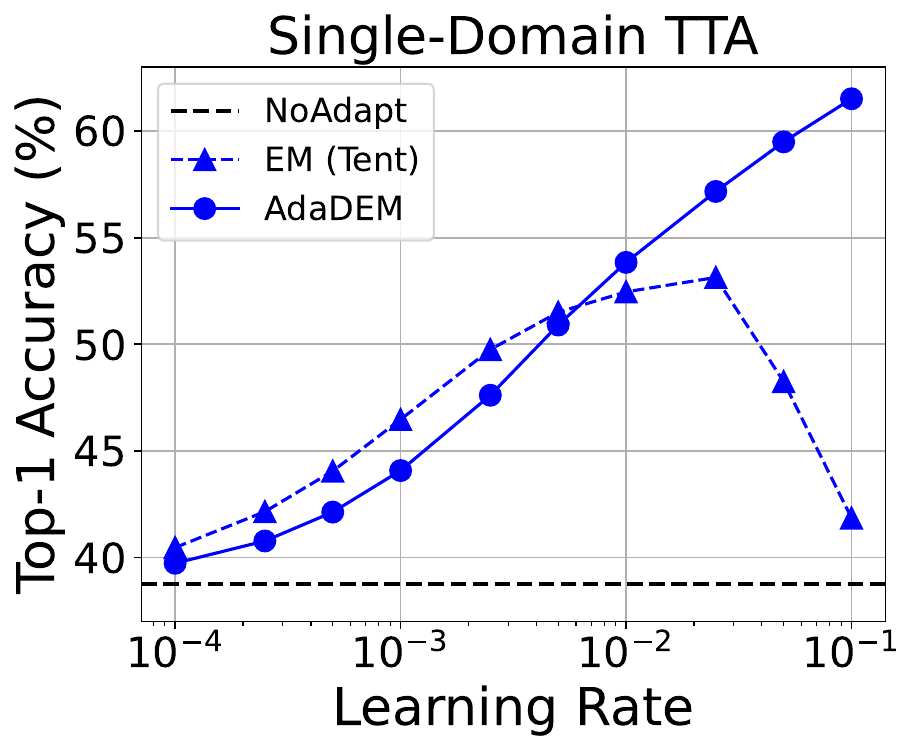}}
\end{minipage}
\begin{minipage}{\textwidth}
\centerline{\includegraphics[width=\columnwidth]{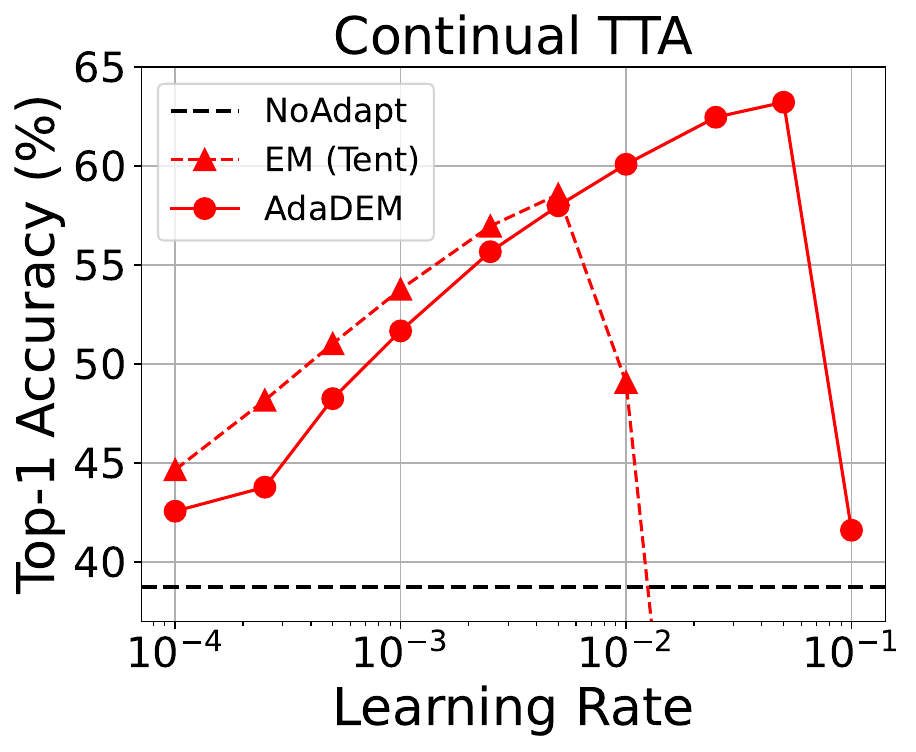}}
\end{minipage}
\end{center}
\end{minipage}
\vspace{-6mm}
\end{figure*}

\textbf{Reshaping the Reward Curve.}
We introduce a temperature $\tau$ in CADF to soften the probability $p_i$, thereby reshaping the reward curve. 
This operation is defined in Eq.~\eqref{Eq9}, with its partial derivative \textit{w.r.t.} logit $z_i$ given in Eq.~\eqref{Eq17} (refer to Appendix~\ref{theory and proof}). 
Based on Eq.~\eqref{Eq17}, the Fig.~\ref{abl for tau} (left) plots EM's reward curve across different $\tau$ values, which degenerates to classical EM when $\tau=1.0$.
Notably, the reward collapses to $0$ when predicted probability approaches $1.0$ or $1/C$.
Introduction of $\tau$ ensures high-probability predictions remain within the high-reward interval, thus mitigating reward collapse.
\begin{equation}
\label{Eq9}
    T_\tau(\mathbf{z}) = - \sum_{i=1}^C p_{\tau i}(\mathbf{z}) z_i, 
    \ \ \ \ \ \
    p_{\tau i}(\mathbf{z}) = \frac{e^{z_i / \tau}}{\sum_{j=1}^C e^{z_j / \tau}}.
\end{equation}
We search the optimal $\tau$ to maximize performance on target tasks,
and empirically demonstrate that the best $\tau$ value exhibits a positive correlation with the average predicted probability of source models on target data,
$\frac{1}{N}\sum_{i=1}^N \max_{1\le j \le C} p_{ij}$,
as shown in Fig.~\ref{abl for tau} (center).
Detailed results in Fig.~\ref{abl for tau} (right).

\textbf{Scaling the Influence of GMC.}
Note that $Q(\mathbf{z}) = \log \sum_{i=1}^C e^{z_i}$ takes the form of the log-sum-exp of the logits $z_i$, which provides a smooth approximation to the maximum of logits.
Therefore, we introduce a weight $\alpha$ to control the penalties imposed by minimizing GMC on the maximum logit,
preventing model overfitting caused by rapid growth in logit magnitudes, 
which is defined as
\begin{equation}
    Q_\alpha(\mathbf{z}) = \alpha \log \sum_{i=1}^C e^{z_i},
\end{equation}
thereby scaling the reward of $Q_{\alpha}(\mathbf{z})$ to $R_{Q_{\alpha}}=- \alpha p_i$.
We plot the reward curve of EM with varying $\alpha$ values, as shown in Fig.~\ref{abl_for_alpha} (left).
When $\alpha>1.0$, EM with $Q_{\alpha}(\mathbf{z})$ penalizes logits corresponding to predicted probabilities near $1.0$,
helping to reject erroneous highly confident predictions from poorly calibrated models \cite{rizve2021defense}.
The results in Fig.~\ref{abl_for_alpha} (center) demonstrate that slightly increasing the value of $\alpha$ can effectively mitigates model overfitting in noisy tasks and dynamic environments.

In summary, the \textbf{D}ecoupled \textbf{E}ntropy \textbf{M}inimization (DEM) is formulated as
\begin{equation}
\label{DEM}
    H(\mathbf{z}) = T_\tau(\mathbf{z}) + Q_\alpha(\mathbf{z}) = - \sum_{i=1}^C p_{\tau i}(\mathbf{z}) z_i + \alpha \log \sum_{i=1}^C e^{z_i}.
\end{equation}
To explore the performance upper bound of classical EM, 
we search for the optimal combination within the search space defined by $\tau$ and $\alpha$ using a fast TPE algorithm \cite{bergstra2011algorithms}.
In Proposition~\ref{proposition A1}, 
we theoretically prove that valid values of $\tau$ in DEM satisfy $0 < \tau \le 2 / \alpha$ where $\alpha > 0$.
DEM* searches the optimal hyperparameters $(\tau^*, \alpha^*)$ on a subset of target data.
Implementations in Sec.~\ref{experimental setups}.

\subsection{Adaptive Decoupled Entropy Minimization}
\label{adadem method}

Classical EM suffers from reward collapse and easy-class bias phenomena, demonstrated in Fig.~\ref{limitations of EM}.
DEM* enhances the contribution of high-certainty samples in the learning process by improving the CADF, while refining the GMC to penalize dominant and easy classes.
However, DEM* cannot fundamentally resolve these limitations,
as the dynamic interplay between model parameters updates and shifting data distributions causes the model's output to evolve over time.
While continuous optimization or designing evolution strategies for hyperparameters $\tau$ and $\alpha$ remains feasible,
such approaches incur substantial computational overhead,
which is a drawback effectively addressed by the next proposed 
\textbf{Ada}ptive \textbf{D}ecoupled \textbf{E}ntropy \textbf{M}inimization (AdaDEM).

We employ the L1-norm of the reward brought from CADF, \textit{i.e.}, $\delta = \|- \partial T(\mathbf{z} | x, \theta) / \partial \mathbf{z} \|_1$, to quantify the extent of changes in the model's output logits $\mathbf{z}=[z_i]_{i=1}^C$ before and after EM optimization, which is caused by updates to the model parameters $\theta$ induced by sample $x$.
We adopt $1/\delta$ to normalize rewards across different deterministic samples, providing a more fundamental and direct approach compared to using the reciprocal of conditional entropy as the normalization factor \cite{niu2022efficient,lee2024entropy}.
We have further verified that $\delta$ must be computed from the gradients of CADF, while using the gradients of the overall conditional entropy $H(\mathbf{z})$ would introduce penalties that compromise the accurate assessment of rewards brought by learning samples.
Refer to Appendix~\ref{sec: ablation study of delta} for details.

Most existing methods for addressing easy-class bias employ a label prior to guide EM,
typically by maximizing the marginal entropy to align the model's output distribution with a uniform label prior distribution \cite{liang2020we,hu2017learning}.
In contrast, we eliminate the label prior assumptions and propose a \textbf{M}arginal \textbf{E}ntropy \textbf{C}alibrator (MEC) to replace GMC.
The MEC dynamically estimates the marginal entropy during learning via an exponential moving average of $1/N_k \sum^{N_k} \mathfrak{p}_k$ where $\mathfrak{p}_k\in \mathbb{R}^C$ is a probability vector of the $k$-th class.
The estimated probability is dynamically updated as $\overline{\mathfrak{p}}_k^t = 0.9 \cdot \overline{\mathfrak{p}}_k^{t-1} + 0.1/N_k \cdot \sum^{N_k} \mathfrak{p}_k^t$ where $t$ denotes the iteration index and $\overline{\mathfrak{p}}_k^0 = [1/C]_{i=1}^C \in \mathbb{R}^C$.

In summary, the AdaDEM takes the form of
\begin{equation}
\label{adadem}
    H(\mathbf{z}) = - \frac{1}{\delta} \sum_{i=1}^C (p_i(\mathbf{z}) - \overline{\mathfrak{p}}_{k i}^t) z_i,
    \ \ \ 
    k=\arg\max_i p_i(\mathbf{z}).
\end{equation}
AdaDEM requires no tunable hyperparameters and achieves comparable or even superior performance to DEM*, as demonstrated in Table~\ref{ablation studies}.
Notably, AdaDEM also reduces the sensitivity of classical EM to learning rates, as shown in Table~\ref{ablation studies} (right).
Detailed analyses are provided in Appendix~\ref{appendix additional discussions}.

\vspace{-1mm}
\section{Experiments}
\label{experiment}

\begin{table*}[t]
\caption{Experiments on single-domain \& continual TTA tasks (left) and the test-time prompt tuning task (right). Top-1 classification accuracy (\%) is reported. We highlight the highest accuracy in \textbf{bold} and the second best as \underline{underline}. $\Delta$ denotes the performance improvement relative to the baselines.}
\label{main_paper_tta_results}
\vspace{-1.5mm}
\begin{center}
\begin{small}
\begin{minipage}{0.375\linewidth}
\resizebox{1.0\linewidth}{!}{
\setlength{\tabcolsep}{1.72mm}{
\begin{tabular}{l|cc|cc}
\toprule
\multirow{2}{*}{Methods} & \multicolumn{2}{c|}{Single-Domain} & \multicolumn{2}{c}{Continual} \\
& Mean & $\Delta$ & Mean & $\Delta$ \\
\midrule

NoAdapt 
& 38.8\tiny{±0.00} & -
& 38.8\tiny{±0.00} & - \\
\midrule

Tent$^\dag$ 
& 53.1\tiny{±0.65} & +0.0
& 58.6\tiny{±0.09} & +0.0 \\
\rowcolor{lightgray}
+ DEM*
& 56.0\tiny{±0.32} & \textcolor{myDarkGreen}{+2.9}
& 64.1\tiny{±0.05} & \textcolor{myDarkGreen}{+5.5} \\
\rowcolor{lightgray}
+ AdaDEM
& 61.5\tiny{±0.20} & \textcolor{myDarkGreen}{+8.4}
& 63.2\tiny{±0.16} & \textcolor{myDarkGreen}{+4.6} \\
\midrule

Tent 
& 52.7\tiny{±0.10} & +0.0
& 48.5\tiny{±0.71} & +0.0 \\
\rowcolor{lightgray}
+ DEM* 
& 55.1\tiny{±0.11} & \textcolor{myDarkGreen}{+2.4}
& 64.5\tiny{±0.14} & \textcolor{myDarkGreen}{+\textbf{16.0}} \\
\rowcolor{lightgray}
+ AdaDEM 
& 66.2\tiny{±0.12} & \textcolor{myDarkGreen}{+\textbf{13.5}}
& 64.4\tiny{±0.02} & \textcolor{myDarkGreen}{+\underline{15.9}} \\
\midrule

ETA 
& 65.1\tiny{±0.10} & +0.0
& 64.2\tiny{±0.04} & +0.0 \\
\rowcolor{lightgray}
+ DEM* 
& \underline{66.3}\tiny{±0.04} & \textcolor{myDarkGreen}{+1.2}
& 65.7\tiny{±0.04} & \textcolor{myDarkGreen}{+1.5} \\
\rowcolor{lightgray}
+ AdaDEM 
& \textbf{66.8}\tiny{±0.02} & \textcolor{myDarkGreen}{+1.7}
& 66.1\tiny{±0.01} & \textcolor{myDarkGreen}{+1.9} \\
\midrule

EATA 
& 62.2\tiny{±0.14} & +0.0
& 64.9\tiny{±0.08} & +0.0 \\
\rowcolor{lightgray}
+ DEM* 
& 64.4\tiny{±0.30} & \textcolor{myDarkGreen}{+2.2}
& \underline{66.2}\tiny{±0.07} & \textcolor{myDarkGreen}{+1.3}\\
\rowcolor{lightgray}
+ AdaDEM 
& 65.3\tiny{±0.11} & \textcolor{myDarkGreen}{+3.1}
& \textbf{66.4}\tiny{±0.04} & \textcolor{myDarkGreen}{+1.5} \\
\midrule

DeYO 
& 62.6\tiny{±0.32} & +0.0
& 57.6\tiny{±0.36} & +0.0 \\
\rowcolor{lightgray}
+ DEM*
& 65.6\tiny{±0.03} & \textcolor{myDarkGreen}{+3.0}
& 65.4\tiny{±0.12} & \textcolor{myDarkGreen}{+7.8} \\
\rowcolor{lightgray}
+ AdaDEM 
& 62.6\tiny{±0.10} & \textcolor{myDarkGreen}{+0.0}
& 59.0\tiny{±0.05} & \textcolor{myDarkGreen}{+1.4} \\
\midrule

SAR 
& 54.2\tiny{±0.07} & +0.0
& 57.0\tiny{±0.05} & +0.0 \\
\rowcolor{lightgray}
+ DEM* 
& 57.9\tiny{±0.04} & \textcolor{myDarkGreen}{+3.7}
& 62.4\tiny{±0.03} & \textcolor{myDarkGreen}{+5.4} \\
\rowcolor{lightgray}
+ AdaDEM 
& 65.7\tiny{±0.07} & \textcolor{myDarkGreen}{+\underline{11.5}}
& 63.0\tiny{±0.05} & \textcolor{myDarkGreen}{+6.0} \\

\bottomrule
\end{tabular}}}
\end{minipage}
\hfill
\begin{minipage}{0.615\linewidth}
\resizebox{1.0\linewidth}{!}{
\setlength{\tabcolsep}{1.3mm}{
\begin{tabular}{l|ccccc|cc}
\toprule
\multirow{2}{*}{Methods} & \multicolumn{5}{c|}{ImageNet} & \multirow{2}{*}{Avg.} & \multirow{2}{*}{$\Delta$} \\
& -1K & -A & -V2. & -R. & -S. &  \\
\midrule

\multicolumn{8}{c}{\textit{CLIP-RN50}} \\
\midrule

Zero-Shot 
& 58.2\tiny{±0.00} & 21.8\tiny{±0.00} & 51.4\tiny{±0.00} & 56.2\tiny{±0.00} & 33.4\tiny{±0.00} & 44.2\tiny{±0.00} & +0.0  \\
Ensemble
& 59.8\tiny{±0.00} & 23.2\tiny{±0.00} & 52.9\tiny{±0.00} & \textbf{60.7}\tiny{±0.00} & 35.5\tiny{±0.00} & 46.4\tiny{±0.00} & \textcolor{myDarkGreen}{+2.2} \\
TPT 
& 60.7\tiny{±0.07} & 26.1\tiny{±0.10} & 54.6\tiny{±0.02} & 58.9\tiny{±0.08} & 35.2\tiny{±0.09} & 47.1\tiny{±0.06} & \textcolor{myDarkGreen}{+2.9} \\
\rowcolor{lightgray}
+ DEM* 
& 61.3\tiny{±0.09} & 25.5\tiny{±0.07} & 55.0\tiny{±0.10} & \underline{59.7}\tiny{±0.12} & 35.6\tiny{±0.08} & 47.4\tiny{±0.04} & \textcolor{myDarkGreen}{+3.2} \\
\rowcolor{lightgray}
+ AdaDEM
& 60.7\tiny{±0.04} & \underline{29.2}\tiny{±0.19} & 54.8\tiny{±0.22} & 58.8\tiny{±0.05} & 35.4\tiny{±0.03} & 47.8\tiny{±0.07} & \textcolor{myDarkGreen}{+3.6} \\

\midrule
CoOp 
& 63.3\tiny{±0.00} & 23.1\tiny{±0.00} & 55.4\tiny{±0.00} & 56.6\tiny{±0.00} & 34.7\tiny{±0.00} & 46.6\tiny{±0.00} & \textcolor{myDarkGreen}{+2.4} \\
TPT (CoOp)
& \underline{65.4}\tiny{±0.06} & 28.9\tiny{±0.14} & \underline{58.2}\tiny{±0.10} & 59.0\tiny{±0.09} & \underline{36.3}\tiny{±0.15} & \underline{49.6}\tiny{±0.07} & \textcolor{myDarkGreen}{+\underline{5.4}} \\
\rowcolor{lightgray}
+ AdaDEM 
& \textbf{65.6}\tiny{±0.05} & \textbf{31.3}\tiny{±0.10} & \textbf{58.5}\tiny{±0.22} & 59.3\tiny{±0.10} & \textbf{36.3}\tiny{±0.11} & \textbf{50.2}\tiny{±0.06} & \textcolor{myDarkGreen}{+\textbf{6.0}} \\
\midrule

\multicolumn{8}{c}{\textit{CLIP-ViT-B/16}} \\
\midrule

Zero-Shot
& 66.7\tiny{±0.00} & 47.9\tiny{±0.00} & 60.9\tiny{±0.00} & 74.0\tiny{±0.00} & 46.1\tiny{±0.00} & 59.1\tiny{±0.00} & +0.0 \\
Ensemble
& 68.3\tiny{±0.00} & 49.9\tiny{±0.00} & 61.9\tiny{±0.00} & \underline{77.7}\tiny{±0.00} & 48.2\tiny{±0.00} & 61.2\tiny{±0.00} & \textcolor{myDarkGreen}{+2.1} \\
TPT 
& 69.0\tiny{±0.04} & 54.5\tiny{±0.09} & 63.4\tiny{±0.13} & 77.0\tiny{±0.06} & 48.0\tiny{±0.13} & 62.4\tiny{±0.05} & \textcolor{myDarkGreen}{+3.3} \\
\rowcolor{lightgray}
+ DEM* 
& 68.9\tiny{±0.03} & 54.8\tiny{±0.09} & 63.5\tiny{±0.11} & 77.1\tiny{±0.08} & 47.9\tiny{±0.06} & 62.5\tiny{±0.06} & \textcolor{myDarkGreen}{+3.4} \\
\rowcolor{lightgray}
+ AdaDEM 
& 69.4\tiny{±0.12} & \underline{58.8}\tiny{±0.18} & 64.0\tiny{±0.06} & 77.6\tiny{±0.21} & 48.6\tiny{±0.05} & 63.7\tiny{±0.05} & \textcolor{myDarkGreen}{+4.6} \\

\midrule
CoOp 
& 71.5\tiny{±0.00} & 49.7\tiny{±0.00} & 64.2\tiny{±0.00} & 75.2\tiny{±0.00} & 48.0\tiny{±0.00} & 61.7\tiny{±0.00} & \textcolor{myDarkGreen}{+2.6} \\
TPT (CoOp) 
& \underline{73.6}\tiny{±0.05} & 57.9\tiny{±0.12} & \underline{66.9}\tiny{±0.08} & 77.2\tiny{±0.04} & \underline{49.2}\tiny{±0.07} & \underline{64.9}\tiny{±0.06} & \textcolor{myDarkGreen}{+\underline{5.8}} \\
\rowcolor{lightgray}
+ AdaDEM
& \textbf{73.7}\tiny{±0.07} & \textbf{60.3}\tiny{±0.11} & \textbf{66.9}\tiny{±0.19} & \textbf{77.9}\tiny{±0.14} & \textbf{49.3}\tiny{±0.07} & \textbf{65.6}\tiny{±0.02} & \textcolor{myDarkGreen}{+\textbf{6.5}} \\

\bottomrule
\end{tabular}}}
\end{minipage}
\end{small}
\end{center}
\vspace{-4mm}
\end{table*}

\subsection{Setups}
\label{experimental setups}

\textbf{Benchmarks.}
We conduct experiments on four benchmarks, \textit{i.e.}, Test-Time Adaptation (TTA), Semi-Supervised Learning (SSL), Unsupervised Domain Adaptation (UDA), and Reinforcement Learning (RL) tasks. 
For TTA, we adopt ImageNet-C \cite{hendrycks2019benchmarking} containing $15$ types of image corruptions, as well as ImageNet \cite{deng2009imagenet} and its variants: -A \cite{hendrycks2021natural}, -V2. \cite{recht2019imagenet}, -R. \cite{hendrycks2021many}, and -S. \cite{wang2019learning} that represent natural distribution shifts. 
For SSL, we consider CIFAR-10, CIFAR-100 \cite{krizhevsky2009learning}, STL-10 \cite{coates2011analysis}, EuroSat \cite{helber2019eurosat}, TissueMNIST \cite{yang2023medmnist}, and Semi-Aves \cite{su2021semi}. 
Regarding synthetic-to-real UDA, we mainly experiment with the semantic segmentation task, using GTA5 dataset \cite{richter2016playing} as the source domain and Cityscapes dataset \cite{cordts2016cityscapes} as the target domain.
We employ Minigrid \cite{chevalier2024minigrid}, a series of discrete environments, to conduct RL experiments.
We also conduct experiments on class-imbalanced benchmarks of CIFAR-10-LT ($\rho=100$) and CIFAR-100-LT ($\rho$=10) \cite{krizhevsky2009learning}.
Refer to Appendix~\ref{experimental protocols} for detailed setups.

\begin{table*}[t]
\caption{Experiments on semi-supervised learning tasks. We report the average Top-1 classification accuracy (\%) under different numbers of labeled samples. We highlight the highest accuracy in \textbf{bold} and the second best as \underline{underline}. $\Delta$ denotes the performance improvement relative to the baselines.}
\label{main paper results on semi-supervised learning}
\vspace{-2mm}
\begin{center}
\begin{small}
\resizebox{1.0\linewidth}{!}{
\setlength{\tabcolsep}{2.5mm}{
\begin{tabular}{l|cc|cc|cc|cc|cc|cc}
\toprule
\multirow{2}{*}{Methods} & \multicolumn{2}{c|}{CIFAR-10} & \multicolumn{2}{c|}{CIFAR-100} & \multicolumn{2}{c|}{STL-10} & \multicolumn{2}{c|}{EuroSat} & \multicolumn{2}{c|}{TissueMNIST} & \multicolumn{2}{c}{Semi-Aves} \\
& Mean & $\Delta$ & Mean & $\Delta$ & Mean & $\Delta$ & Mean & $\Delta$ & Mean & $\Delta$ & Mean & $\Delta$ \\
\midrule

Ent. Min.  
& 96.4\tiny{±1.9} & +0.0
& 72.6\tiny{±0.5} & +0.0
& 82.4\tiny{±1.6} & +0.0
& 76.5\tiny{±3.6} & +0.0
& 47.3\tiny{±2.6} & +0.0
& 59.9\tiny{±0.7} & +0.0  \\
\rowcolor{lightgray}
+ AdaDEM 
& 97.2\tiny{±0.2} & \textcolor{myDarkGreen}{+\underline{0.8}}
& 75.8\tiny{±0.3} & \textcolor{myDarkGreen}{+\textbf{3.2}}
& 84.8\tiny{±0.3} & \textcolor{myDarkGreen}{+\textbf{2.4}}
& 83.7\tiny{±0.8} & \textcolor{myDarkGreen}{+\textbf{7.2}}
& 49.3\tiny{±1.3} & \textcolor{myDarkGreen}{+2.0}
& 61.0\tiny{±0.2} & \textcolor{myDarkGreen}{+\textbf{1.1}} \\
\midrule

Vat (w/ Ent. Min.) 
& 97.4\tiny{±2.0} & +0.0
& 77.5\tiny{±0.7} & +0.0
& 85.4\tiny{±0.9} & +0.0
& 86.6\tiny{±5.4} & +0.0
& 45.2\tiny{±4.8} & +0.0
& 61.0\tiny{±0.4} & +0.0  \\
\rowcolor{lightgray}
+ AdaDEM
& 98.5\tiny{±0.1} & \textcolor{myDarkGreen}{+\textbf{1.1}}
& 78.8\tiny{±0.7} & \textcolor{myDarkGreen}{+\underline{1.3}}
& 86.9\tiny{±0.1} & \textcolor{myDarkGreen}{+1.5}
& 91.2\tiny{±0.9} & \textcolor{myDarkGreen}{+4.6}
& 49.4\tiny{±0.6} & \textcolor{myDarkGreen}{+\textbf{4.2}}
& 61.8\tiny{±0.0} & \textcolor{myDarkGreen}{+\underline{0.8}} \\
\midrule

MixMatch
& 98.5\tiny{±0.3} & +0.0
& 73.8\tiny{±0.4} & +0.0
& 82.9\tiny{±2.1} & +0.0
& 79.3\tiny{±4.3} & +0.0
& 48.0\tiny{±1.7} & +0.0
& 62.6\tiny{±0.2} & +0.0  \\
\rowcolor{lightgray}
+ AdaDEM
& 98.3\tiny{±0.6} & \textcolor{myDarkRed}{-0.2}
& 74.8\tiny{±0.2} & \textcolor{myDarkGreen}{+1.0}
& 85.2\tiny{±0.1} & \textcolor{myDarkGreen}{+\underline{2.3}}
& 83.9\tiny{±2.4} & \textcolor{myDarkGreen}{+4.6}
& \textbf{50.0}\tiny{±0.9} & \textcolor{myDarkGreen}{+2.0}
& 62.5\tiny{±0.1} & \textcolor{myDarkRed}{-0.1} \\
\midrule

FixMatch
& 98.4\tiny{±0.7} & +0.0
& 79.3\tiny{±0.6} & +0.0
& 88.5\tiny{±1.2} & +0.0
& 91.9\tiny{±3.6} & +0.0
& 46.7\tiny{±3.3} & +0.0
& \textbf{68.2}\tiny{±0.3} & +0.0  \\
\rowcolor{lightgray}
+ AdaDEM 
& 98.7\tiny{±0.2} & \textcolor{myDarkGreen}{+0.3}
& 79.5\tiny{±0.9} & \textcolor{myDarkGreen}{+0.2}
& 87.9\tiny{±0.8} & \textcolor{myDarkRed}{-0.6}
& \textbf{96.9}\tiny{±0.7} & \textcolor{myDarkGreen}{+\underline{5.0}}
& \underline{49.9}\tiny{±1.3} & \textcolor{myDarkGreen}{+\underline{3.2}}
& \underline{67.9}\tiny{±0.1} & \textcolor{myDarkRed}{-0.3} \\
\midrule

FreeMatch 
& \underline{98.9}\tiny{±0.0} & +0.0
& \underline{82.6}\tiny{±0.1} & +0.0
& \underline{89.7}\tiny{±1.6} & +0.0
& 95.8\tiny{±1.0} & +0.0
& 45.6\tiny{±3.3} & +0.0
& 67.0\tiny{±0.3} & +0.0  \\
\rowcolor{lightgray}
+ AdaDEM
& \textbf{99.0}\tiny{±0.1} & \textcolor{myDarkGreen}{+0.1}
& \textbf{83.1}\tiny{±0.1} & \textcolor{myDarkGreen}{+0.5}
& \textbf{91.6}\tiny{±0.2} & \textcolor{myDarkGreen}{+1.9}
& \underline{95.9}\tiny{±0.1} & \textcolor{myDarkGreen}{+0.1}
& 47.9\tiny{±0.4} & \textcolor{myDarkGreen}{+2.3}
& 67.3\tiny{±0.2} & \textcolor{myDarkGreen}{+0.3} \\

\bottomrule
\end{tabular}}}
\end{small}
\end{center}
\vspace{-6mm}
\end{table*}

\begin{wraptable}{r}{0.5\textwidth}
\caption{Experiments on unsupervised domain adaptation task of semantic segmentation. Standard mIoU (\%) is reported. 
}
\label{main paper semantic segmantation}
\vspace{-2mm}
\begin{center}
\begin{small}
\resizebox{1.0\linewidth}{!}{
\setlength{\tabcolsep}{1.5mm}{
\begin{tabular}{c|cc|ccc}
\toprule

Methods & MinEnt &  \cellcolor{lightgray} + AdaDEM & AdvEnt &  \cellcolor{lightgray} + DEM* &  \cellcolor{lightgray} + AdaDEM \\
\midrule

mIoU & 41.6\tiny{±0.47} &  \cellcolor{lightgray} 42.7\tiny{±0.13} & 43.6\tiny{±0.19} &  \cellcolor{lightgray} \underline{44.6}\tiny{±0.32} &  \cellcolor{lightgray}  \textbf{44.9}\tiny{±0.17}  \\

$\Delta$ & +0.0 &  \cellcolor{lightgray} \textcolor{myDarkGreen}{+\underline{1.1}} & +0.0 &  \cellcolor{lightgray} \textcolor{myDarkGreen}{+1.0} &  \cellcolor{lightgray} \textcolor{myDarkGreen}{+\textbf{1.3}} \\

\bottomrule
\end{tabular}}}
\end{small}
\end{center}
\vspace{-2mm}
\end{wraptable}

\textbf{Implementations.}
We consider the proposed DEM* and AdaDEM as alternatives to the classical EM. 
Therefore, we replace EM in the objective function of existing methods with our implementations of DEM* or AdaDEM. 
For instance, we substitute the loss function of Tent \cite{wang2020tent} with Eq.~\eqref{DEM} or \ref{adadem}.
DEM* employs a fast TPE algorithm \cite{bergstra2011algorithms} to search for the best hyperparameters $(\tau^*, \alpha^*)$.
Proposition~\ref{proposition A1} helps to reduce the search scope.
We use a subset comprising $\sim 20\%$ of test data with ground-truth labels for TPE, which is \textit{only} applied for DEM* to explore the upper bound performance of classical EM.
Note that AdaDEM requires neither test data labels nor hyperparameter tuning, strictly adhering to the unsupervised learning paradigm.
We mainly utilize ResNet \cite{he2016deep}, ViT \cite{dosovitskiy2020image}, CLIP Model \cite{radford2021learning}, Deeplab-V2 \cite{chen2017deeplab}, MLP and other DNNs as the research subjects. 
Meanwhile, for TTA, we adopt Tent \cite{wang2020tent}, ETA \cite{niu2022efficient}, EATA \cite{niu2022efficient}, DeYO \cite{lee2024entropy}, SAR \cite{niu2023towards}, TPT \cite{shu2022test}; for SSL, we employ Ent. Min. \cite{grandvalet2004semi}, VAT \cite{miyato2018virtual}, MixMatch \cite{berthelot2019mixmatch}, FixMatch \cite{sohn2020fixmatch}, FreeMatch \cite{wang2022freematch}; for UDA, we use MinEnt \& AdvEnt \cite{vu2019advent}; and for RL, we take PPO \cite{schulman2017proximal} as baselines.
We follow the implementations and hyperparameter setups of original methods unless otherwise specified.
Uniformly, we report Top-1 accuracy for classification tasks, mIoU \cite{everingham2015pascal} for semantic segmentation tasks, and average return for RL tasks.
All experiments are running in $3$ seeds (\textit{e.g.,} $\{1, 2, 3\}$) which control the orders of training/testing samples and the initial conditions of algorithms. Performance metrics are reported as "mean ± std".
Refer to Appendix~\ref{method implementations} for detailed implementations.

\subsection{Main Results}

In the following, we present the experimental results in various tasks.
Details in Appendix~\ref{more experimental results}.

\textbf{Test-Time Adaptation.}
As shown in Table~\ref{main_paper_tta_results},
ETA, EATA, SAR, and DeYO are based on sample selection and assign larger weights to higher certainty test samples for mitigating the reward collapse. 
The application of DEM* and AdaDEM improves the original performance by addressing the potential limitations of the classical EM, which also enhances the prior-based EATA and SAR.
For test-time prompt tuning task based on episodic TTA, 
DEM* and AdaDEM provide a solution suitable for the aggressive optimization strategy with large learning rate and multiple adaptation steps, 
which brings a significant improvement in unsupervised prediction to CLIP Models and TPT.

\textbf{Semi-Supervised Learning.}
Entropy Minimization is widely used in SSL to facilitate low-density separation between classes. 
We set different numbers of available labeled samples, \textit{i.e.}, $N_l$ per class, for the training of DNNs. 
Specifically, we set $N_l=4/25/400$ for CIFAR-10, $N_l=2/4/25$ for CIFAR-100, $N_l=4/10$ for STL-10, $N_l=2/4$ for EuroSat, $N_l=10/50$ for TissueMNIST, and $N_l=15\sim53$ for Semi-Aves. 
In Table~\ref{main paper results on semi-supervised learning}, we report the average classification accuracy under different settings.
We replace the classical EM in Ent. Min. and VAT (w/ Ent. Min.) with AdaDEM, and introduce AdaDEM as part of the loss functions in MixMatch, FixMatch, and FreeMatch, which do not use EM.
We also validate on class-imbalanced benchmarks for SSL, as shown in Fig.~\ref{imb_datasets},
and provide an imbalance study on highly skewed data with varying degrees of imbalance severity, refer to Appendix~\ref{sec:imbalance study}.
Experimental results show that AdaDEM outperforms the classical EM. 
Meanwhile, incorporating AdaDEM into SSL methods effectively improves the performance of original algorithms.

\textbf{Unsupervised Domain Adaptation in Semantic Segmentation.}
Beyond image classification, we also verify the effectiveness of DEM* and AdaDEM in the semantic segmentation task. 
As shown in Table~\ref{main paper semantic segmantation}, compared with the classical EM, AdaDEM can better bridge the domain gap between the source and target distributions, thus achieving a higher mIoU on the target data. 
For detailed results, refer to Table~\ref{appendix_table_unsupervised_domain_adaptation}. 
DEM* and AdaDEM can improve the accuracy of tail classes and prevent the model from being inclined to predict dominant classes. 
We also visualize the segmentation results and pixel entropy in Fig.~\ref{appendix_fig_visualization_segmentation}. 
AdaDEM reduces the prediction entropy of pixels more effectively, resulting in clearer object boundaries of out-of-distribution data.

\begin{figure*}[t]
\begin{center}
\begin{minipage}{0.38\columnwidth}
\centerline{\includegraphics[width=\columnwidth]{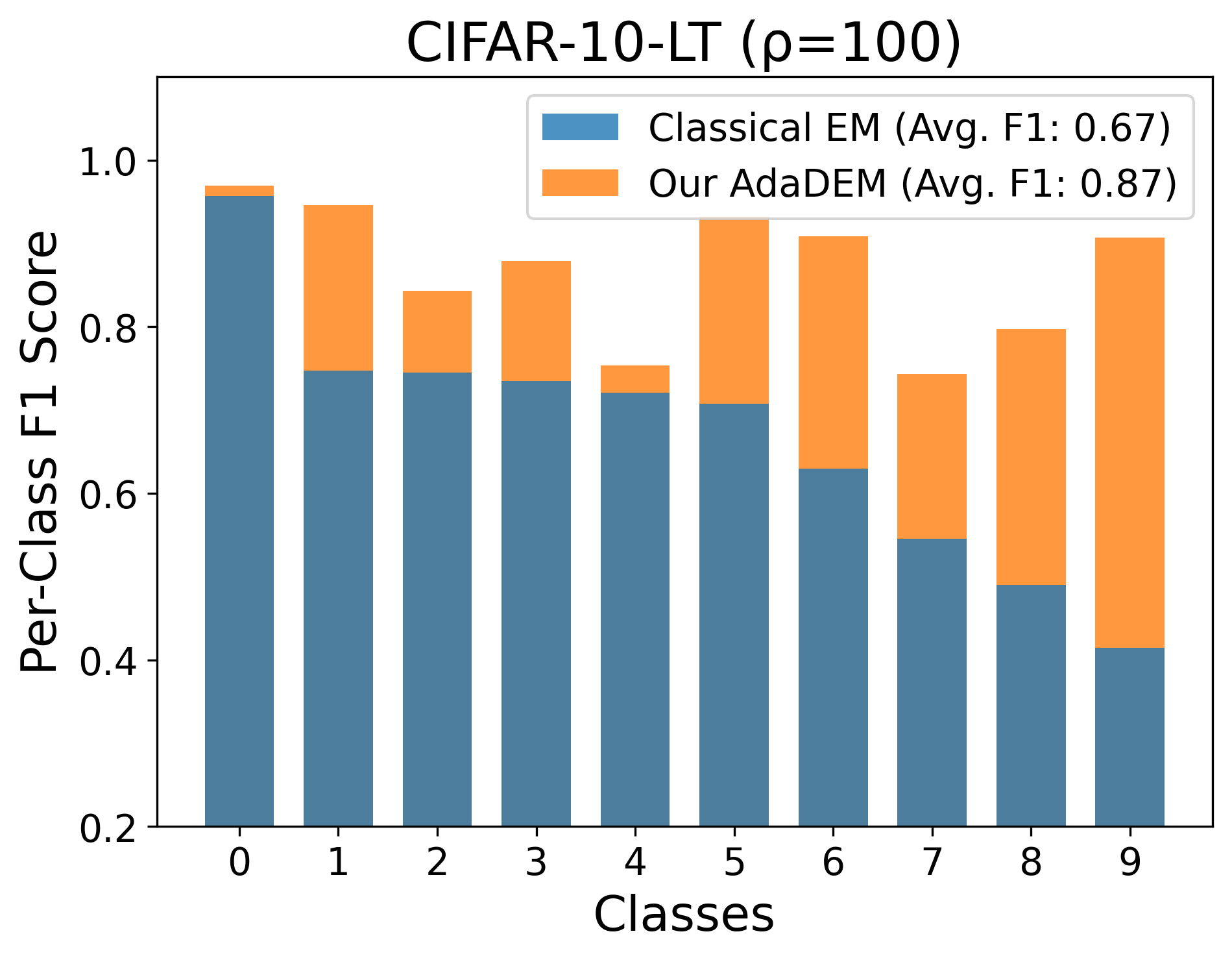}}
\end{minipage}
\hfill
\begin{minipage}{0.61\columnwidth}
\centerline{\includegraphics[width=\columnwidth]{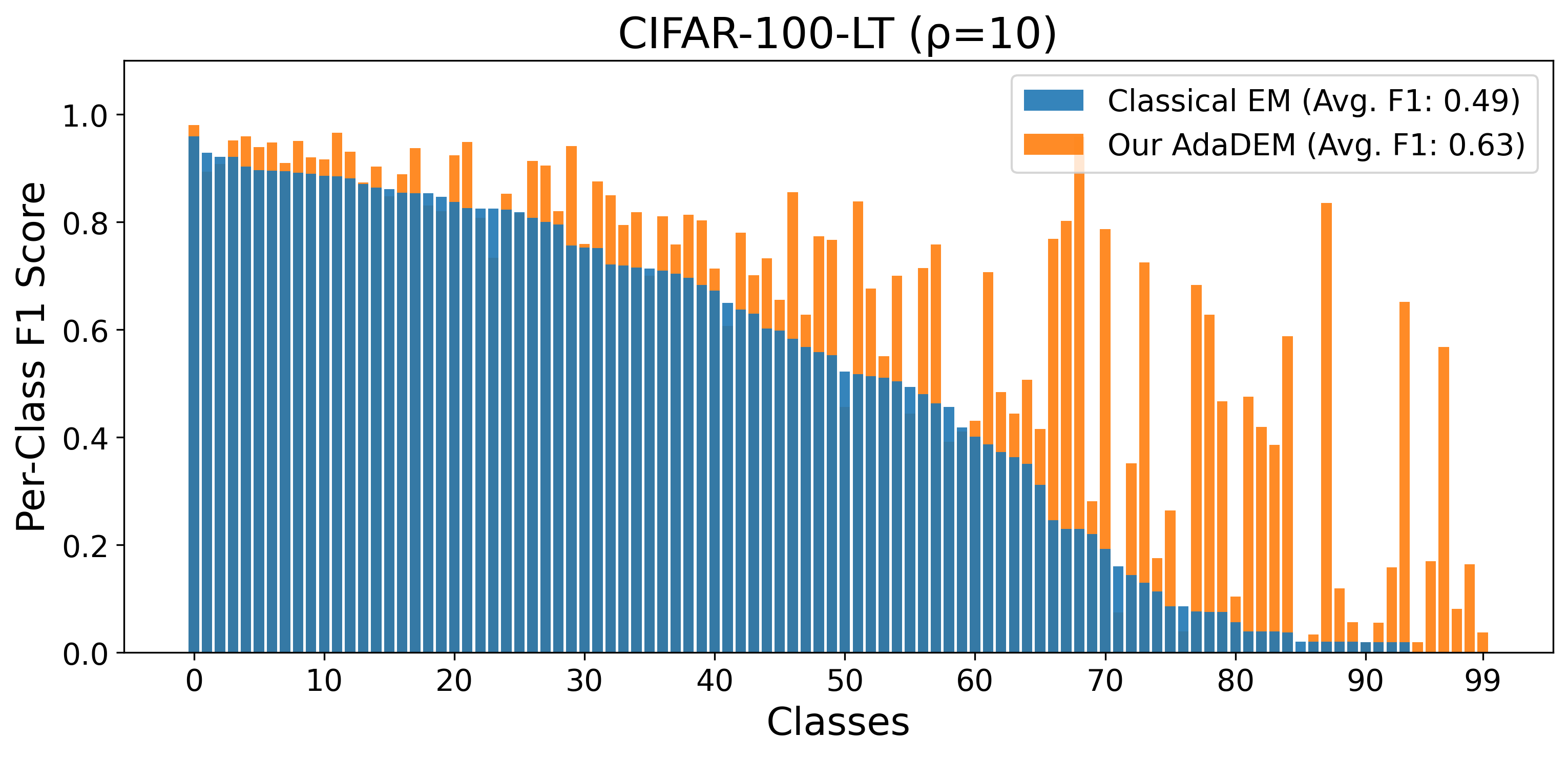}}
\end{minipage}
\end{center}
\vskip -0.18in
\caption{Experiments on class-imbalanced benchmarks. $\rho$ denotes the sample ratio between the most and least populous classes. Both per-class F1 scores and the average F1 score are reported.}
\label{imb_datasets}
\vspace{-3mm}
\end{figure*}

\begin{figure*}[t]
\vspace{1mm}
\begin{center}
\begin{minipage}{0.2\columnwidth}
\centerline{\includegraphics[width=\columnwidth]{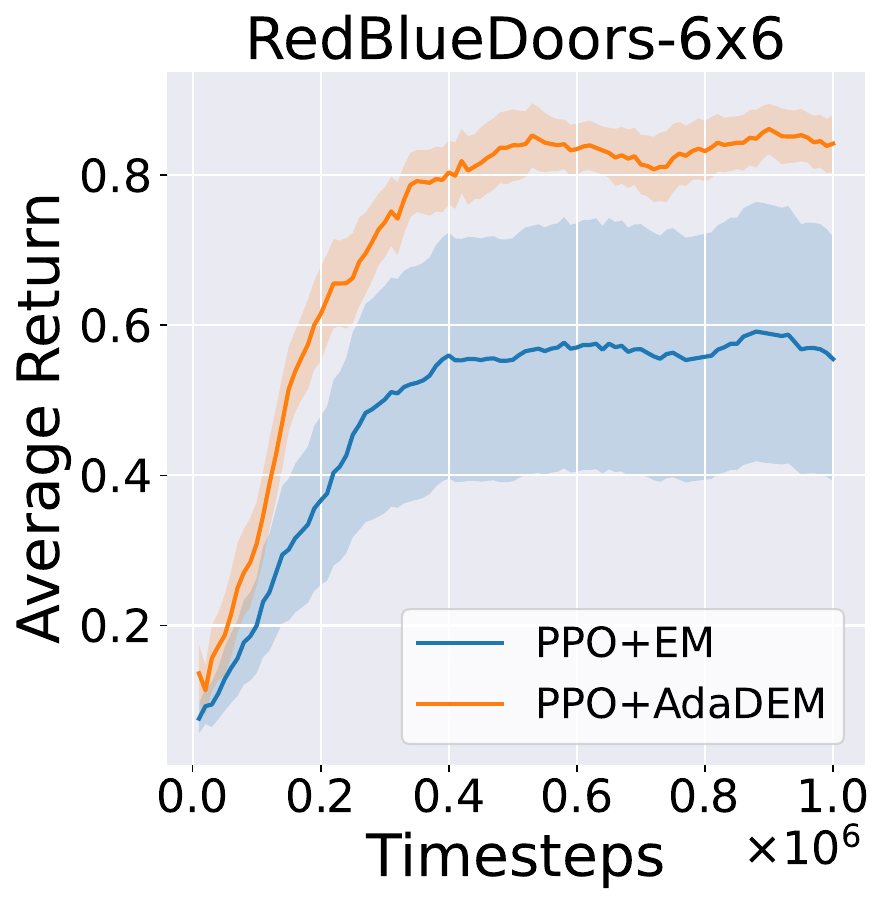}}
\end{minipage}
\hspace{-2mm}
\begin{minipage}{0.2\columnwidth}
\centerline{\includegraphics[width=\columnwidth]{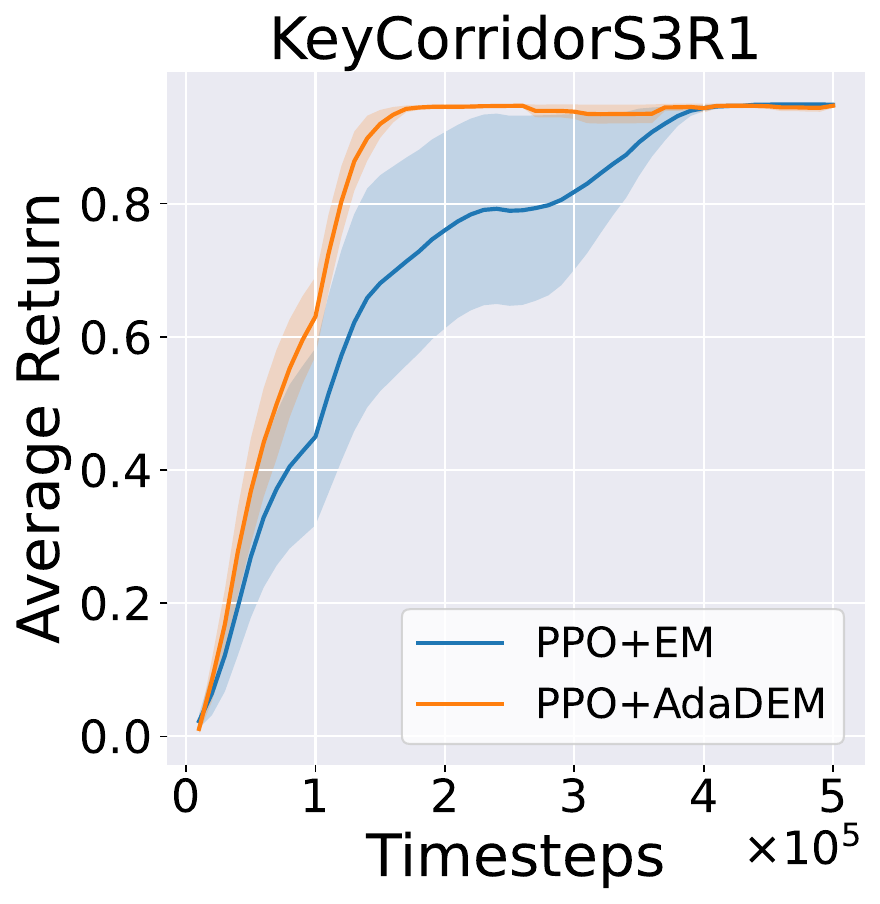}}
\end{minipage}
\hspace{-2mm}
\begin{minipage}{0.2\columnwidth}
\centerline{\includegraphics[width=\columnwidth]{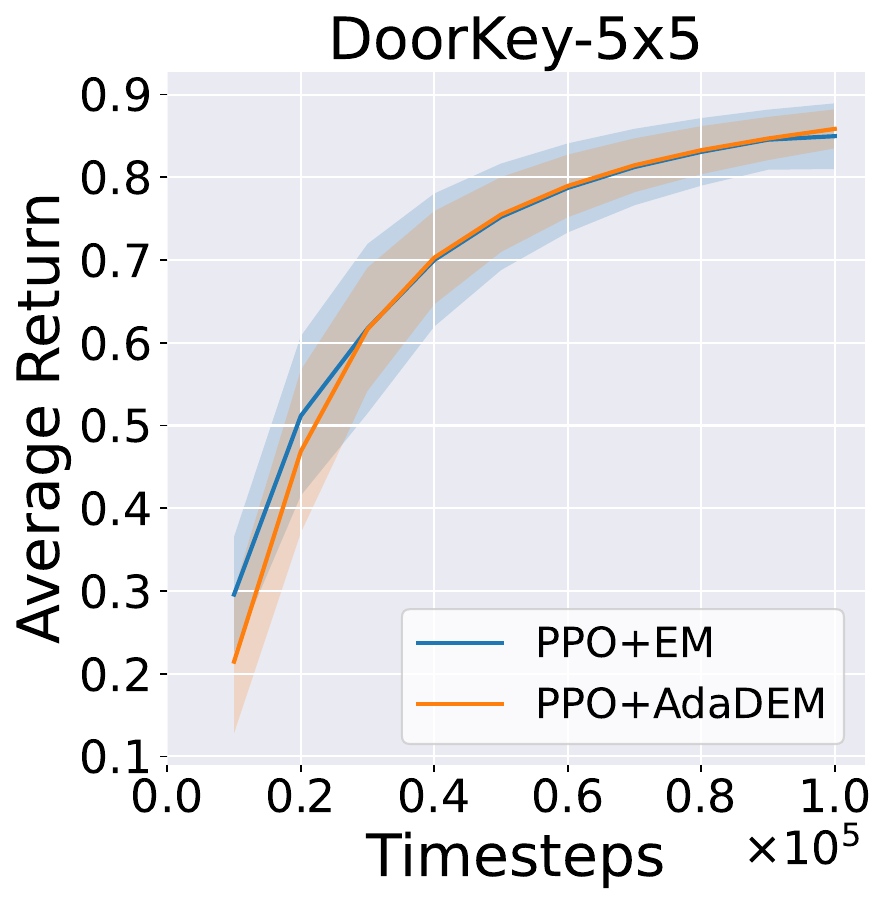}}
\end{minipage}
\hspace{-2mm}
\begin{minipage}{0.2\columnwidth}
\centerline{\includegraphics[width=\columnwidth]{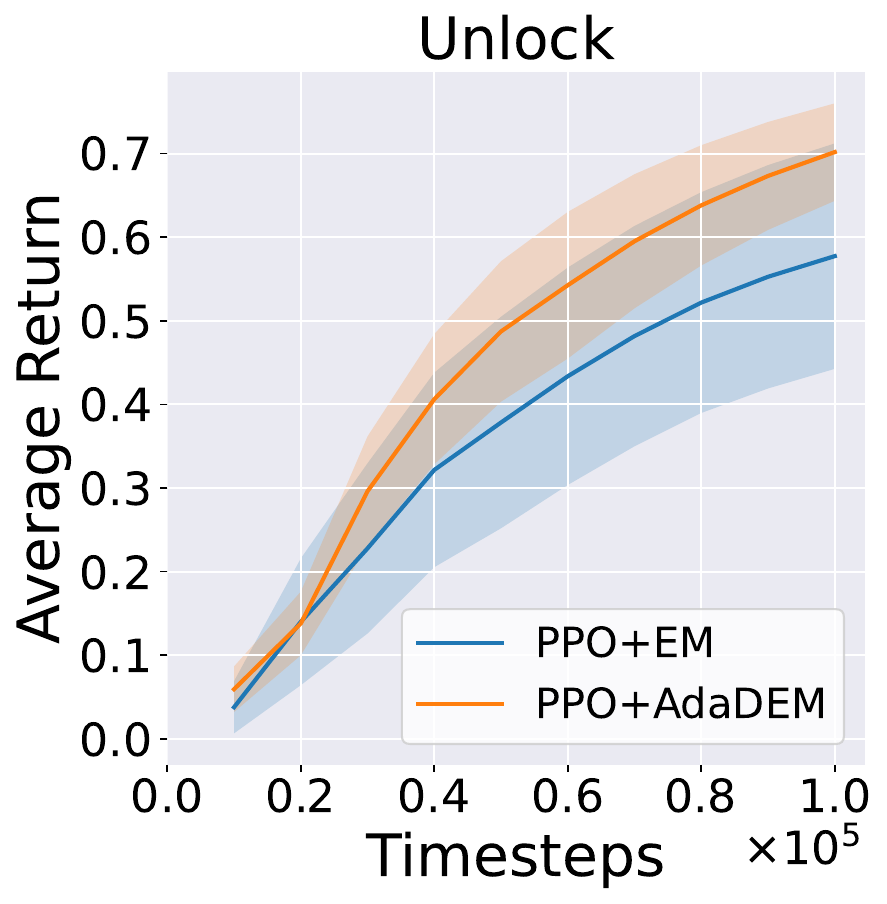}}
\end{minipage}
\hspace{-2mm}
\begin{minipage}{0.2\columnwidth}
\centerline{\includegraphics[width=\columnwidth]{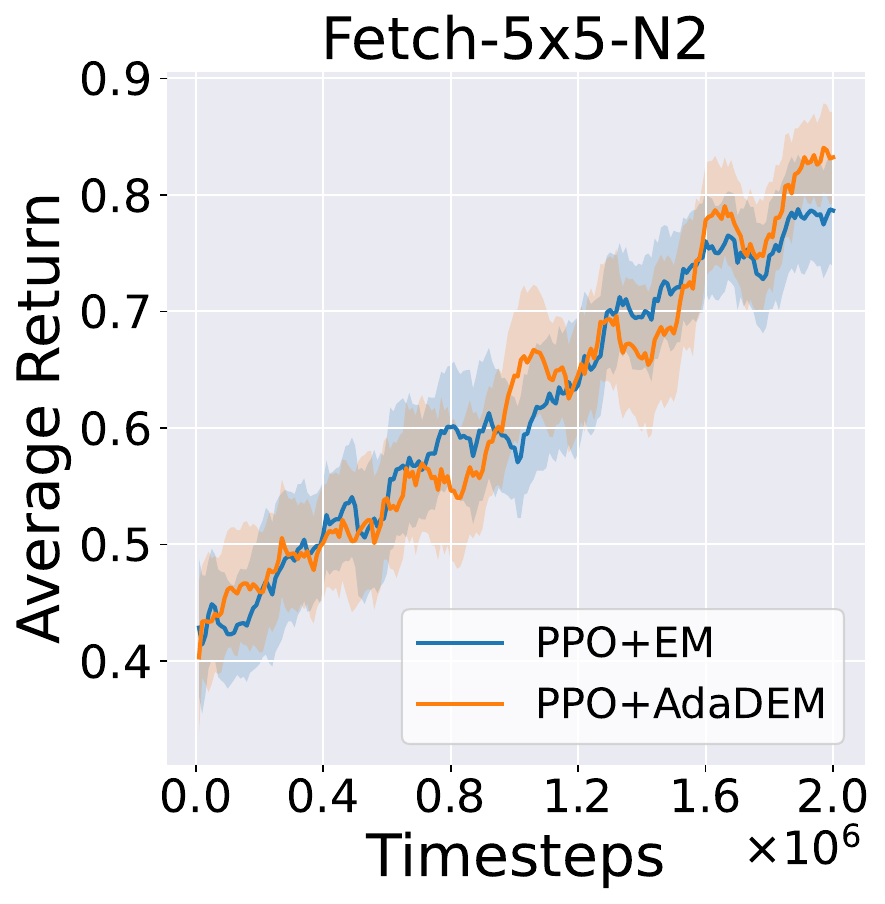}}
\end{minipage}
\end{center}
\vskip -0.15in
\caption{Experimental results on Reinforcement Learning tasks in Minigrid environments.}
\label{main_rl_exps}
\vspace{-5mm}
\end{figure*}

\textbf{Reinforcement Learning.}
Entropy \textit{Maximization} is beneficial for encouraging agents' exploration in RL tasks. 
To demonstrate that our AdaDEM is not confined to the minimization strategy, we compare the effects of applying the classical EM and AdaDEM to PPO while keeping entropy maximization strategy unchanged. 
The experimental results, as depicted in Fig.~\ref{main_rl_exps}, indicate that AdaDEM can achieve a higher or comparable average return.

\subsection{Ablation Studies}

We report the optimal hyperparameters $(\tau^*,\alpha^*)$ searched for DEM* in different methods and tasks in Table~\ref{appendix hyperparameters of DEM*}.
For additional discussions, refer to Appendix~\ref{appendix additional discussions} for details.

\textbf{Effect of Temperature $\tau$ in DEM*.}
As shown in Fig.~\ref{abl for tau},
increasing the value of $\tau$ promotes DNNs to learn high-certainty samples by reshaping the reward curve.
The best $\tau$ for a target data distribution is positively correlated with the average prediction probability of DNNs for target samples.

\textbf{Effect of Weight $\alpha$ in DEM*.}
As shown in Fig.~\ref{abl_for_alpha} (left),
GMC employs $\alpha$ to scale the penalty applied to logits, reducing the interference of over-confident samples for poorly calibrated DNNs.
Decreasing the value of $\alpha$ is suitable for aggressive optimization strategies, while increasing $\alpha$ is suitable for alleviating the catastrophic forgetting caused by model overfitting, as shown in Fig.~\ref{abl_for_alpha} (center).

\textbf{Sensitivity of AdaDEM to Optimizers and Learning Rates.}
We compare the impacts of different optimizers and learning rates on classical EM and AdaDEM. 
As shown in Table~\mbox{\ref{ablation studies}}, AdaDEM effectively reduces EM's sensitivity to the learning rate. 
Meanwhile, Table~\mbox{\ref{main_paper_tta_results}} demonstrates performance gains when applying AdaDEM to various optimizers including vanilla SGD (Tent$^{\dag}$), Momentum (ETA, EATA, DeYO), Adam (Tent), and SAM (SAR). 
The results indicate that AdaDEM maintains compatibility with SGD or optimizers utilizing first-/second-order momentum estimation.

\textbf{Robustness of AdaDEM.}
As shown in Table~\ref{main_paper_tta_results} and Fig.~\ref{imb_datasets}, source models' predictions on target data are significant noisy due to severe distribution shifts. Continual TTA tasks are evaluated under dynamically changing target data distributions.
The results across these tasks demonstrate that AdaDEM exhibits robustness to noisy/imbalanced tasks and dynamic/non-stationary environments.

\section{Conclusion}
\label{conclusion}

This paper provides an insightful view to study EM by reformulating and decoupling it into CADF and GMC with opposite effects.
We reveal the limitations of classical EM caused by its highly coupled formulation: reward collapse and easy-class bias phenomena.
We propose AdaDEM to overcome these limitations, which outperforms DEM*, an upper-bound variant of classical EM, and achieves superior performance across various machine learning tasks.

\textbf{Limitations.}
DEM* and AdaDEM overcome the limitations of classical EM only from the perspective of objective functions.
Although improving performance,
they cannot completely solve all the problems of self-supervised learning. 
Additional targeted techniques and designs are required.

\section*{Acknowledgements}

This work was supported by the HUST Interdisciplinary Research Support Program (2025JCYJ077), the 2026 Optics-Valley Excellence Project funded by the National Graduate College for Elite Engineers of HUST, the project of Peng Cheng Lab (PCL2025AS214), the Natural Science Fund of Hubei Province (2022CFB823), and the School of Computer Science and Technology, School of Artificial Intelligence and Automation,  Hoprcroft Center for Computing Science, and AI Institute, all at HUST.

{
    \small
    \bibliographystyle{IEEEtran}
    \bibliography{main}
}


\newpage
\appendix
\onecolumn
\begin{leftline}
	{
		\LARGE{\textsc{Appendix}}
	}
\end{leftline}

\etocdepthtag.toc{mtappendix}
\etocsettagdepth{mtchapter}{none}
\etocsettagdepth{mtappendix}{subsection}

{
    \hypersetup{linkcolor=black}
    \tableofcontents
}

\newpage 

\section{More Design Details of DEM and AdaDEM}

\subsection{Pseudo Code}

\begin{algorithm}[htbp!]
\caption{Pseudo code of DEM and AdaDEM in the PyTorch-like style}
\footnotesize
\begin{alltt}
\color{ForestGreen}
# logits: the output logits of the model, in the shape of (N, C)
# tau: hyper-paramerter for CADF in DEM
# alpha: hyper-parameter for GMC in DEM
# avg_pred: the averaged predicted probabilities for C classes, initialized as None
# reset: a flag for resetting the MEC in AdaDEM, in bool type
\color{Black}
import torch
import torch.nn.functional as F
\end{alltt}
\vspace{2mm}
\begin{alltt}

\color{ForestGreen}# Decoupled Entropy Minimization (DEM) \color{Black}
p_tau = F.softmax(logits / tau, dim=1)

\color{ForestGreen}# CADF \color{Black}
cadf = - (p_tau * logits).sum(dim=1)

\color{ForestGreen}# GMC \color{Black}
gmc = alpha * torch.logsumexp(logits, dim=1)

\color{ForestGreen}# Total loss of DEM \color{Black}
dem_loss = cadf + gmc
\end{alltt}
\vspace{2mm}
\begin{alltt}

\color{ForestGreen}# Adaptive Decoupled Entropy Minimization (AdaDEM) \color{Black}
p = F.softmax(logits, dim=1)
pseudo_label = p.argmax(dim=1)

\color{ForestGreen}# Initialize avg_pred to a (C, C) tensor of 1/C \color{Black}
if reset is True or avg_pred is None:
    C = torch.tensor(p.size(1), device=p.device)
    avg_pred = torch.ones((C, C), device=p.device) * 1.0 / C

with torch.no_grad():

    \color{ForestGreen}# Update avg_pred for MEC \color{Black}
    for label in torch.unique(pseudo_label):
        avg_pred[label] = 0.9 * avg_pred[label] + 
                          0.1 * p[pseudo_label == label].mean(dim=0)

    \color{ForestGreen}# Calculate the L-1 norm of the rewards of CADF \color{Black}
    T = - (p * logits).sum(dim=1, keepdim=True)
    grad = (logits + T + 1) * p
    delta = grad.abs().sum(dim=1, keepdim=True)

\color{ForestGreen}# Original EM
\color{ForestGreen}# p = p - p.detach()
\color{ForestGreen}# AdaDEM-Norm
\color{ForestGreen}# p = (p - p.detach()) / delta
\color{ForestGreen}# AdaDEM-MEC
\color{ForestGreen}# p = p - avg_pred[pseudo_label] 
\color{ForestGreen}# AdaDEM \color{Black}
p = (p - avg_pred[pseudo_label]) / delta

\color{ForestGreen}# Total loss of AdaDEM \color{Black}
adadem_loss = - (p * logits).sum(dim=1)
\end{alltt}
\end{algorithm}

\subsection{Theory and Proof}
\label{theory and proof}

Firstly, we present the detailed derivation process of reformulating and decoupling the conditional entropy for Eq.~\eqref{decouple EM}.
Given the logit vector $\mathbf{z}=[z_1,...z_i,...,z_C] \in \mathbb{R}^{C}$ predicted by the model $f(\theta)$ for sample $x$ and the probability vector $\mathbf{p}=[p_1,...,p_i,...,p_C] \in \mathbb{R}^C$ obtained by applying the Softmax function $\sigma(\cdot)$.
The conditional entropy can be rewritten as
\begin{equation}
\begin{split}
    H(\mathbf{z}) & =  - \sum_{i=1}^C p_i \log p_i 
          =  - \sum_{i=1}^C p_i \log \frac{e^{z_i}}{\sum_{j=1}^C e^{z_j}} \\
         & =  - \left(
         \sum_{i=1}^C p_i z_i - \sum_{i=1}^C p_i \log \sum_{j=1}^C e^{z_j}
         \right) \\
         & =  - \left( 
         \sum_{i=1}^C p_i z_i - \log \sum_{j=1}^C e^{z_j} \times \sum_{i=1}^C p_i
         \right) \\
         & =  - \left( 
         \sum_{i=1}^C p_i z_i - \log \sum_{i=1}^C e^{z_i}
         \right) \\
         & =  \underbrace{- \sum_{i=1}^C p_i z_i}_{CADF} + \underbrace{\log \sum_{i=1}^C e^{z_i}}_{GMC},
\end{split}
\end{equation}
where $p_i = e^{z_i} / \sum_{j=1}^C e^{z_j}$ and $\sum_{i=1}^C p_i = 1$.
Note that $\log \sum_{j=1}^C e^{z_j}$ is independent of $i$.

Next, we derive the partial derivatives of the two independent parts in Decoupled Entropy Minimization (DEM) with respect to the logit $z_i$ for Eq.~\eqref{partial Q and T by z_i}, respectively.
According to the Softmax function, the partial derivatives of $p_i$ with respect to $z_i$ and $z_j$ are
\begin{equation}
    \frac{\partial p_i}{\partial z_i} 
    = \frac{e^{z_i} \sum_{j=1}^C e^{z_j}-e^{2z_i}}{(\sum_{j=1}^C e^{z_j})^2}
    = p_i - p_i^2,
    \ \ \ \  
    \text{and}
    \ \ \ \ 
    \frac{\partial p_i}{\partial z_j}
    = - \frac{e^{z_i}e^{z_j}}{(\sum_{j=1}^C e^{z_j})^2}
    = - p_i p_j.
\end{equation}

Let $T = - \sum_{i=1}^C p_i z_i$ denote the Cluster Aggregation Driving Factor (CADF), 
and let $Q = \log \sum_{i=1}^C e^{z_i}$ denote the Gradient Mitigation Calibrator (GMC).
The partial derivatives of $T$ and $Q$ with respect to $z_i$ are calculated as follows,
\begin{equation}
\label{detailed partial T by z_i}
\begin{split}
    \frac{\partial T}{\partial z_i} 
    & =  -\left( \frac{\partial z_i p_i}{\partial z_i} + \sum_{j \ne i}^C \frac{\partial z_j p_j}{\partial z_i} \right) \\
    & =  - \left( p_i + z_i \frac{\partial p_i}{\partial z_i}  + \sum_{j\ne i}^C z_j \frac{\partial p_j}{\partial z_i} \right) \\
    & =  - \left( p_i + z_i (p_i - p_i^2) + \sum_{j \ne i}^C z_j (-p_i p_j) \right) \\
    & =  - \left( p_i + p_i z_i - ( p_i^2 z_i + \sum_{i \ne j}^C p_i p_j z_j ) \right) \\
    & =  - \left( p_i + p_i z_i - \sum_{j=1}^C p_i p_j z_j \right) \\
    & =  - \left( p_i + p_i z_i - p_i \sum_{j=1}^C p_j z_j \right) \\
    & =  - p_i (T + z_i + 1),
\end{split}
\end{equation}

\begin{equation}
\begin{split}
    \frac{\partial Q}{\partial z_i}
    & = \frac{\partial log \sum_{j=1}^C e^{z_j}}{\partial z_i}
      =  \frac{1}{\sum_{j=1}^C e^{z_j}} \frac{\partial \sum_{j=1}^C e^{z_j}}{\partial z_i} 
      =  \frac{e^{z_i}}{\sum_{j=1}^C e^{z_j}} 
      =  p_i.
\end{split}
\end{equation}
Note that $T$ and $Q$ treat $z_i, \forall i \in \{1, 2, ..., C\}$ equally.
\vspace{3mm}

\begin{proposition}
\label{proposition A1}
    The valid value of temperature $\tau$ in Decoupled Entropy Minimization is $0 < \tau \le 2 / \alpha$ where $\alpha > 0$.
\end{proposition}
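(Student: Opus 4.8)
The plan is to first make the word ``valid'' precise and then reduce the claim to an elementary Hessian computation. Because the softmax is invariant under adding a constant to all logits, the substantive geometry of the DEM objective $H_{\mathrm{DEM}}(\mathbf z) := T_\tau(\mathbf z) + Q_\alpha(\mathbf z)$ of Eq.~\eqref{DEM} lives on the subspace $V := \{\mathbf z \in \mathbb{R}^C : \sum_{i} z_i = 0\}$, on which the uniform prediction corresponds to $\mathbf z = \mathbf 0$. The criterion I would adopt for ``validity'' is the property that \emph{defines} entropy minimization and that DEM must therefore retain: the uniform prediction should be a local \emph{maximum} of $H_{\mathrm{DEM}}$ restricted to $V$, so that \emph{minimizing} $H_{\mathrm{DEM}}$ pushes predictions away from uniformity toward peaked distributions. (For the classical choice $\tau = \alpha = 1$ this is exactly the statement that Shannon entropy is maximized at the uniform distribution, which is the baseline behavior DEM should not destroy.) Equivalently, the Hessian $\nabla^2 H_{\mathrm{DEM}}$ evaluated at $\mathbf 0$, restricted to $V$, must be negative semidefinite.

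First I would check that $\mathbf 0$ is actually a critical point of $H_{\mathrm{DEM}}$ on $V$: from Eq.~\eqref{partial Q and T by z_i} and its $\tau$-softened analogue the gradient of $H_{\mathrm{DEM}}$ at $\mathbf 0$ equals $\tfrac{\alpha-1}{C}\mathbf 1$, which is orthogonal to $V$, so the tangential gradient vanishes regardless of $(\tau,\alpha)$. Next I would compute the two Hessians at $\mathbf 0$ using the softmax derivative identities already established in Appendix~\ref{theory and proof}, namely $\partial p_{\tau i}/\partial z_i = \tfrac1\tau(p_{\tau i} - p_{\tau i}^2)$ and $\partial p_{\tau i}/\partial z_j = -\tfrac1\tau p_{\tau i} p_{\tau j}$. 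Since at $\mathbf 0$ the third-order terms drop out and the softmax Jacobian is symmetric, one gets $\nabla^2 T_\tau|_{\mathbf 0} = -2\,J_{p_\tau}|_{\mathbf 0}$; evaluating at $p_{\tau i} = p_i = 1/C$ yields $\nabla^2 T_\tau|_{\mathbf 0} = -\tfrac{2}{\tau C}\Pi$ and $\nabla^2 Q_\alpha|_{\mathbf 0} = \tfrac{\alpha}{C}\Pi$, where $\Pi := I - \tfrac1C \mathbf 1 \mathbf 1^\top$ is the orthogonal projector onto $V$. Adding them gives the clean expression $\nabla^2 H_{\mathrm{DEM}}|_{\mathbf 0} = \tfrac1C\big(\alpha - \tfrac2\tau\big)\Pi$.

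Finally, since $\Pi$ acts as the identity on $V$ (hence is positive definite there), the restriction of $\nabla^2 H_{\mathrm{DEM}}|_{\mathbf 0}$ to $V$ is negative semidefinite if and only if $\alpha - 2/\tau \le 0$, i.e. if and only if $\tau \le 2/\alpha$. Combining this with $\tau > 0$ (needed for the softmax in $T_\tau$ of Eq.~\eqref{Eq9} to be well defined) and the standing hypothesis $\alpha > 0$ (so that $Q_\alpha$ is a genuine log-sum-exp penalty rather than a reward) gives the claimed range $0 < \tau \le 2/\alpha$. As a consistency check aligned with the reward-curve figures, I would note the same threshold emerges from linearizing the reward of the top logit about the uniform configuration: along $\mathbf z = \epsilon e_1$ one obtains $R_1(\epsilon) = \tfrac{1-\alpha}{C} + \tfrac{C-1}{C^2}\big(\tfrac2\tau - \alpha\big)\epsilon + O(\epsilon^2)$, so the reward is non-decreasing at a near-uniform prediction exactly when $\tau \le 2/\alpha$, matching Fig.~\ref{abl for tau} and Fig.~\ref{abl_for_alpha}.

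The main obstacle here is conceptual rather than computational: the proposition is stated informally (``the valid value of $\tau$''), so the real work is pinning down the correct formalization, and I would argue the right one is ``the uniform prediction is a local maximizer of the DEM objective on the shift-invariant logit space'' — the minimal condition under which DEM still deserves to be called entropy \emph{minimization}. A secondary technical point to handle carefully is precisely this shift-invariance of the softmax, which is why only the component of the Hessian tangent to $V$ is relevant and why the bound comes out as the dimension-free inequality $\tau \le 2/\alpha$ rather than some $C$-dependent expression; once that reduction is in place, the rest is the routine Hessian evaluation sketched above.
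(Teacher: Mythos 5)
Your proposal is correct and takes essentially the same route as the paper: both arguments impose a non-positive second-order derivative of the DEM objective at the uniform-logit configuration and read off $\alpha \le 2/\tau$, and indeed the diagonal entries of your Hessian $\frac{1}{C}\left(\alpha - \frac{2}{\tau}\right)\left(I - \frac{1}{C}\mathbf{1}\mathbf{1}^\top\right)$ are exactly the paper's quantity $\left(1-\frac{1}{C}\right)\left(\frac{\alpha}{C} - \frac{2}{\tau C}\right)$ obtained under the boundary condition $z_i = z_j$ for all $i \ne j$. The differences are presentational rather than substantive: you formalize ``valid'' as the uniform prediction being a local maximizer on the zero-sum subspace and compute the full Hessian coordinate-free via the log-sum-exp/softmax Jacobian, whereas the paper computes the per-coordinate second partial $\partial^2 H(\mathbf{z})/\partial z_i^2$ through the auxiliary quantity $F$ and motivates the sign condition by requiring a positive reward for the dominant logit and negative rewards for the rest, with the same treatment of $\tau > 0$ and $\alpha > 0$ at the end.
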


\begin{proof}
Considering the introduction of a temperature $\tau$ in CADF to reshape the reward curve, we obtain $T_\tau = -\sum_{i=1}^C p_{\tau i} z_i$, where $p_{\tau i} = e^{z_i / \tau} / \sum_{j=1}^C e^{z_j / \tau}$.
Following the derivation process in Eq.~\eqref{detailed partial T by z_i}, we calculate the partial derivative of $T_\tau$ with respect to $z_i$ as
\begin{equation}
\label{Eq17}
    \frac{\partial T_\tau}{\partial z_i} 
     = - \frac{1}{\tau} p_{\tau i} (T_\tau + z_i + \tau).
\end{equation}

Therefore, we can obtain the partial derivative of the conditional entropy used by DEM with respect to $z_i$ as
\begin{equation}
\begin{split}
    \frac{\partial H(\mathbf{z})}{\partial z_i} 
    = \frac{\partial T_\tau}{\partial z_i} + \frac{\partial Q_\alpha}{\partial z_i}  
    = \alpha p_i - \frac{1}{\tau} p_{\tau i} (T_\tau + z_i + \tau).
\end{split}
\end{equation}

Let $F = T_\tau + z_i + \tau - \alpha\tau p_i / p_{\tau i}$, and we have $\partial H(\mathbf{z}) / \partial z_i = - F \times p_{\tau i} / \tau$.
We take the derivative of $F$ with respect to $z_i$ in order to derive the second derivative of $H(\mathbf{z})$ with respect to $z_i$:
\begin{equation}
    \frac{\partial F_\tau}{\partial z_i} 
    = 1 - p_{\tau i} (1 + \frac{z_i}{\tau} + \frac{T_\tau}{\tau}) - \alpha\tau(\frac{(\tau-1) \ p_i}{\tau p_{\tau i}} - \frac{p_i^2}{p_{\tau i}} + \frac{p_i}{\tau}).
\end{equation}

Under the boundary condition of $z_i = z_j = k, \forall \ i \ne j$, we have $p_i = p_{\tau i} = 1/C, \ \forall i \in \{1,2,...,C\}$, $T_\tau = -\sum_{i=1}^C p_{\tau i}z_i = -k\sum_{i=1}^C p_{\tau i} = - k$, and $F = \tau(1 - \alpha)$.
Then, we can obtain the second derivative of $H(\mathbf{z})$ with respect to $z_i$ as
\begin{equation}
\begin{split}
    \frac{\partial^2 H(\mathbf{z})}{\partial z_i}
    & = - \frac{p_{\tau i}}{\tau} \frac{\partial F}{\partial z_i} - \frac{F}{\tau} \frac{\partial p_{\tau i}}{\partial z_i} \\
    & = - \frac{1}{\tau C} (1 - \frac{1}{C} - \tau\alpha (\frac{\tau - 1}{\tau} - \frac{1}{C} + \frac{1}{\tau C})) - (1 - \alpha) \times \frac{1}{\tau} (\frac{1}{C} - \frac{1}{C^2}) \\
    & = (1 - \frac{1}{C}) (\frac{\alpha}{C} - \frac{2}{\tau C}).
\end{split}
\end{equation}

For the logits $\mathbf{z}$ predicted for a sample $x$, we assume that $z_m > z_j, \forall j \ne m$. 
It means that the probability of the $m$-th class is greater than others. 
We hope that the reward for $z_m$ is greater than zero, and the rewards for other $z_j, \forall j \ne m$ are less than zero, 
\textit{i.e.}, $- \partial H(\mathbf{z}) / \partial z_m > 0$ and $- \partial H(\mathbf{z}) / \partial z_j < 0, \forall j \ne m$,
in order to achieve low-density separation between classes and reduce the class overlap.
Equivalently, we hope that $\partial^2 H(\mathbf{z}) / \partial z_i \le 0$ under the boundary condition of $z_i=z_j, \forall i \ne j$.
Because $C > 1$ and $\alpha$ is set to be $\alpha > 0$, thus
\begin{equation}
    \frac{\partial^2 H(\mathbf{z})}{\partial z_i} \le 0 
    \ \ \Leftrightarrow \ \ \frac{\alpha}{C} - \frac{2}{\tau C} \le 0
    \ \ \Leftrightarrow \ \ \alpha \le \frac{2}{\tau}.
\end{equation}
When $\tau > 0$, we have $\tau \le 2 / \alpha$. The above inequality does not hold when $\tau < 0$.

Considering the limit conditions $\tau \rightarrow 0^+$ and $\tau \rightarrow 0^-$, we have
\begin{equation}
    \lim_{\tau \rightarrow 0^+} \frac{\partial^2 H(\mathbf{z})}{\partial z_i} \rightarrow - \infty,
    \ \ \ \ 
    \text{and}
    \ \ \ \ 
    \lim_{\tau \rightarrow 0^-} \frac{\partial^2 H(\mathbf{z})}{\partial z_i} \rightarrow + \infty.
\end{equation}

Overall, the valid value of $\tau$ in Decoupled Entropy Minimization is $0 < \tau \le 2/\alpha$, where $\alpha > 0$.

\end{proof}

\section{More Implementation Details}
\label{more implementation details appendix}

\subsection{Experimental Protocols}
\label{experimental protocols}

\subsubsection{Single-Domain Test-Time Adaptation}

ImageNet-C \cite{hendrycks2019benchmarking} is used to construct the single-domain TTA task. 
ImageNet-C contains $15$ different versions of corruptions applied to $50,000$ images from the validation set of ImageNet-1K \cite{deng2009imagenet}. 
These corruptions are Gaussian noise (Gauss), Shot noise (Shot), Impulse noise (Impul), Defocus blur (Defcs), Frosted Glass blur (Gls), Motion blur (Mtn), Zoom blur (Zm), Snow (Snw), Frost (Frst), Fog (Fg), Brightness (Brt), Contrast (Cnt), Elastic (Els), Pixelation (Px), and JPEG (Jpg).
Each type of corruption consists of $5$ levels, with the $5$-th level indicating the highest degree of damage.
Following previous TTA methods \cite{wang2020tent,niu2022efficient,lee2024entropy,niu2023towards}, we uniformly use the corruption at the $5$-th level as the test datasets.
We measure the Top-1 classification accuracy of TTA algorithms on each corruption subset and calculate the average accuracy as the metric for the single-domain TTA task.
We employ $3$ random seeds, to run experiments on ImageNet-C. 
These seeds determine the order in which test samples are seen by TTA algorithms and models over time. 
Since TTA is an online learning task, the arrival order of test samples can significantly impact the performance of TTA algorithms.
We report the average accuracy and standard deviation across the $3$ random seeds as metrics to measure the performance and robustness.

\subsubsection{Continual Test-Time Adaptation}

Unlike the single-domain task tested separately on each corruption, continual TTA constructs a continuously changing testing environment using ImageNet-C \cite{hendrycks2019benchmarking}. 
Specifically, it concatenates the $15$ corruptions in the $5$-th level in the order of Gaussian noise $\rightarrow$ Shot noise $\rightarrow$ Impulse noise $\rightarrow$ Defocus blur $\rightarrow$ Frosted Glass blur $\rightarrow$ Motion blur $\rightarrow$ Zoom blur $\rightarrow$ Snow $\rightarrow$ Frost $\rightarrow$ Fog $\rightarrow$ Brightness $\rightarrow$ Contrast $\rightarrow$ Elastic $\rightarrow$ Pixelation $\rightarrow$ JPEG. 
This setup requires the TTA algorithms to have the ability to adapt aggressively in local static distribution shifts and maintain stability in the face of a dynamically changing environment without suffering from catastrophic forgetting.
Similarly, we employ the Top-1 classification accuracy as the metric, conduct experiments with $3$ random seeds, and report the average accuracy and standard deviation.

\subsubsection{Test-Time Prompt Tuning}

We mainly use ImageNet-1K \cite{deng2009imagenet} and ImageNet variants, including ImageNet-A \cite{hendrycks2021natural}, ImageNet-V2 \cite{recht2019imagenet}, ImageNet-R \cite{hendrycks2021many}, and ImageNet-Sketch \cite{wang2019learning} as the test datasets. 
They are used to measure the robustness of TTA algorithms and models against natural distribution shifts.
ImageNet-A consists of $7,500$ natural adversarial samples that are misclassified by the standard ResNet-50 \cite{he2016deep} and contains $200$ ImageNet categories. 
ImageNet-V2 includes $10,000$ images and $1,000$ ImageNet categories collected from the source different from ImageNet-1K. 
ImageNet-R collects $30,000$ artistic renditions from $200$ ImageNet categories. 
ImageNet-Sketch is composed of $50,000$ black-and-white sketches, independently collected from the original ImageNet validation set, covering $1,000$ ImageNet categories.
A main research object of test-time prompt tuning is the CLIP models \cite{radford2021learning}, a series of foundational vision-language models.
It uses test samples to update the input learnable prompts of the text encoder in CLIP models. 
Test-time prompt tuning focuses on performing episodic online learning on an individual test sample, that is, only one sample is used for prompt tuning at a time. 
The updated prompt is applied to make predictions on the current sample again, and then the learnable prompt is reset and waits for the next test sample to arrive.
Similar to other TTA tasks, we employ the Top-1 classification accuracy as the metric and calculate the average accuracy and standard deviation over $3$ random seeds.

\subsubsection{Semi-Supervised Learning}

We use CIFAR-10 \cite{krizhevsky2009learning}, CIFAR-100 \cite{krizhevsky2009learning}, STL-10 \cite{coates2011analysis}, EuroSat \cite{helber2019eurosat}, TissueMNIST \cite{yang2023medmnist}, and Semi-Aves \cite{su2021semi} as the datasets for evaluating SSL algorithms. 
CIFAR-10 and CIFAR-100 contain $50,000$ $32\times32$ training images and $10,000$ test images, covering $10$ and $100$ categories respectively. 
STL-10 consists of $13,000$ $96\times96$ RGB images, with $5,000$ labeled images and $100,000$ unlabeled images for training, and $8,000$ for testing. 
The EuroSAT dataset focuses on land use and cover classification, which includes $10$ categories, $16,200$ training images, and $5,400$ test images. 
The TissueMNIST dataset is related to biomedical images, mainly containing $8$ biomedical scenarios and tissue types, with $165,466$ $28\times28$ training images and $47,280$ test images. 
The Semi-Aves dataset is a bird dataset for semi-supervised image classification, extracted from the iNaturalist-2018 dataset, containing $200$ bird species, $3,959$ labeled images and $26,640$ unlabeled images for training, and $4,000$ for testing, with an average of $15\sim53$ labeled images per class.
We strictly follow the configuration of various comparison methods and datasets in USB\footnote{\url{https://github.com/microsoft/Semi-supervised-learning}} \cite{usb2022}, a unified semi-supervised learning benchmark. 
We report the average Top-1 classification accuracy and standard deviation from running experiments on $3$ random seeds.

\subsubsection{Unsupervised Domain Adaptation in Semantic Segmentation}

We focus on the synthetic-to-real unsupervised domain adaptation task, in which the model is trained on fully annotated synthetic images and validated on real-world data. 
Several unlabeled real images are accessible during training. 
We use the GTA5 dataset \cite{richter2016playing} as the source domain data, which contains $24,966$ synthetic frames and pixel-level semantic annotations of $33$ categories captured from a video game. 
The Cityscapes dataset \cite{cordts2016cityscapes} is used as the target domain data, and its $2,975$ unlabeled images are utilized for training. 
The $19$ common categories of GTA5 and Cityscapes are selected. 
We measure the segmentation performance with the standard mean-Intersection-over-Union (mIoU) metric \cite{everingham2015pascal}, evaluated on $500$ validation images.
We also report the average mIoU and standard deviation over $3$ random seeds.

\subsubsection{Reinforcement Learning}

We mainly consider RL tasks in discrete environments and utilize the Minigrid environment \cite{chevalier2024minigrid}.  
Minigrid offers a series of grid environments with different layouts and tasks, enabling agents to move within these grids. 
The agents' actions are typically discrete, such as basic actions like moving up, down, left, and right. 
We conduct experiments in $9$ environments, including DoorKey-5x5, Empty-Random-5x5, Fetch-5x5-N2, FourRooms, GoToDoor-5x5, KeyCorridorS3R1, PutNear-6x6-N2, RedBlueDoors-6x6, and Unlock.
We employ the RL Baselines3 Zoo\footnote{\url{https://github.com/DLR-RM/rl-baselines3-zoo}}, a training framework for Stable Baselines3 RL agents, to run our experiments and strictly follow the configurations of RL algorithms implemented therein. 
We report the mean and standard deviation of the average return over $10$ episodes, which is measured on $6$ random seeds.

\subsection{Method Implementations}
\label{method implementations}

We primarily run experiments on one NVIDIA GeForce RTX 4090 GPU with 24 GB of memory.

\textbf{DEM*.}
In Sec.~\ref{DEM method}, we propose DEM and introduce two hyperparameters, $\tau$ and $\alpha$, to improve the classical EM. 
When $\tau=1.0$ and $\alpha=1.0$, DEM is completely equivalent to classical EM. 
To find the most suitable hyperparameter configuration, we construct a search space where $\tau$ and $\alpha$ range from $0.0$ to $2.0$ (including $0.0$ and $2.0$), sampled at intervals of $0.1$. 
Through TPE \cite{bergstra2011algorithms}, a fast hyperparameter search algorithm integrated in NNI\footnote{\url{https://github.com/microsoft/nni}}, we can search for the optimal hyperparameter configuration $(\tau^*, \alpha^*)$. 
We call this method DEM*. 
However, when applying DEM* to replace or add to the loss functions of existing TTA and SSL algorithms, 
it requires a significant amount of additional computational overhead to search the suitable hyperparameters.
Although we do not recommend this, if necessary, the way of introducing and searching for $(\tau^*, \alpha^*)$ in DEM* can also be applied to AdaDEM, which is discussed in Appendix~\ref{appendix additional discussions}. 
We share our practical experience in tuning $\tau$ and $\alpha$ for quick hyperparameter searching.
For DEM, changes in the value of $\alpha$ have a greater impact on the original algorithm than changes in the value of $\tau$. 
This means that we can fix $\tau=1.0$ and search for the optimal $\alpha^*$, then fix $\alpha^*$ and search for the optimal $\tau^*$, thus obtaining a sub-optimal hyperparameter combination.
We also notice that the valid value of $\tau$ in DEM is $0 < \tau \le 2/\alpha$, which is considered to help determine the search scope of $\tau$.

\textbf{AdaDEM.}
To avoid the complications brought by hyperparameter search, we do not introduce $\tau$ and $\alpha$ in AdaDEM, even though it is feasible, as discussed in Appendix~\ref{appendix additional discussions}.
We uniformly use the loss function in Eq.~\eqref{adadem} to replace or add to existing TTA and SSL algorithms. 
Specifically, if the original algorithms employ the Entropy Minimization loss as a part of the objective function, we replace the classical EM with AdaDEM, such as Tent, ETA, EATA, DeYO, SAR, TPT, Ent. Min., VAT, MinEnt, and \textit{etc.}. 
Otherwise, we add AdaDEM, calculated using only unlabeled samples, to the original objective function, such as MixMatch, FixMatch, FreeMatch, AdvEnt, and \textit{etc.}.
Since Entropy Maximization is applied in RL to encourage exploration, we replace the implementation of conditional entropy in PPO with our AdaDEM while keeping the maximization strategy unchanged.

\textbf{Test-Time Adaptation Methods.}
We compare with several TTA methods, including 
Tent \cite{wang2020tent}, ETA \cite{niu2022efficient}, EATA \cite{niu2022efficient}, DeYO \cite{lee2024entropy}, SAR \cite{niu2023towards}, and TPT \cite{shu2022test}. 
We use ResNet50 and ViT-B/16 pre-trained on ImageNet-1K as the source models, except for using CLIP-RN50 and CLIP-ViT-B/16 for test-time prompt tuning. 
We follow the original methods' hyperparameter settings unless otherwise specified.

\textbf{Semi-Supervised Learning Methods.}
We adopt VAT \cite{miyato2018virtual}, MixMatch \cite{berthelot2019mixmatch}, FixMatch \cite{sohn2020fixmatch}, and FreeMatch \cite{wang2022freematch} implemented in USB \cite{usb2022} as the base algorithms. 
Meanwhile, we implement Ent. Min. as a baseline.
Ent. Min. calculates the cross-entropy loss for labeled samples and the entropy minimization loss for unlabeled samples based on the work \cite{grandvalet2004semi}, where EM is weighted by $0.3$. 
ViT-S/2-32px is applied for CIFAR and EuroSat, 
ViT-S/16-224px for Semi-Aves, 
ViT-B/16-96px for STL-10, 
and ViT-T/2-32px for TissueMNIST. 
We follow the hyperparameter settings of these methods unless otherwise specified.

\textbf{Unsupervised Domain Adaptation Methods.}
We mainly follow the work \cite{vu2019advent} to implement unsupervised domain adaptation in semantic segmentation. 
We use Deeplab-V2 \cite{chen2017deeplab} as the backbone. 
We employ MinEnt \cite{vu2019advent} and AdvEnt \cite{vu2019advent} as baselines and follow the hyperparameter settings of the original methods unless otherwise specified.

\textbf{Reinforcement Learning Methods.}
We adopt Proximal Policy Optimization (PPO) \cite{schulman2017proximal} as the baseline. 
In PPO, Entropy Maximization is used as part of the objective function, which is weighted by $0.001$ by default. 
We replace the calculation method of conditional entropy in PPO with our AdaDEM while keeping the entropy-maximization strategy unchanged. 
We follow the hyperparameter setup implemented in RL Baselines3 Zoo unless otherwise specified.

\section{More Experimental Results}
\label{more experimental results}

\begin{table*}[t]
\caption{Detailed experimental results on single-domain TTA task. Top-1 accuracy (\%) is reported. We highlight the highest accuracy in \textbf{bold} and the second best as \underline{underline}. $\Delta$ denotes the performance improvement relative to the baselines.}
\label{appendix_table_single_domain_tta}
\vspace{-3mm}
\begin{center}
\begin{small}
\resizebox{1.0\linewidth}{!}{
\setlength{\tabcolsep}{1.5mm}{
\begin{tabular}{l|ccc|cccc|cccc|cccc|cc}
\toprule
\multirow{3}{*}{Methods} & \multicolumn{15}{c|}{Single-Domain TTA} & \multirow{3}{*}{Mean} & \multirow{3}{*}{$\Delta$} \\
& \multicolumn{3}{c|}{Noise} & \multicolumn{4}{c|}{Blur} & \multicolumn{4}{c|}{Weather} & \multicolumn{4}{c|}{Digital} & \\
& Gauss & Shot & Impul & Defcs & Gls & Mtn & Zm & Snw & Frst & Fg & Brt & Cnt & Els & Px & Jpg &  \\
\midrule

NoAdapt & 29.1 & 29.6 & 31.6 & 31.2 & 25.1 & 39.3 & 31.5 & 24.6 & 30.2 & 54.3 & 64.5 & 48.4 & 34.2 & 52.5 & 55.1 & 38.8\tiny{±0.00} & - \\
\midrule

Tent$^\dag$ & 54.2 & 54.8 & 55.4 & 56.6 & 54.0 & 60.8 & 34.2 & 3.8 & 9.6 & 70.9 & 76.5 & 69.4 & 59.5 & 70.0 & 67.2 & 53.1\tiny{±0.65} & +0.0 \\
\rowcolor{lightgray}
+ DEM* & 51.7 & 51.9 & 53.3 & 54.8 & 51.2 & 58.7 & 50.8 & 14.7 & 44.3 & 69.9 & 77.0 & 68.5 & 57.3 & 69.0 & 66.8 & 56.0\tiny{±0.32} & \textcolor{myDarkGreen}{+2.9} \\
\rowcolor{lightgray}
+ AdaDEM & 57.2 & 58.2 & 58.1 & 58.9 & 60.2 & 65.5 & 63.3 & 2.0 & 66.0 & 73.7 & 78.0 & 68.7 & 69.2 & 73.3 & 70.4 & 61.5\tiny{±0.20} & \textcolor{myDarkGreen}{+8.4} \\
\midrule

Tent & 51.6 & 52.0 & 53.1 & 52.3 & 47.6 & 56.6 &  46.7 & 10.2 & 29.7 & 67.2 & 74.2 & 67.2 & 50.9 & 66.5 & 64.3 & 52.7\tiny{±0.10} & +0.0 \\
\rowcolor{lightgray}
+ DEM* & 50.1 & 50.1 & 51.7 & 52.1 & 46.6 & 55.6 & 46.3 & 30.0 & 52.2 & 67.5 & 75.8 & 67.0 & 50.4 & 66.1 & 64.2 & 55.1\tiny{±0.11} & \textcolor{myDarkGreen}{+2.4} \\
\rowcolor{lightgray}
+ AdaDEM & \textbf{57.5} & \underline{58.5} & \underline{58.6} & 58.3 & \underline{60.3} & \underline{66.3} & \underline{65.3} & 63.8 & \underline{67.8} & 73.6 & 78.5 & 67.8 & \underline{70.8} & \textbf{74.2} & \textbf{71.7} & 66.2\tiny{±0.12} & \textcolor{myDarkGreen}{+\textbf{13.5}} \\
\midrule

ETA & 56.2 & 57.1 & 57.3 & 58.3 & 58.8 & 63.9 & 61.2 & 66.6 & 65.9 & 73.4 & 77.6 & 70.0 & 67.2 & 72.4 & 70.1 & 65.1\tiny{±0.10} & +0.0 \\
\rowcolor{lightgray}
+ DEM* & 57.3 & 58.2 & 58.3 & \textbf{59.7} & 60.0 & 65.4 & 62.8 & 68.2 & 67.0 & \underline{74.2} & \underline{79.1} & \underline{70.7} & 69.0 & 73.9 & 71.4 & \underline{66.3}\tiny{±0.04} & \textcolor{myDarkGreen}{+1.2} \\
\rowcolor{lightgray}
+ AdaDEM & \underline{57.4} & \textbf{58.7} & \textbf{58.6} & 58.7 & \textbf{60.8} & \textbf{66.6} & \textbf{65.4} & \textbf{69.9} & \textbf{68.4} & \textbf{74.2} & 78.6 & 68.0 & \textbf{71.3} & \underline{74.1} & \underline{71.6} & \textbf{66.8}\tiny{±0.02} & \textcolor{myDarkGreen}{+1.7} \\
\midrule

EATA & 55.2 & 55.9 & 56.5 & 54.0 & 54.9 & 61.9 & 58.8 & 61.9 & 60.3 & 71.6 & 75.4 & 68.6 & 63.0 & 69.3 & 66.3 & 62.2\tiny{±0.14} & +0.0 \\
\rowcolor{lightgray}
+ DEM* & 56.2 & 56.7 & 56.9 & 55.6 & 58.5 & 63.6 & 63.1 & 67.1 & 64.9 & 71.3 & 76.7 & 65.6 & 68.2 & 71.7 & 69.3 & 64.4\tiny{±0.30} & \textcolor{myDarkGreen}{+2.2} \\
\rowcolor{lightgray}
+ AdaDEM & 56.9 & 57.9 & 58.0 & 57.5 & 58.8 & 63.8 & 63.8 & 68.0 & 66.3 & 72.7 & 77.0 & 66.6 & 69.4 & 72.6 & 70.0  & 65.3\tiny{±0.11} & \textcolor{myDarkGreen}{+3.1} \\
\midrule

DeYO & 53.3 & 54.4 & 54.3 & 55.1 & 55.1 & 61.9 & 55.0 & 64.3 & 63.1 & 71.7 & 77.2 & 67.2 & 65.8 & 71.5 & 68.4 & 62.6\tiny{±0.32} & +0.0 \\
\rowcolor{lightgray}
+ DEM* & 56.4 & 57.2 & 57.4 & \underline{59.1} & 58.8 & 64.3 & 59.8 & 67.3 & 66.8 & 74.1 & \textbf{79.3} & \textbf{70.7} & 67.9 & 73.8 & 70.9 & 65.6\tiny{±0.03} & \textcolor{myDarkGreen}{+3.0} \\
\rowcolor{lightgray}
+ AdaDEM & 53.6 & 54.7 & 54.7 & 54.5 & 56.2 & 62.1 & 54.4 & 63.2 & 63.5 & 71.7 & 77.1 & 67.5 & 65.2 & 71.7 & 68.9 & 62.6\tiny{±0.10} & \textcolor{myDarkGreen}{+0.0} \\
\midrule

SAR & 51.0 & 51.6 & 52.4 & 52.4 & 49.5 & 56.2 & 49.2 & 21.7 & 43.9 & 66.6 & 73.2 & 66.5 & 51.4 & 64.4 & 63.3 & 54.2\tiny{±0.07} & +0.0 \\
\rowcolor{lightgray}
+ DEM* & 51.1 & 51.1 & 52.8 & 53.4 & 48.5 & 56.4 & 48.9 & 54.8 & 56.7 & 67.8 & 75.7 & 67.4 & 52.9 & 66.3 & 64.0 & 57.9\tiny{±0.04} & \textcolor{myDarkGreen}{+3.7} \\
\rowcolor{lightgray}
+ AdaDEM & 56.6 & 57.8 & 57.6 & 58.5 & 60.0 & 64.1 & 62.2 & \underline{68.4} & 66.5 & 73.3 & 78.9 & 66.9 & 69.0 & 74.1 & 71.5 & 65.7\tiny{±0.07} & \textcolor{myDarkGreen}{+\underline{11.5}} \\

\bottomrule
\end{tabular}}}
\end{small}
\end{center}
\vspace{-6mm}
\end{table*}

\begin{table*}[htbp!]
\caption{Detailed experimental results on continual TTA task. Top-1 accuracy (\%) is reported. We highlight the highest accuracy in \textbf{bold} and the second best as \underline{underline}. $\Delta$ denotes the performance improvement relative to the baselines.}
\label{appendix_table_continual_tta}
\vspace{-3mm}
\begin{center}
\begin{small}
\resizebox{1.0\linewidth}{!}{
\setlength{\tabcolsep}{1.6mm}{
\begin{tabular}{l|ccc|cccc|cccc|cccc|cc}
\toprule
\multirow{4}{*}{Methods} & \multicolumn{15}{c|}{Continual TTA} & \multirow{4}{*}{Mean} & \multirow{4}{*}{$\Delta$} \\
& \multicolumn{15}{c|}{Time $\xrightarrow{\ \ \ \ \ \ \ \ \ \ \ \ \ \ \ \ \ \ \ \ \ \ \ \ \ \ \ \ \ \ \ \ \ \ \ \ \ \ \ \ \ \ \ \ \ \ \ \ \ \ \ \ \ \ \ \ \ \ \ \ \ \ \ \ \ \ \ \ \ \ \ \ \ \ \ \ \ \ \ \ \ \ \ \ \ \ \ \ \ \ \ \ \ \ \ \ \ \ \ \ \ \ \ \ \ \ \ \ \ \ \ \ \ \ \ \ \ \ \ \ \ \ \ \ \ \ \ \ \ \ \ \ \ \ \ \ \ \ \ \ \ \ \ \ \ \ \ \ \ \ \ }$} & \\
& \multicolumn{3}{c|}{Noise} & \multicolumn{4}{c|}{Blur} & \multicolumn{4}{c|}{Weather} & \multicolumn{4}{c|}{Digital} & \\
& Gauss & Shot & Impul & Defcs & Gls & Mtn & Zm & Snw & Frst & Fg & Brt & Cnt & Els & Px & Jpg &  \\
\midrule

NoAdapt & 29.1 & 29.6 & 31.6 & 31.2 & 25.1 & 39.3 & 31.5 & 24.6 & 30.2 & 54.3 & 64.5 & 48.4 & 34.2 & 52.5 & 55.1 & 38.8\tiny{±0.00} & - \\
\midrule

Tent$^\dag$ & 49.4 & 54.2 & 56.3 & 46.9 & 48.1 & 56.7 & 52.0 & 55.7 & 61.0 & 68.2 & 77.2 & 64.5 & 52.8 & 68.1 & 67.5 & 58.6\tiny{±0.09} & +0.0 \\
\rowcolor{lightgray}
+ DEM* & 50.8 & 57.1 & 59.3 & 56.4 & 58.7 & 62.6 & 62.0 & 65.9 & 66.2 & 72.0 & 76.6 & 66.6 & 66.5 & 71.4 & 69.1 & 64.1\tiny{±0.05} & \textcolor{myDarkGreen}{+5.5} \\
\rowcolor{lightgray}
+ AdaDEM & 54.9 & 59.2 & 60.1 & 53.3 & 56.9 & 61.7 & 58.6 & 63.3 & 65.3 & 70.9 & 77.6 & 64.3 & 61.7 & 71.3 & 69.3 & 63.2\tiny{±0.16} & \textcolor{myDarkGreen}{+4.6} \\
\midrule

Tent & 51.6 & 56.2 & 57.5 & 49.2 & 51.2 & 57.5 & 53.2 & 56.0 & 61.2 & 68.2 & 77.2 & 64.8 & 20.9 & 1.6 & 1.8 & 48.5\tiny{±0.71} & +0.0 \\
\rowcolor{lightgray}
+ DEM* & 53.1 & 58.7 & 59.5 & 56.2 & 58.2 & 63.1 & 63.5 & 66.1 & 65.9 & 72.1 & 76.4 & 66.6 & 67.3 & 71.7 & 69.1 & 64.5\tiny{±0.14} & \textcolor{myDarkGreen}{+\textbf{16.0}} \\
\rowcolor{lightgray}
+ AdaDEM & 54.7 & 59.7 & 60.4 & 55.9 & 58.1 & 62.6 & 61.5 & 65.5 & 66.4 & 72.3 & 77.1 & 66.5 & 64.9 & 71.7 & 69.5 & 64.4\tiny{±0.02} & \textcolor{myDarkGreen}{+\underline{15.9}} \\
\midrule

ETA & 56.2 & 60.0 & 60.6 & 55.5 & 58.8 & 61.9 & 60.7 & 65.1 & 65.8 & 70.6 & 78.0 & 61.9 & 66.5 & 72.0 & 69.4 & 64.2\tiny{±0.04} & +0.0 \\
\rowcolor{lightgray}
+ DEM* & \underline{57.0} & \underline{60.8} & \underline{61.0} & 57.0 & 59.9 & 63.6 & 62.3 & 66.5 & 67.7 & 72.8 & 78.5 & 66.9 & 68.0 & 72.9 & 70.7 & 65.7\tiny{±0.04} & \textcolor{myDarkGreen}{+1.5} \\
\rowcolor{lightgray}
+ AdaDEM & \underline{57.0} & \textbf{61.0} & 60.8 & 57.0 & \underline{60.0} & \underline{64.3} & \underline{65.1} & \textbf{67.7} & 67.4 & 73.2 & 77.3 & 66.8 & \underline{69.8} & 73.2 & 70.6 & 66.1\tiny{±0.01} & \textcolor{myDarkGreen}{+1.9} \\
\midrule

EATA & 55.2 & 58.9 & 59.8 & 56.7 & 58.8 & 63.1 & 61.4 & 65.9 & 67.5 & 72.6 & 78.6 & 65.5 & 66.5 & 72.2 & \underline{71.2} & 64.9\tiny{±0.08} & +0.0 \\
\rowcolor{lightgray}
+ DEM* & 56.3 & 60.3 & 60.7 & \textbf{57.4} & \underline{60.0} & 64.2 & 62.9 & \underline{67.6} & \textbf{68.6} & \textbf{73.8} & \underline{78.9} & \textbf{68.5} & 68.4 & \underline{73.3} & \textbf{71.6} & \underline{66.2}\tiny{±0.07} & \textcolor{myDarkGreen}{+1.3} \\
\rowcolor{lightgray}
+ AdaDEM & \textbf{57.0} & 60.7 & 60.3 & 57.2 & \textbf{60.3} & \textbf{65.4} & \textbf{66.1} & 69.3 & \underline{68.5} & \underline{73.6} & 77.9 & 63.7 & \textbf{70.7} & \textbf{73.6} & 71.1 & \textbf{66.4}\tiny{±0.04} & \textcolor{myDarkGreen}{+1.5} \\
\midrule

DeYO & 53.3 & 56.6 & 56.5 & 44.9 & 53.0 & 56.2 & 22.9 & 59.0 & 60.2 & 67.4 & 76.2 & 57.5 & 62.9 & 69.9 & 67.1 & 57.6\tiny{±0.36} & +0.0 \\
\rowcolor{lightgray}
+ DEM* & 56.2 & 60.3 & \textbf{61.1} & \underline{57.3} & \underline{60.0} & 63.8 & 56.4 & 66.2 & 67.4 & 73.3 & \textbf{79.0} & \underline{67.7} & 67.9 & 73.4 & 70.9 & 65.4\tiny{±0.12} & \textcolor{myDarkGreen}{+7.8} \\
\rowcolor{lightgray}
+ AdaDEM & 51.5 & 54.5 & 55.3 & 50.7 & 53.2 & 57.0 & 50.9 & 57.9 & 60.8 & 64.5 & 75.6 & 60.5 & 59.2 & 67.6 & 66.3 & 59.0\tiny{±0.05} & \textcolor{myDarkGreen}{+1.4} \\
\midrule

SAR & 51.0 & 54.2 & 55.2 & 50.5 & 52.5 & 56.5 & 52.8 & 50.8 & 41.4 & 66.7 & 75.5 & 64.2 & 52.9 & 65.1 & 65.6 & 57.0\tiny{±0.05} & +0.0 \\
\rowcolor{lightgray}
+ DEM* & 51.4 & 57.1 & 58.9 & 53.6 & 54.6 & 60.1 & 56.3 & 62.3 & 65.0 & 69.9 & 78.3 & 67.2 & 61.1 & 70.6 & 70.1 & 62.4\tiny{±0.03} & \textcolor{myDarkGreen}{+5.4} \\
\rowcolor{lightgray}
+ AdaDEM & 54.4 & 58.5 & 58.6 & 53.4 & 57.5 & 61.2 & 59.0 & 62.9 & 64.0 & 69.8 & 77.8 & 63.1 & 64.4 & 71.3 & 68.8 & 63.0\tiny{±0.05} & \textcolor{myDarkGreen}{+6.0} \\

\bottomrule
\end{tabular}}}
\end{small}
\end{center}
\vspace{-5mm}
\end{table*}

\subsection{Experiments on Single-Domain \& Continual Test-Time Adaptation Tasks}

For single-domain and continual TTA tasks, we compare with previous state-of-the-art (SOTA) TTA methods, including Tent, ETA, EATA, DeYO, and SAR. 
We replace the classical EM loss function in these methods with our proposed DEM* and AdaDEM. 
As shown in Tables~\ref{appendix_table_single_domain_tta}~and~\ref{appendix_table_continual_tta}, we employ the ViT-B/16 pretrained on ImageNet-1K as the source model. 
Compared with the official Tent which applies SGD with momentum, our implemented Tent$^\dag$ utilizes vanilla SGD as the optimizer and achieves better performance on both tasks. 
However, SGD with momentum has higher potential, and applying DEM* and AdaDEM can bring higher performance gains.
ETA, EATA, SAR, and DeYO adopt sample-selection-based methods to screen high-confident test samples for model optimization. 
These methods assign a higher weight to high-certainty test samples. 
This re-weighting approach partially alleviates the reward collapse phenomenon in the classical EM. 
Nevertheless, there is still room for improvement, which can be observed from the performance gains brought by DEM*. 
EATA and SAR use the Fisher information of in-distribution samples from the source model and the sharpness-aware optimizer respectively to stabilize the model's optimization in the dynamically changing environment. 
Our DEM* and AdaDEM can be directly applied to these prior-based methods and achieve significant performance improvements.

\begin{table*}[t]
\caption{Detailed experimental results on test-time prompt tuning task. Top-1 accuracy (\%) is reported. 
OOD Avg. is average accuracies on ImageNet variants, including -A, -V2., -R., and -S..}
\label{appendix_table_test_time_prompt_tuning}
\vspace{-3mm}
\begin{center}
\begin{small}
\resizebox{1.0\linewidth}{!}{
\setlength{\tabcolsep}{1.6mm}{
\begin{tabular}{l|ccccc|cc|cc}
\toprule
Methods & ImageNet-1K & ImageNet-A & ImageNet-V2. & ImageNet-R. & ImageNet-S. & Average & $\Delta$ & OOD Avg. & $\Delta$ \\
\midrule

\multicolumn{10}{c}{\textit{CLIP-RN50}} \\
\midrule
Zero-Shot 
& 58.2\tiny{±0.00} & 21.8\tiny{±0.00} & 51.4\tiny{±0.00} & 56.2\tiny{±0.00} & 33.4\tiny{±0.00} & 44.2\tiny{±0.00} & +0.0 & 40.7\tiny{±0.00} & +0.0  \\
Ensemble
& 59.8\tiny{±0.00} & 23.2\tiny{±0.00} & 52.9\tiny{±0.00} & \textbf{60.7}\tiny{±0.00} & 35.5\tiny{±0.00} & 46.4\tiny{±0.00} & \textcolor{myDarkGreen}{+2.2} & 43.1\tiny{±0.00} & \textcolor{myDarkGreen}{+2.4} \\
TPT 
& 60.7\tiny{±0.07} & 26.1\tiny{±0.10} & 54.6\tiny{±0.02} & 58.9\tiny{±0.08} & 35.2\tiny{±0.09} & 47.1\tiny{±0.06} & \textcolor{myDarkGreen}{+2.9} & 43.7\tiny{±0.05} & \textcolor{myDarkGreen}{+3.0} \\
\rowcolor{lightgray}
+ DEM* 
& 61.3\tiny{±0.09} & 25.5\tiny{±0.07} & 55.0\tiny{±0.10} & \underline{59.7}\tiny{±0.12} & 35.6\tiny{±0.08} & 47.4\tiny{±0.04} & \textcolor{myDarkGreen}{+3.2} & 43.9\tiny{±0.06} & \textcolor{myDarkGreen}{+3.2} \\
\rowcolor{lightgray}
+ AdaDEM
& 60.7\tiny{±0.04} & \underline{29.2}\tiny{±0.19} & 54.8\tiny{±0.22} & 58.8\tiny{±0.05} & 35.4\tiny{±0.03} & 47.8\tiny{±0.07} & \textcolor{myDarkGreen}{+3.6} & 44.5\tiny{±0.09} & \textcolor{myDarkGreen}{+3.8} \\

\midrule
CoOp 
& 63.3\tiny{±0.00} & 23.1\tiny{±0.00} & 55.4\tiny{±0.00} & 56.6\tiny{±0.00} & 34.7\tiny{±0.00} & 46.6\tiny{±0.00} & \textcolor{myDarkGreen}{+2.4} & 42.4\tiny{±0.00} & \textcolor{myDarkGreen}{+1.7} \\
TPT (CoOp)
& \underline{65.4}\tiny{±0.06} & 28.9\tiny{±0.14} & \underline{58.2}\tiny{±0.10} & 59.0\tiny{±0.09} & \underline{36.3}\tiny{±0.15} & \underline{49.6}\tiny{±0.07} & \textcolor{myDarkGreen}{+\underline{5.4}} & \underline{45.6}\tiny{±0.07} & \textcolor{myDarkGreen}{+\underline{4.9}} \\
\rowcolor{lightgray}
+ AdaDEM 
& \textbf{65.6}\tiny{±0.05} & \textbf{31.3}\tiny{±0.10} & \textbf{58.5}\tiny{±0.22} & 59.3\tiny{±0.10} & \textbf{36.3}\tiny{±0.11} & \textbf{50.2}\tiny{±0.06} & \textcolor{myDarkGreen}{+\textbf{6.0}} & \textbf{46.4}\tiny{±0.06} & \textcolor{myDarkGreen}{+\textbf{5.7}} \\
\midrule

\multicolumn{10}{c}{\textit{CLIP-ViT-B/16}} \\
\midrule
Zero-Shot
& 66.7\tiny{±0.00} & 47.9\tiny{±0.00} & 60.9\tiny{±0.00} & 74.0\tiny{±0.00} & 46.1\tiny{±0.00} & 59.1\tiny{±0.00} & +0.0 & 57.2\tiny{±0.00} & +0.0 \\
Ensemble
& 68.3\tiny{±0.00} & 49.9\tiny{±0.00} & 61.9\tiny{±0.00} & \underline{77.7}\tiny{±0.00} & 48.2\tiny{±0.00} & 61.2\tiny{±0.00} & \textcolor{myDarkGreen}{+2.1} & 59.4\tiny{±0.00} & \textcolor{myDarkGreen}{+2.2} \\
TPT 
& 69.0\tiny{±0.04} & 54.5\tiny{±0.09} & 63.4\tiny{±0.13} & 77.0\tiny{±0.06} & 48.0\tiny{±0.13} & 62.4\tiny{±0.05} & \textcolor{myDarkGreen}{+3.3} & 60.7\tiny{±0.06} & \textcolor{myDarkGreen}{+3.5} \\
\rowcolor{lightgray}
+ DEM* 
& 68.9\tiny{±0.03} & 54.8\tiny{±0.09} & 63.5\tiny{±0.11} & 77.1\tiny{±0.08} & 47.9\tiny{±0.06} & 62.5\tiny{±0.06} & \textcolor{myDarkGreen}{+3.4} & 60.8\tiny{±0.08} & \textcolor{myDarkGreen}{+3.6} \\
\rowcolor{lightgray}
+ AdaDEM 
& 69.4\tiny{±0.12} & \underline{58.8}\tiny{±0.18} & 64.0\tiny{±0.06} & 77.6\tiny{±0.21} & 48.6\tiny{±0.05} & 63.7\tiny{±0.05} & \textcolor{myDarkGreen}{+4.6} & 62.2\tiny{±0.09} & \textcolor{myDarkGreen}{+5.0} \\

\midrule
CoOp 
& 71.5\tiny{±0.00} & 49.7\tiny{±0.00} & 64.2\tiny{±0.00} & 75.2\tiny{±0.00} & 48.0\tiny{±0.00} & 61.7\tiny{±0.00} & \textcolor{myDarkGreen}{+2.6} & 59.3\tiny{±0.00} & \textcolor{myDarkGreen}{+2.1} \\
TPT (CoOp) 
& \underline{73.6}\tiny{±0.05} & 57.9\tiny{±0.12} & \underline{66.9}\tiny{±0.08} & 77.2\tiny{±0.04} & \underline{49.2}\tiny{±0.07} & \underline{64.9}\tiny{±0.06} & \textcolor{myDarkGreen}{+\underline{5.8}} & \underline{62.8}\tiny{±0.06} & \textcolor{myDarkGreen}{+\underline{5.6}} \\
\rowcolor{lightgray}
+ AdaDEM
& \textbf{73.7}\tiny{±0.07} & \textbf{60.3}\tiny{±0.11} & \textbf{66.9}\tiny{±0.19} & \textbf{77.9}\tiny{±0.14} & \textbf{49.3}\tiny{±0.07} & \textbf{65.6}\tiny{±0.02} & \textcolor{myDarkGreen}{+\textbf{6.5}} & \textbf{63.6}\tiny{±0.04} & \textcolor{myDarkGreen}{+\textbf{6.4}} \\

\bottomrule
\end{tabular}}}
\end{small}
\end{center}
\vspace{-7mm}
\end{table*}

\subsection{Experiments on Test-Time Prompt Tuning Task}

Test-time prompt tuning is an episodic TTA task. 
It focuses on optimizing the learnable text prompts of the source model using a single test sample and making predictions for the current batch with the updated parameters. 
It emphasizes the ability to learn rapidly and adapt extremely well to a single test sample, often employing aggressive optimization strategies such as more complex image augmentation functions, larger learning rates, and more test-time adaptation steps.
We adopt CLIP-RN50 and CLIP-ViT-B/16 as the source models. 
As shown in the Table.~\ref{appendix_table_test_time_prompt_tuning}, 
CLIP models' zero-shot prediction and the experimental results based on a prompt ensemble method \cite{radford2021learning} are employed as baselines. 
We compare with TPT, a widely used method for test-time prompt tuning, and CoOp, a prompt-tuning method for supervised few-shot learning. 
We modify the loss function of TPT to our proposed DEM* and AdaDEM.
We find similar conclusions in other TTA tasks, that is, DEM* and AdaDEM improve the optimization performance by addressing the potential limitations in the classical EM.
They enable the source model to adapt in aggressive optimization strategies with large learning rates and multiple steps for TTA.

\begin{table*}[t]
\vskip 0.1in
\caption{Detailed experimental results (a) on semi-supervised learning. Top-1 accuracy (\%) is reported. We highlight the highest accuracy in \textbf{bold} and the second best as \underline{underline}. $\Delta$ denotes the performance improvement relative to the baselines.}
\label{appendix_table_semi_supervised_learning_a}
\vspace{-2mm}
\begin{center}
\begin{small}
\resizebox{1.0\linewidth}{!}{
\setlength{\tabcolsep}{2.5mm}{
\begin{tabular}{l|ccccc|ccccc|cc}
\toprule
\multirow{2}{*}{Methods} & \multicolumn{5}{c|}{CIFAR-10} & \multicolumn{5}{c|}{CIFAR-100} & \multicolumn{2}{c}{Semi-Aves} \\
& 4 & 25 & 400 & Mean & $\Delta$ & 2 & 4 & 25 & Mean & $\Delta$ & 15$\sim$53 & $\Delta$ \\
\midrule

Ent. Min. 
& 93.2\tiny{±5.7} & 97.5\tiny{±0.2} & 98.5\tiny{±0.0} & 96.4\tiny{±1.9} & +0.0
& 58.8\tiny{±2.6} & 74.6\tiny{±1.1} & 84.4\tiny{±0.2} & 72.6\tiny{±0.5} & +0.0
& 59.9\tiny{±0.7} & +0.0 \\
\rowcolor{lightgray}
+ AdaDEM 
& 95.7\tiny{±0.6} & 97.4\tiny{±0.1} & 98.5\tiny{±0.0} & 97.2\tiny{±0.2} & \textcolor{myDarkGreen}{\underline{+0.8}}
& 66.2\tiny{±1.6} & 76.3\tiny{±0.8} & 85.1\tiny{±0.0} & 75.8\tiny{±0.3} & \textcolor{myDarkGreen}{\textbf{+3.2}}
& 61.0\tiny{±0.2} & \textcolor{myDarkGreen}{\textbf{+1.1}}  \\
\midrule

Vat (w/ Ent. Min.) 
& 94.7\tiny{±6.0} & 98.6\tiny{±0.0} & 98.9\tiny{±0.0} & 97.4\tiny{±2.0} & +0.0
& 68.4\tiny{±2.1} & 78.3\tiny{±0.5} & 85.8\tiny{±0.4} & 77.5\tiny{±0.7} & +0.0
& 61.0\tiny{±0.4} & +0.0 \\
\rowcolor{lightgray}
+ AdaDEM 
& 97.8\tiny{±0.2} & 98.7\tiny{±0.1} & 98.8\tiny{±0.0} & 98.5\tiny{±0.1} & \textcolor{myDarkGreen}{\textbf{+1.1}}
& 70.8\tiny{±0.5} & 79.2\tiny{±1.1} & 86.3\tiny{±0.4} & 78.8\tiny{±0.7} & \textcolor{myDarkGreen}{\underline{+1.3}}
& 61.8\tiny{±0.0} & \textcolor{myDarkGreen}{\underline{+0.8}}  \\
\midrule

MixMatch
& 98.1\tiny{±0.7} & 98.5\tiny{±0.1} & 98.9\tiny{±0.1} & 98.5\tiny{±0.3} & +0.0
& 62.8\tiny{±0.7} & 73.6\tiny{±0.6} & 84.9\tiny{±0.2} & 73.8\tiny{±0.4} & +0.0
& 62.6\tiny{±0.2} & +0.0  \\
\rowcolor{lightgray}
+ AdaDEM 
& 97.5\tiny{±1.5} & 98.6\tiny{±0.1} & 98.9\tiny{±0.1} & 98.3\tiny{±0.6} & \textcolor{myDarkRed}{-0.2}
& 64.1\tiny{±0.7} & 74.6\tiny{±0.2} & 85.7\tiny{±0.2} & 74.8\tiny{±0.2} & \textcolor{myDarkGreen}{+1.0}
& 62.5\tiny{±0.1} & \textcolor{myDarkRed}{-0.1}  \\
\midrule

FixMatch 
& 97.5\tiny{±2.0} & 98.8\tiny{±0.1} & \underline{99.1}\tiny{±0.0} & 98.4\tiny{±0.7} & +0.0
& 69.9\tiny{±1.7} & 80.6\tiny{±0.6} & 87.2\tiny{±0.2} & 79.3\tiny{±0.6} & +0.0
& \textbf{68.2}\tiny{±0.3} & +0.0  \\
\rowcolor{lightgray}
+ AdaDEM
& 98.4\tiny{±0.4} & 98.7\tiny{±0.0} & \underline{99.1}\tiny{±0.0} & 98.7\tiny{±0.2} & \textcolor{myDarkGreen}{+0.3}
& 71.3\tiny{±2.3} & 80.1\tiny{±0.3} & 87.0\tiny{±0.2} & 79.5\tiny{±0.9} & \textcolor{myDarkGreen}{+0.2}
& \underline{67.9}\tiny{±0.1} & \textcolor{myDarkRed}{-0.3} \\
\midrule

FreeMatch
& \underline{98.7}\tiny{±0.1} & \underline{99.0}\tiny{±0.1} & \underline{99.1}\tiny{±0.0} & \underline{98.9}\tiny{±0.0} & +0.0
& \underline{76.5}\tiny{±1.3} & \underline{83.6}\tiny{±1.0} & \underline{87.5}\tiny{±0.3} & \underline{82.6}\tiny{±0.1} & +0.0
& 67.0\tiny{±0.3} & +0.0  \\
\rowcolor{lightgray}
+ AdaDEM
& \textbf{98.8}\tiny{±0.2} & \textbf{99.0}\tiny{±0.0} & \textbf{99.1}\tiny{±0.0} & \textbf{99.0}\tiny{±0.1} & \textcolor{myDarkGreen}{+0.1}
& \textbf{77.5}\tiny{±0.0} & \textbf{84.3}\tiny{±0.1} & \textbf{87.7}\tiny{±0.2} & \textbf{83.1}\tiny{±0.1} & \textcolor{myDarkGreen}{+0.5}
& 67.3\tiny{±0.2} & \textcolor{myDarkGreen}{+0.3}  \\

\bottomrule
\end{tabular}}}
\end{small}
\end{center}
\vspace{-3mm}
\end{table*}

\begin{table*}[t]
\caption{Detailed experimental results (b) on semi-supervised learning. Top-1 accuracy (\%) is reported. We highlight the highest accuracy in \textbf{bold} and the second best as \underline{underline}. $\Delta$ denotes the performance improvement relative to the baselines.}
\label{appendix_table_semi_supervised_learning_b}
\vspace{-2mm}
\begin{center}
\begin{small}
\resizebox{1.0\linewidth}{!}{
\setlength{\tabcolsep}{2.5mm}{
\begin{tabular}{l|cccc|cccc|cccc}
\toprule
\multirow{2}{*}{Methods} & \multicolumn{4}{c|}{STL-10} & \multicolumn{4}{c|}{EuroSat} &  \multicolumn{4}{c}{TissueMNIST}  \\
& 4 & 10 & Mean & $\Delta$ & 2 & 4 & Mean & $\Delta$ & 10 & 50 & Mean & $\Delta$ \\
\midrule

Ent. Min. 
& 76.3\tiny{±3.0} & 88.5\tiny{±0.6} & 82.4\tiny{±1.6} & +0.0
& 66.6\tiny{±5.7} & 86.4\tiny{±2.5} & 76.5\tiny{±3.6} & +0.0
& 44.6\tiny{±3.7} & 50.0\tiny{±1.7} & 47.3\tiny{±2.6} & +0.0  \\
\rowcolor{lightgray}
+ AdaDEM
& 80.4\tiny{±0.7} & 89.2\tiny{±0.1} & 84.8\tiny{±0.3} & \textcolor{myDarkGreen}{\textbf{+2.4}}
& 76.4\tiny{±1.2} & 91.0\tiny{±0.3} & 83.7\tiny{±0.8} & \textcolor{myDarkGreen}{\textbf{+7.2}}
& \underline{46.8}\tiny{±1.9} & 51.8\tiny{±0.7} & 49.3\tiny{±1.3} & \textcolor{myDarkGreen}{+2.0}  \\
\midrule

Vat (w/ Ent. Min)
& 81.6\tiny{±1.8} & 89.3\tiny{±0.6} & 85.4\tiny{±0.9} & +0.0
& 81.9\tiny{±9.9} & 91.3\tiny{±2.9} & 86.6\tiny{±5.4} & +0.0
& 41.6\tiny{±7.9} & 48.7\tiny{±1.9} & 45.2\tiny{±4.8} & +0.0  \\
\rowcolor{lightgray}
+ AdaDEM 
& 83.8\tiny{±0.2} & 89.9\tiny{±0.1} & 86.9\tiny{±0.1} & \textcolor{myDarkGreen}{+1.5}
& 90.3\tiny{±1.8} & 92.1\tiny{±0.1} & 91.2\tiny{±0.9} & \textcolor{myDarkGreen}{+4.6}
& \textbf{48.0}\tiny{±1.3} & 50.8\tiny{±0.1} & 49.4\tiny{±0.6} & \textcolor{myDarkGreen}{\textbf{+4.2}}  \\
\midrule

MixMatch
& 76.7\tiny{±3.1} & 89.2\tiny{±1.1} & 82.9\tiny{±2.1} & +0.0
& 72.0\tiny{±5.6} & 86.6\tiny{±7.2} & 79.3\tiny{±4.3} & +0.0
& 44.7\tiny{±2.3} & 51.3\tiny{±1.8} & 48.0\tiny{±1.7} & +0.0  \\
\rowcolor{lightgray}
+ AdaDEM 
& 80.2\tiny{±0.1} & 90.2\tiny{±0.1} & 85.2\tiny{±0.1} & \textcolor{myDarkGreen}{\underline{+2.3}}
& 76.4\tiny{±0.2} & 91.4\tiny{±4.9} & 83.9\tiny{±2.4} & \textcolor{myDarkGreen}{+4.6}
& \underline{46.8}\tiny{±1.7} & \underline{53.2}\tiny{±0.2} & \textbf{50.0}\tiny{±0.9} & \textcolor{myDarkGreen}{+2.0}  \\
\midrule

FixMatch 
& 84.0\tiny{±2.6} & \underline{93.1}\tiny{±0.5} & 88.5\tiny{±1.2} & +0.0
& 87.7\tiny{±7.2} & 96.2\tiny{±0.1} & 91.9\tiny{±3.6} & +0.0
& 44.6\tiny{±4.8} & 48.8\tiny{±2.1} & 46.7\tiny{±3.3} & +0.0  \\
\rowcolor{lightgray}
+ AdaDEM 
& 83.8\tiny{±0.4} & 92.1\tiny{±1.9} & 87.9\tiny{±0.8} & \textcolor{myDarkRed}{-0.6}
& \textbf{96.0}\tiny{±1.0} & \textbf{97.8}\tiny{±0.4} & \textbf{96.9}\tiny{±0.7} & \textcolor{myDarkGreen}{\underline{+5.0}}
& 46.5\tiny{±1.7} & \textbf{53.4}\tiny{±0.9} & \underline{49.9}\tiny{±1.3} & \textcolor{myDarkGreen}{\underline{+3.2}}  \\
\midrule

FreeMatch
& \underline{86.9}\tiny{±1.8} & 92.5\tiny{±1.5} & \underline{89.7}\tiny{±1.6} & +0.0
& 95.2\tiny{±1.7} & \underline{96.4}\tiny{±0.5} & 95.8\tiny{±1.0} & +0.0
& 42.8\tiny{±4.6} & 48.3\tiny{±2.0} & 45.6\tiny{±3.3} & +0.0  \\
\rowcolor{lightgray}
+ AdaDEM 
& \textbf{89.6}\tiny{±0.7} & \textbf{93.7}\tiny{±0.3} & \textbf{91.6}\tiny{±0.2} & \textcolor{myDarkGreen}{+1.9}
& \underline{95.6}\tiny{±0.1} & 96.2\tiny{±0.0} & \underline{95.9}\tiny{±0.1} & \textcolor{myDarkGreen}{+0.1}
& 46.3\tiny{±0.0} & 49.6\tiny{±0.8} & 47.9\tiny{±0.4} & \textcolor{myDarkGreen}{+2.3}  \\

\bottomrule
\end{tabular}}}
\end{small}
\end{center}
\vspace{-5mm}
\end{table*}

\subsection{Experiments on Semi-Supervised Learning}

Entropy Minimization is widely used in semi-supervised learning (SSL) tasks. 
A common prior assumption for SSL is the low-density separation between classes. 
EM reduces the overlap of models' output probability distribution by decreasing the conditional entropy of the data, causing the density of data points to be lower at the decision boundary.
To verify the effectiveness of our proposed AdaDEM in SSL tasks, we conduct experiments on six common SSL benchmarks. 
The results are presented in Table~\ref{appendix_table_semi_supervised_learning_a} and \ref{appendix_table_semi_supervised_learning_b}. 
For each benchmark, we set different numbers of available labeled samples $N_l$ per class for training. 
Specifically, we set $N_l=4/25/400$ for CIFAR-10, $N_l=2/4/25$ for CIFAR-100, $N_l=4/10$ for STL-10, $N_l=2/4$ for EuroSat, $N_l=10/50$ for TissueMNIST, and $N_l=15\sim53$ for Semi-Aves. 
We report the average top-1 classification accuracy under different numbers of labeled samples.
We replace the classical EM in Ent. Min. \cite{grandvalet2004semi} and VAT (w/ Ent. Min.) \cite{miyato2018virtual} with AdaDEM, and introduce AdaDEM as a part of the loss functions in MixMatch \cite{berthelot2019mixmatch}, FixMatch \cite{sohn2020fixmatch}, and FreeMatch \cite{wang2022freematch} which do not use Entropy Minimization. 
Through experiments, we demonstrate that AdaDEM outperforms the classical EM.
Meanwhile, incorporating AdaDEM into existing SSL methods can effectively improve the performance of the original algorithms.

\begin{table*}[t]
\caption{Detail experimental results on unsupervised domain adaptation in semantic segmentation. The standard mIoU is reported. We highlight the highest accuracy in \textbf{bold} and the second best as \underline{underline}. $\Delta$ denotes the performance improvement relative to the baselines.}
\label{appendix_table_unsupervised_domain_adaptation}
\vspace{-2mm}
\begin{center}
\begin{small}
\resizebox{1.0\linewidth}{!}{
\setlength{\tabcolsep}{1.0mm}{
\begin{tabular}{l|ccccccccccccccccccc|cc}
\toprule
Methods & \rotatebox{60}{road} & \rotatebox{60}{sidewalk} & \rotatebox{60}{building} & \rotatebox{60}{wall} & \rotatebox{60}{fence} & \rotatebox{60}{pole} & \rotatebox{60}{light} & \rotatebox{60}{sign} & \rotatebox{60}{veg} & \rotatebox{60}{terrain} & \rotatebox{60}{sky} & \rotatebox{60}{person} & \rotatebox{60}{rider} & \rotatebox{60}{car} & \rotatebox{60}{truck} & \rotatebox{60}{bus} & \rotatebox{60}{train} & \rotatebox{60}{mbike} & \rotatebox{60}{bike} & mIoU & $\Delta$ \\
\midrule

MinEnt & 84.4 & 24.3 & 77.4 & 22.1 & 21.3 & 26.5 & 33.1 & 19.2 & 82.7 & 30.9 & 76.4 & 58.0 & 26.6 & 75.4 & 31.4 & 38.0 & 2.0 & 25.5 & \underline{35.5} & 41.6\tiny{±0.47} & +0.0 \\
\rowcolor{lightgray}
+ AdaDEM & 85.5 & 22.6 & 78.8 & 21.4 & 24.6 & 27.2 & \textbf{34.4} & 19.6 & 82.5 & 28.8 & \underline{78.0} & 58.5 & 28.5 & 80.1 & 34.4 & 41.2 & \textbf{1.8} & 24.5 & \textbf{38.1} & 42.7\tiny{±0.13} & \textcolor{myDarkGreen}{+\underline{1.1}} \\
\midrule

AdvEnt & \textbf{89.1} & 25.4 & \textbf{81.4} & 26.7 & \underline{25.6} & \textbf{29.4} & 32.8 & \textbf{22.5} & 83.8 & 35.1 & 77.5 & 57.9 & 28.8 & \underline{84.3} & 30.3 & \underline{41.8} & 0.9 & \textbf{29.3} & 26.0 & 43.6\tiny{±0.19} & +0.0 \\
\rowcolor{lightgray}
+ DEM* & \underline{89.0} & \textbf{32.2} & 80.9 & \textbf{28.3} & 25.4 & 28.4 & 33.7 & 20.9 & \underline{84.1} & \textbf{35.6} & 77.8 & \underline{58.8} & \underline{29.1} & 83.6 & \underline{34.7} & 41.3 & \underline{1.6} & \underline{29.1} & 32.6 & \underline{44.6}\tiny{±0.32} & \textcolor{myDarkGreen}{+1.0} \\
\rowcolor{lightgray}
+ AdaDEM & 88.9 & \underline{30.4} & \underline{81.2} & \underline{27.9} & \textbf{27.0} & \underline{28.7} & \underline{34.0} & \underline{21.0} & \textbf{84.1} & \underline{35.4} & \textbf{78.6} & \textbf{59.1} & \textbf{29.8} & \textbf{84.4} & \textbf{36.4} & \textbf{42.6} & \underline{1.6} & 28.0 & 33.4 & \textbf{44.9}\tiny{±0.17} & \textcolor{myDarkGreen}{+\textbf{1.3}} \\

\bottomrule
\end{tabular}}}
\end{small}
\end{center}
\vspace{-8mm}
\end{table*}

\begin{figure*}[t]
\vskip 0.2in
\begin{center}
\begin{minipage}{0.03\columnwidth}
\end{minipage}
\begin{minipage}{0.3\columnwidth}
\caption*{Original Image}
\end{minipage}
\begin{minipage}{0.302\columnwidth}
\caption*{Pixel Prediction}
\end{minipage}
\begin{minipage}{0.34\columnwidth}
\caption*{Pixel Entropy}
\end{minipage}
\vspace{-2.5mm}

\begin{minipage}{0.03\columnwidth}
\caption*{\rotatebox{90}{MinEnt}}
\end{minipage}
\begin{minipage}{0.3\columnwidth}
\centerline{\includegraphics[width=\columnwidth]{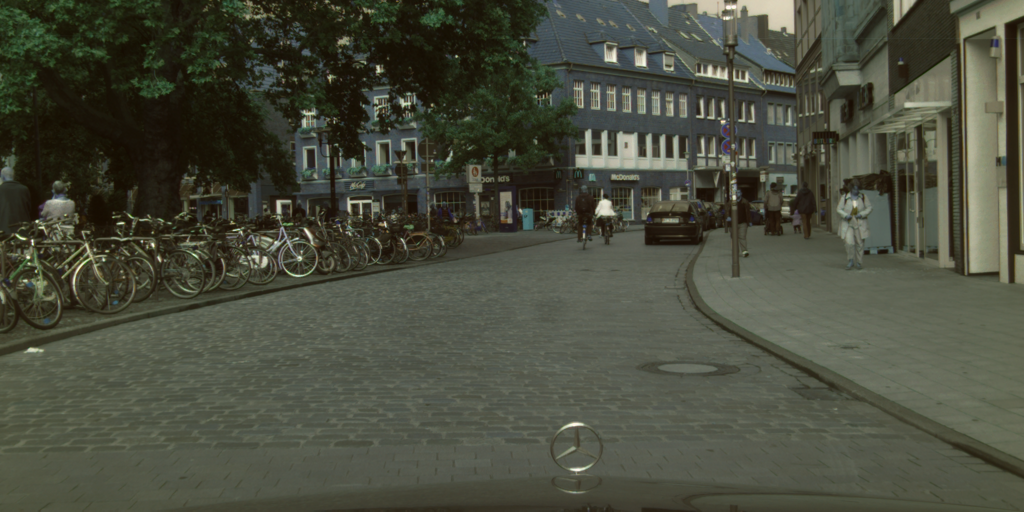}}
\end{minipage}
\begin{minipage}{0.302\columnwidth}
\centerline{\includegraphics[width=\columnwidth]{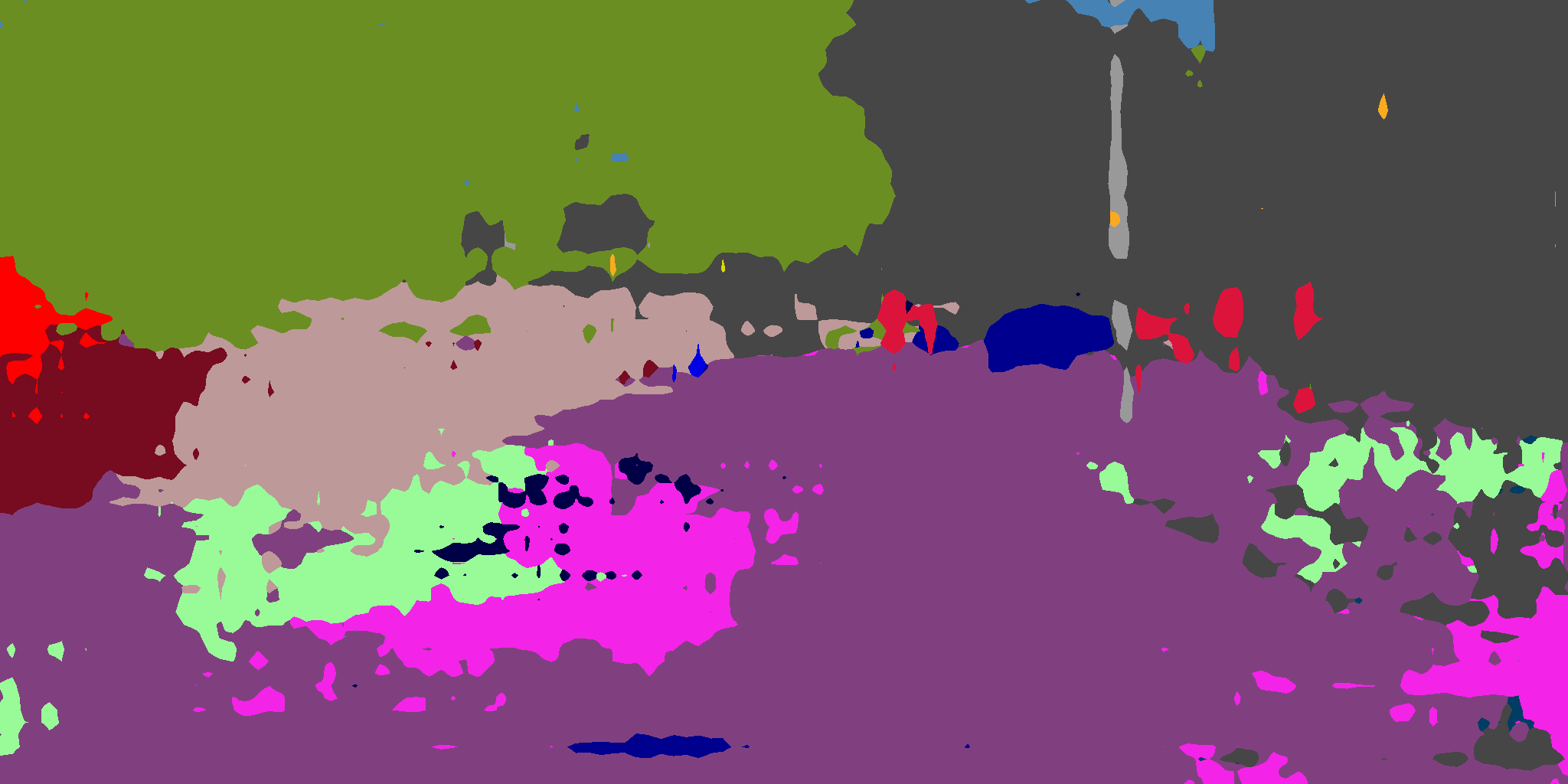}}
\end{minipage}
\begin{minipage}{0.34\columnwidth}
\centerline{\includegraphics[width=\columnwidth]{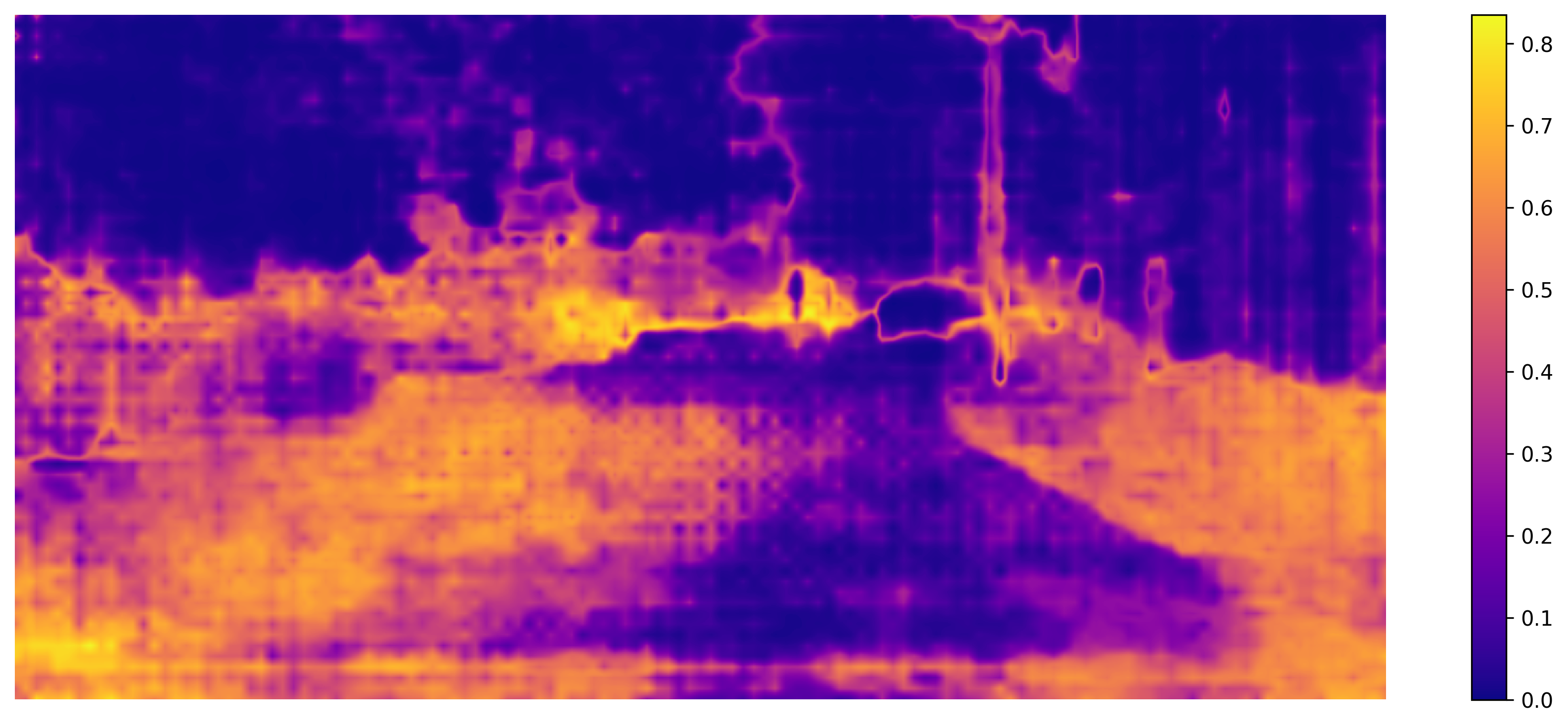}}
\end{minipage}

\begin{minipage}{0.03\columnwidth}
\caption*{\rotatebox{90}{+ AdaDEM}}
\end{minipage}
\begin{minipage}{0.3\columnwidth}
\centerline{\includegraphics[width=\columnwidth]{pics/seg/images/minent/ori.png}}
\end{minipage}
\begin{minipage}{0.302\columnwidth}
\centerline{\includegraphics[width=\columnwidth]{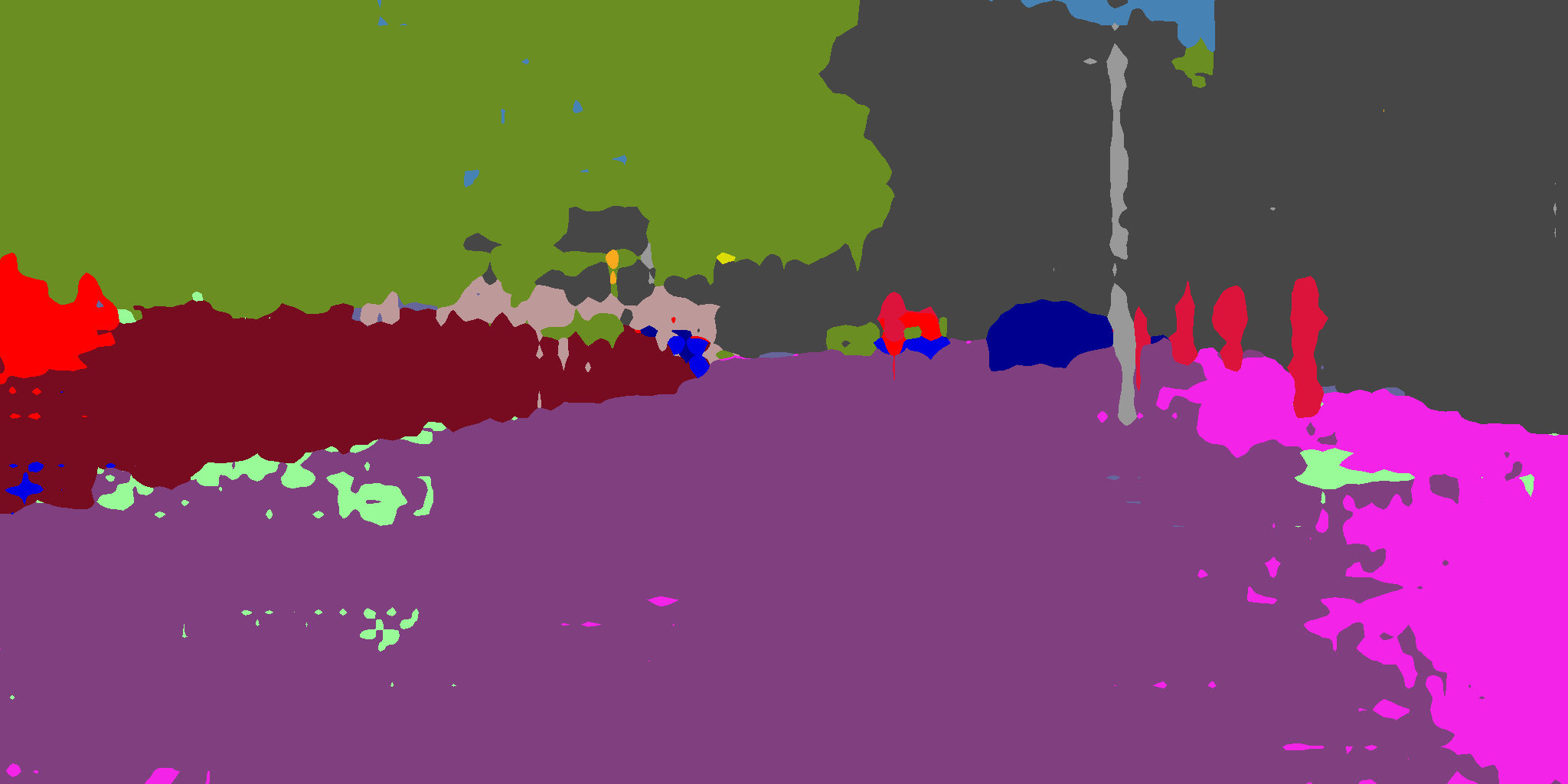}}
\end{minipage}
\begin{minipage}{0.34\columnwidth}
\centerline{\includegraphics[width=\columnwidth]{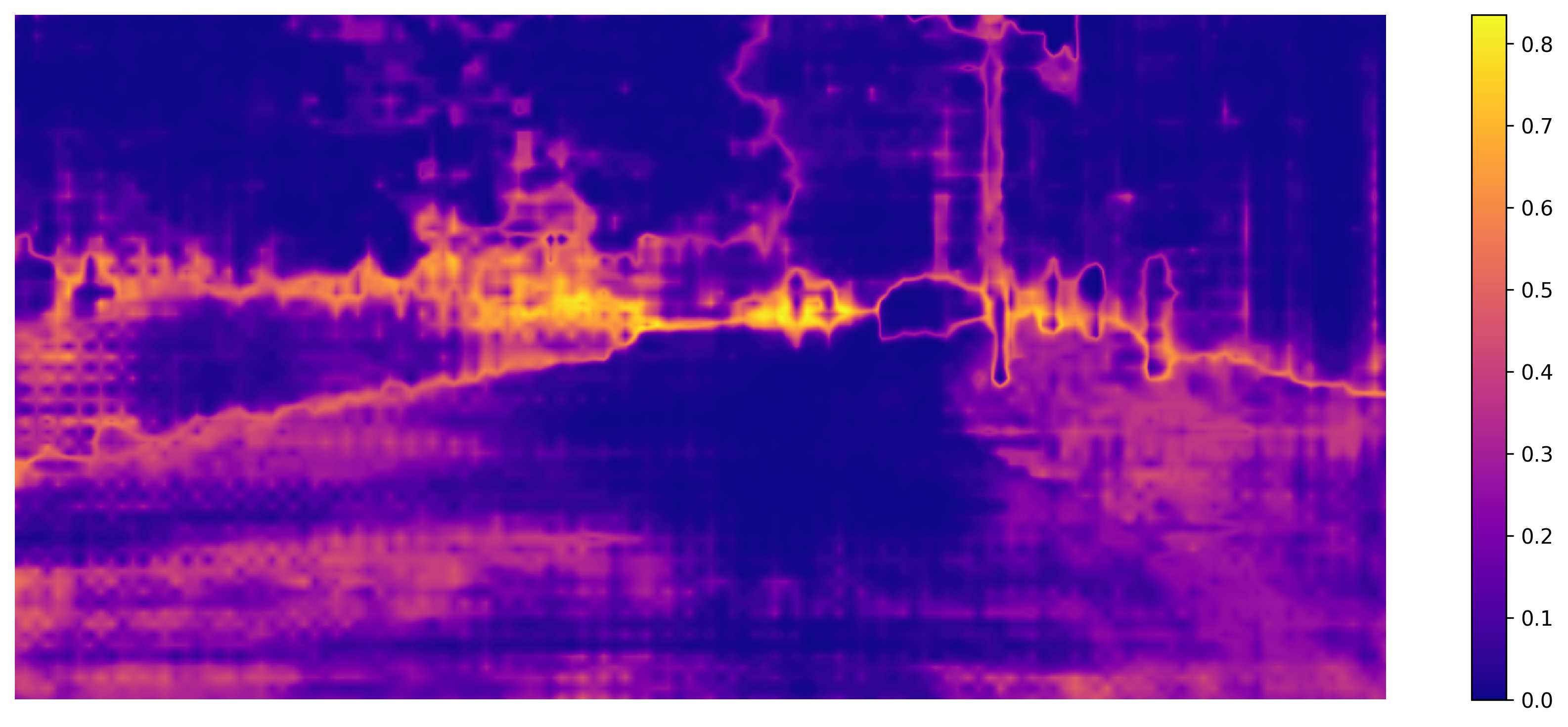}}
\end{minipage}

\vspace{3mm}
\begin{minipage}{0.03\columnwidth}
\caption*{\rotatebox{90}{AdvEnt}}
\end{minipage}
\begin{minipage}{0.3\columnwidth}
\centerline{\includegraphics[width=\columnwidth]{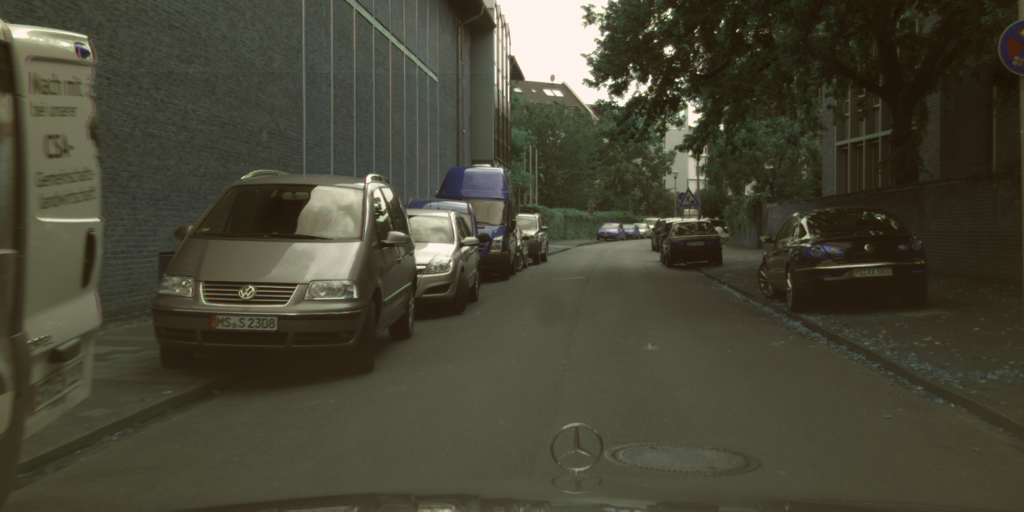}}
\end{minipage}
\begin{minipage}{0.302\columnwidth}
\centerline{\includegraphics[width=\columnwidth]{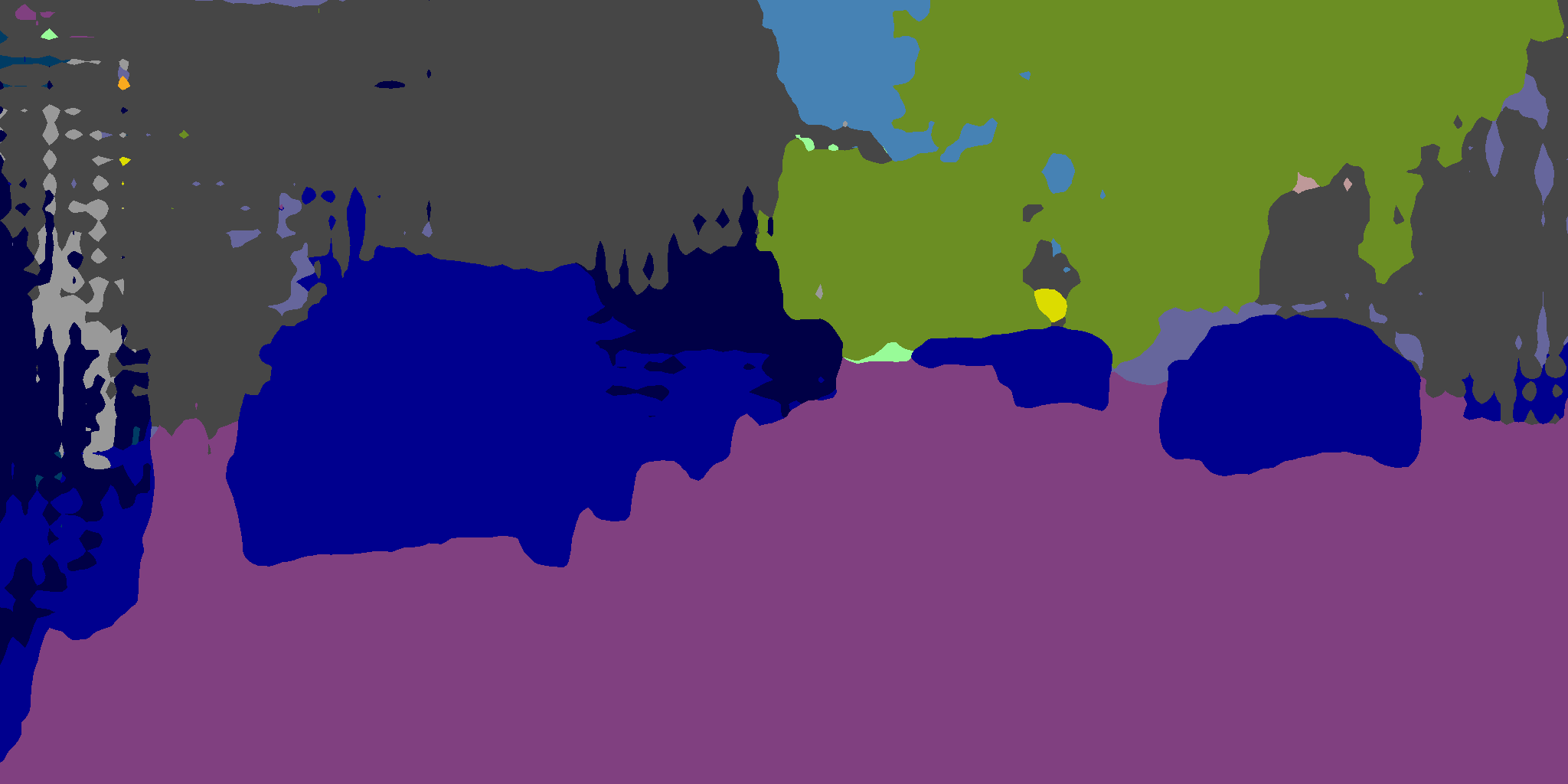}}
\end{minipage}
\begin{minipage}{0.34\columnwidth}
\centerline{\includegraphics[width=\columnwidth]{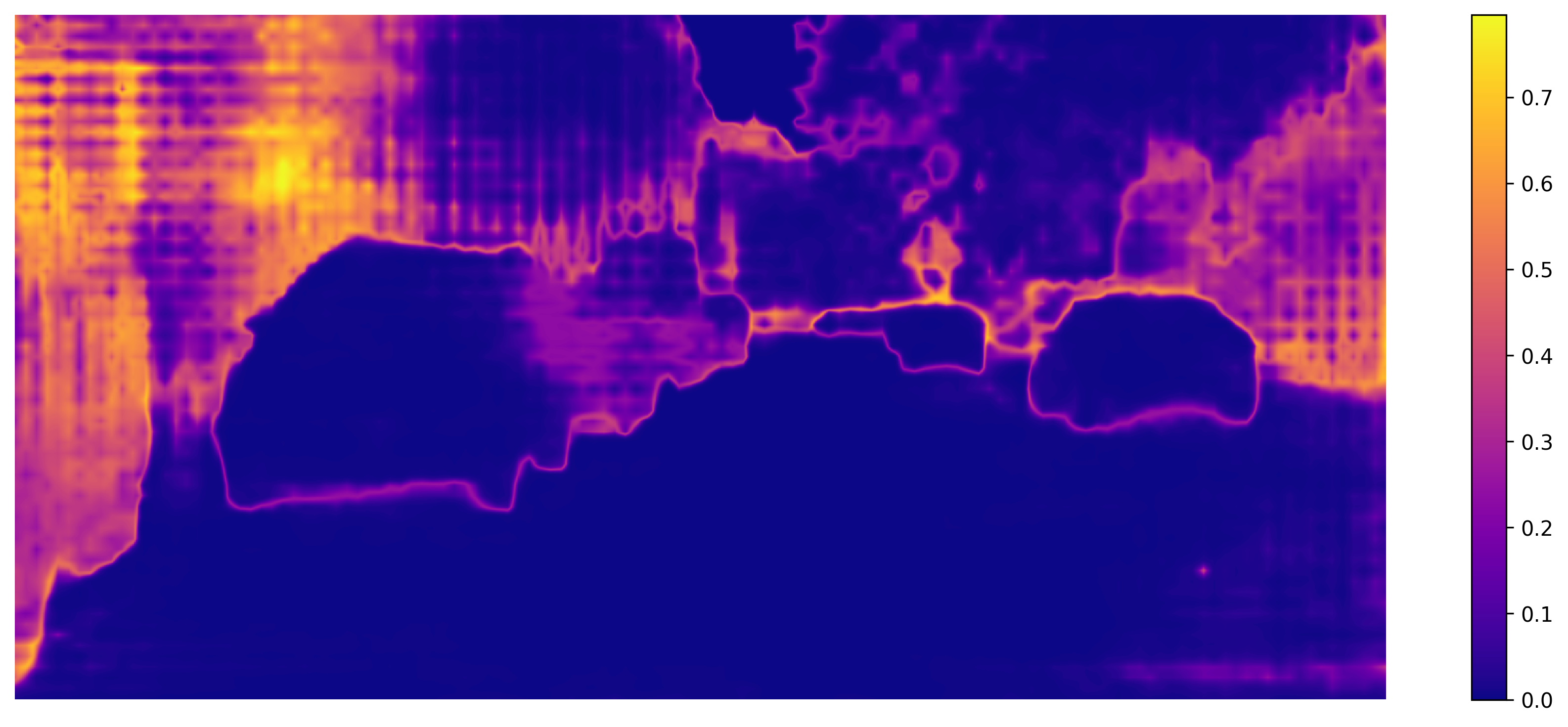}}
\end{minipage}

\begin{minipage}{0.03\columnwidth}
\caption*{\rotatebox{90}{+ AdaDEM}}
\end{minipage}
\begin{minipage}{0.3\columnwidth}
\centerline{\includegraphics[width=\columnwidth]{pics/seg/images/advent/ori.png}}
\end{minipage}
\begin{minipage}{0.302\columnwidth}
\centerline{\includegraphics[width=\columnwidth]{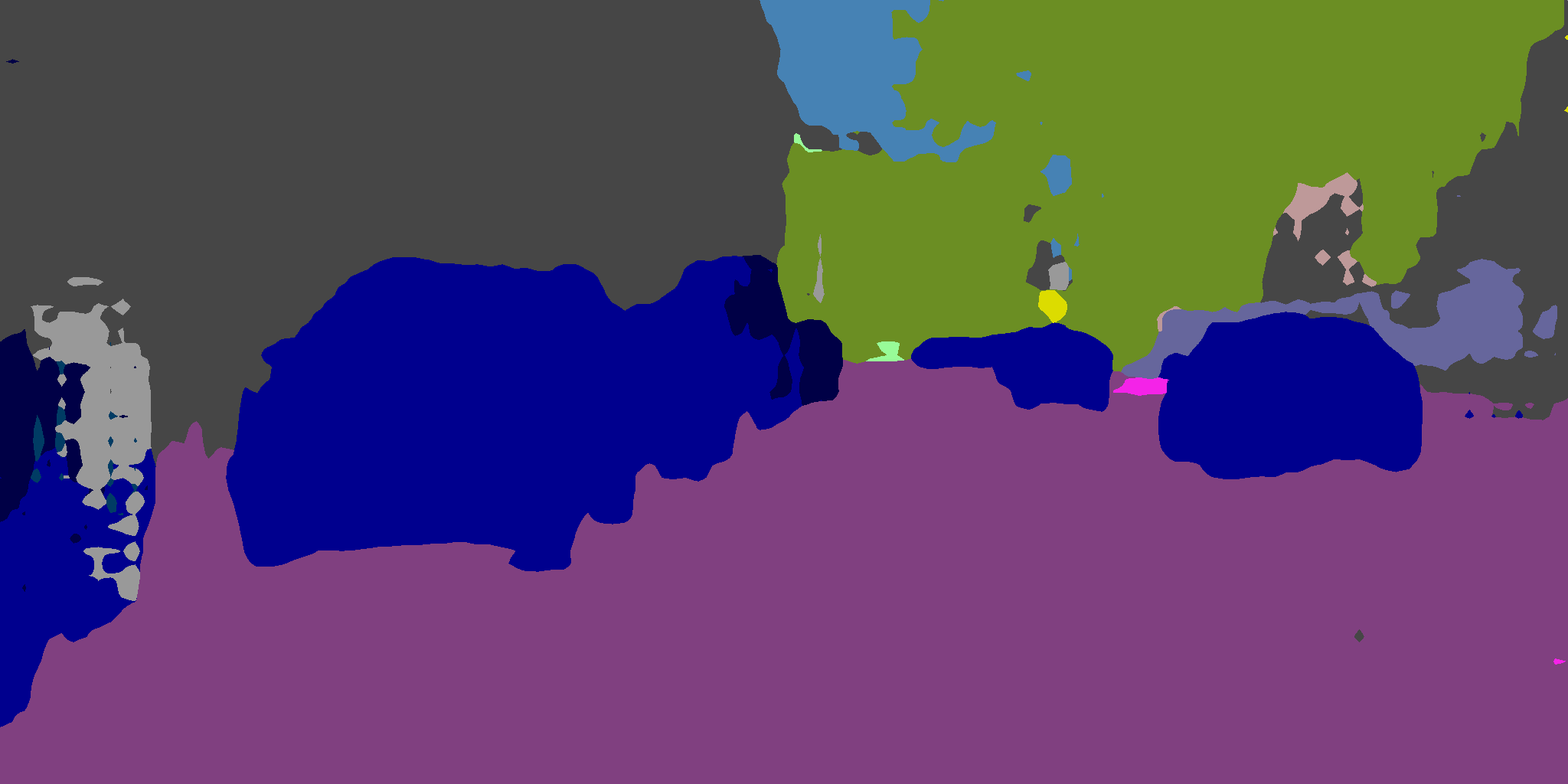}}
\end{minipage}
\begin{minipage}{0.34\columnwidth}
\centerline{\includegraphics[width=\columnwidth]{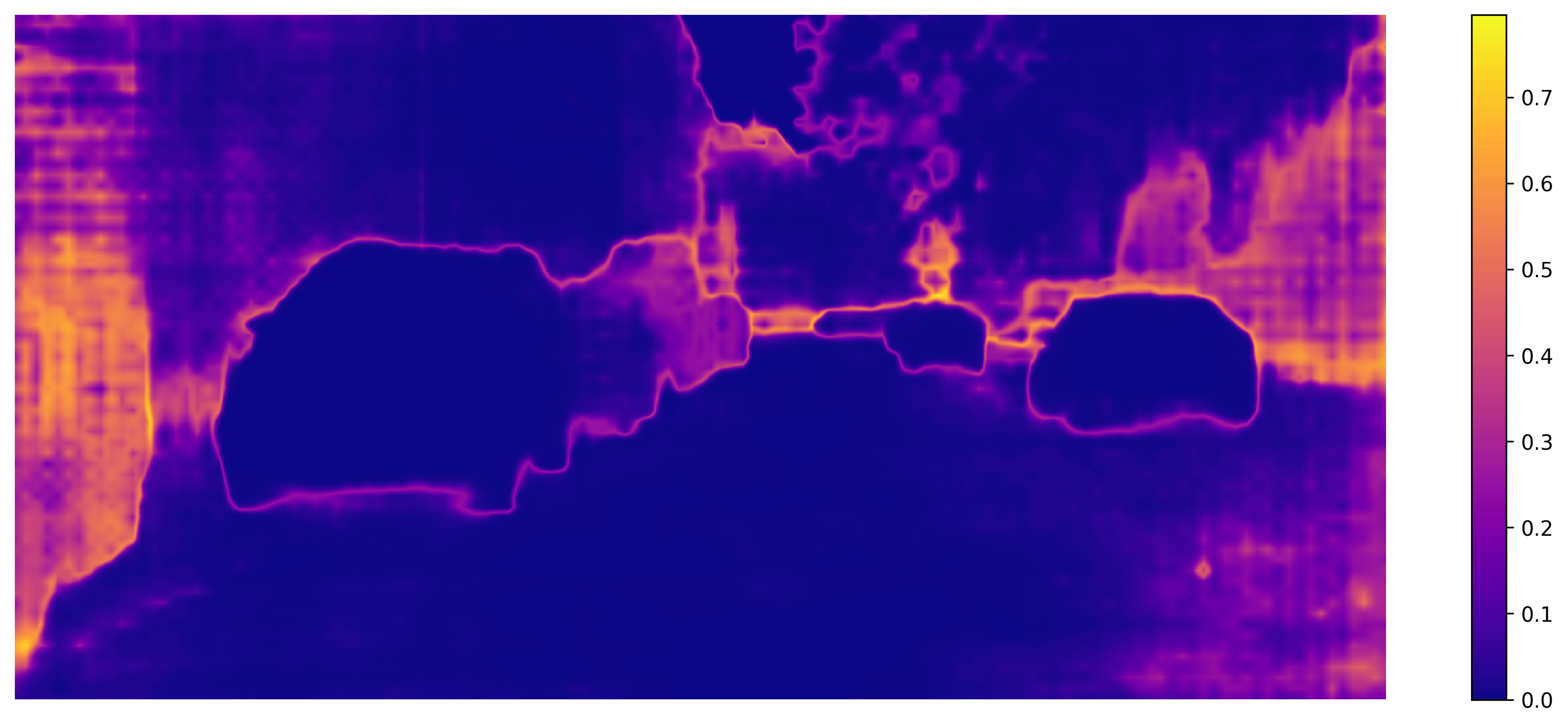}}
\end{minipage}
\end{center}
\vskip -0.1in
\caption{Visualization of the pixel prediction results and pixel entropy of baselines and AdaDEM.}
\label{appendix_fig_visualization_segmentation}
\vspace{-4mm}
\end{figure*}

\subsection{Experiments on Unsupervised Domain Adaptation}

Another role of Entropy Minimization is to bridge the domain gap between the source training distribution and the target test distribution. 
The classical EM pushes the decision boundaries of DNNs towards the low-density regions of the target distribution. 
On the other hand, conditional entropy is also a widely validated calibration tool for DNNs, which measures the prediction error and distribution shifts to some extent. 
Vu \textit{et al.} \cite{vu2019advent} demonstrate that the semantic segmentation model trained in the source distribution has high prediction entropy and blurred boundaries for objects in out-of-distribution images. 
We verify the effectiveness of DEM* and AdaDEM in the semantic segmentation task, and use MinEnt and AdvEnt \cite{vu2019advent} as baselines for comparison. 
The experimental results are shown in Table~\ref{appendix_table_unsupervised_domain_adaptation}. We also visualize the segmentation results of the baselines and AdaDEM, as shown in Fig.~\ref{appendix_fig_visualization_segmentation}.
The experimental results show that compared with the classical EM, AdaDEM can effectively improve the accuracy of semantic segmentation, especially for tail classes, and avoid DNNs' tendency to predict dominant classes.
At the same time, AdaDEM can effectively reduce the prediction entropy of DNNs for image pixels, resulting in clearer object boundaries and thus bridging the domain gap between the source and target distributions.

\begin{figure*}[t]
\begin{center}
\begin{minipage}{0.325\columnwidth}
\centerline{\includegraphics[width=\columnwidth]{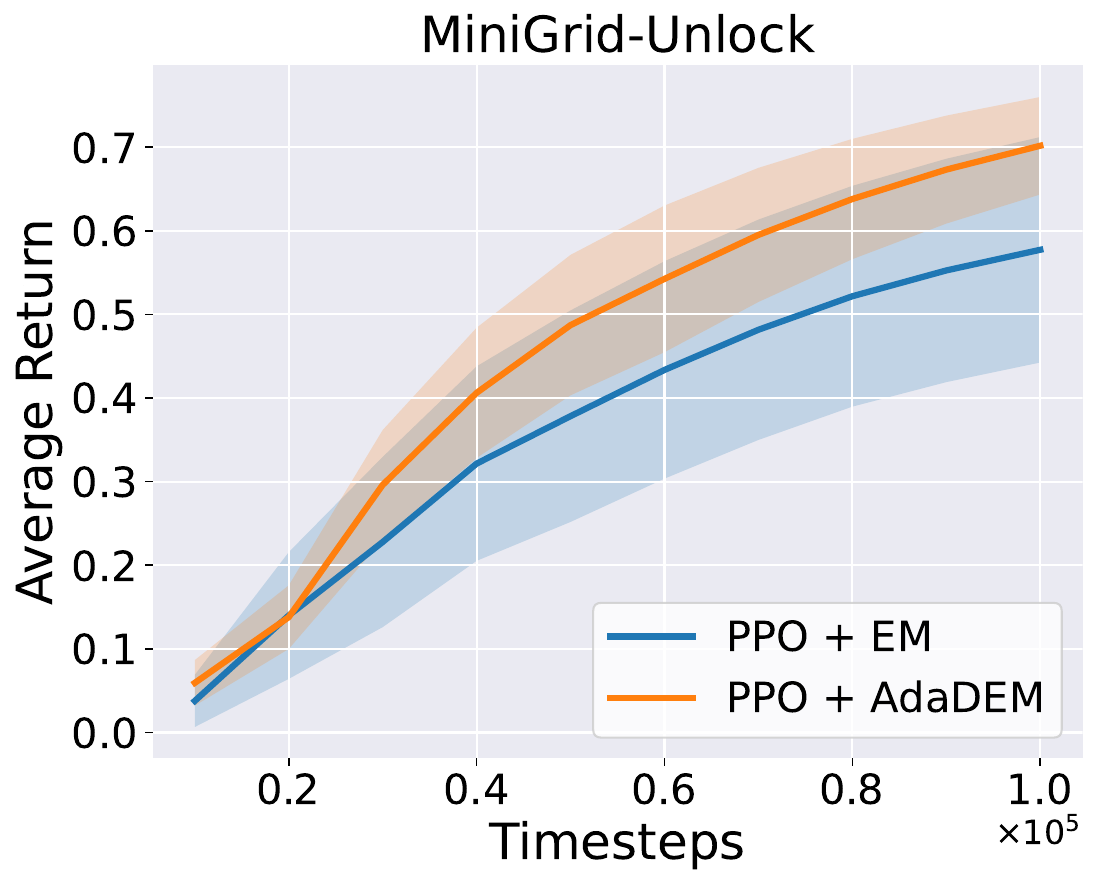}}
\end{minipage}
\hfill
\begin{minipage}{0.325\columnwidth}
\centerline{\includegraphics[width=\columnwidth]{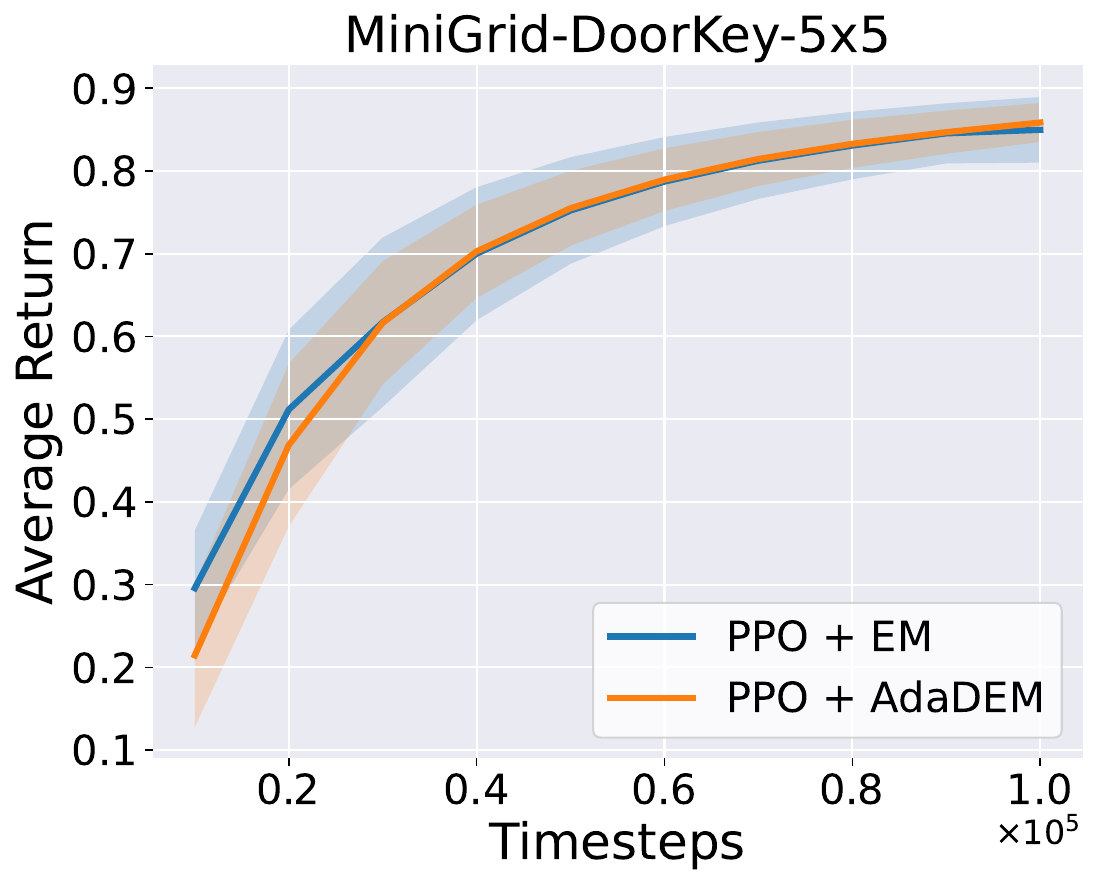}}
\end{minipage}
\hfill
\begin{minipage}{0.325\columnwidth}
\centerline{\includegraphics[width=\columnwidth]{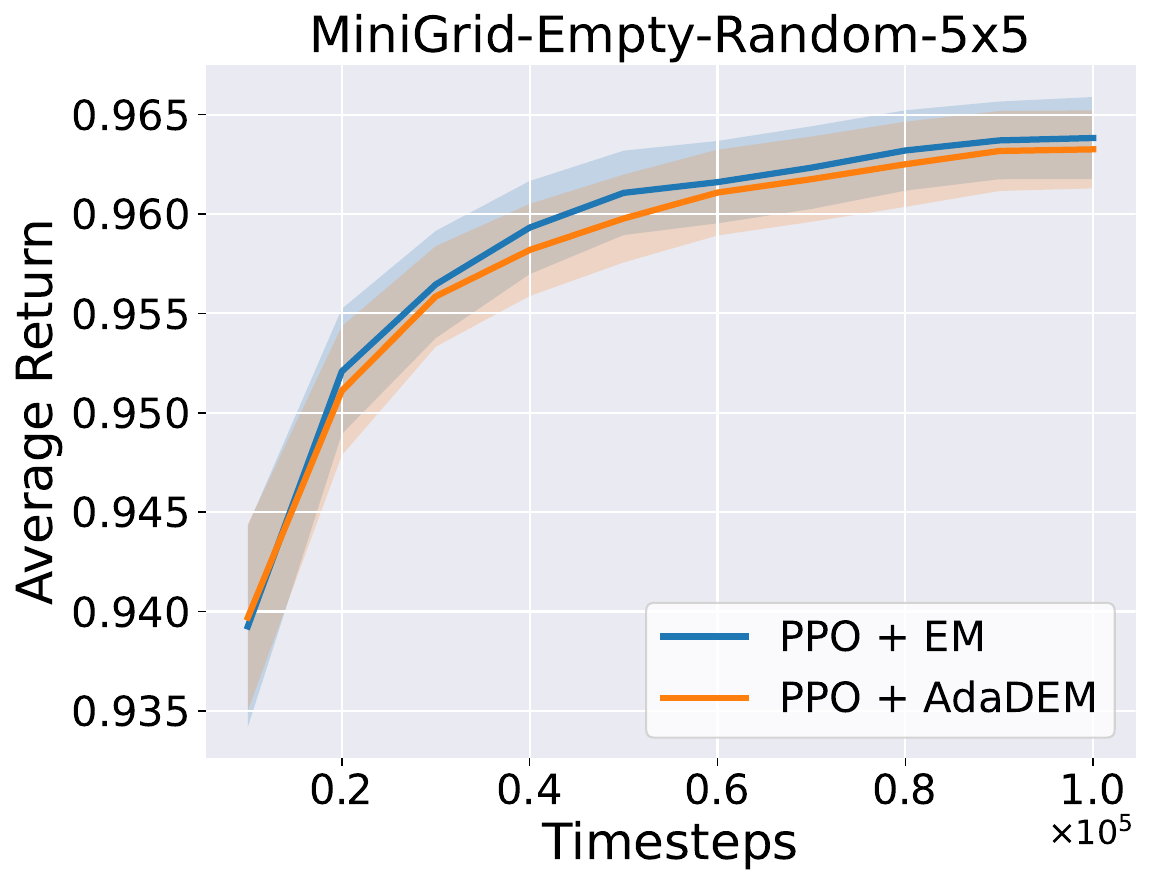}}
\end{minipage}

\begin{minipage}{0.325\columnwidth}
\centerline{\includegraphics[width=\columnwidth]{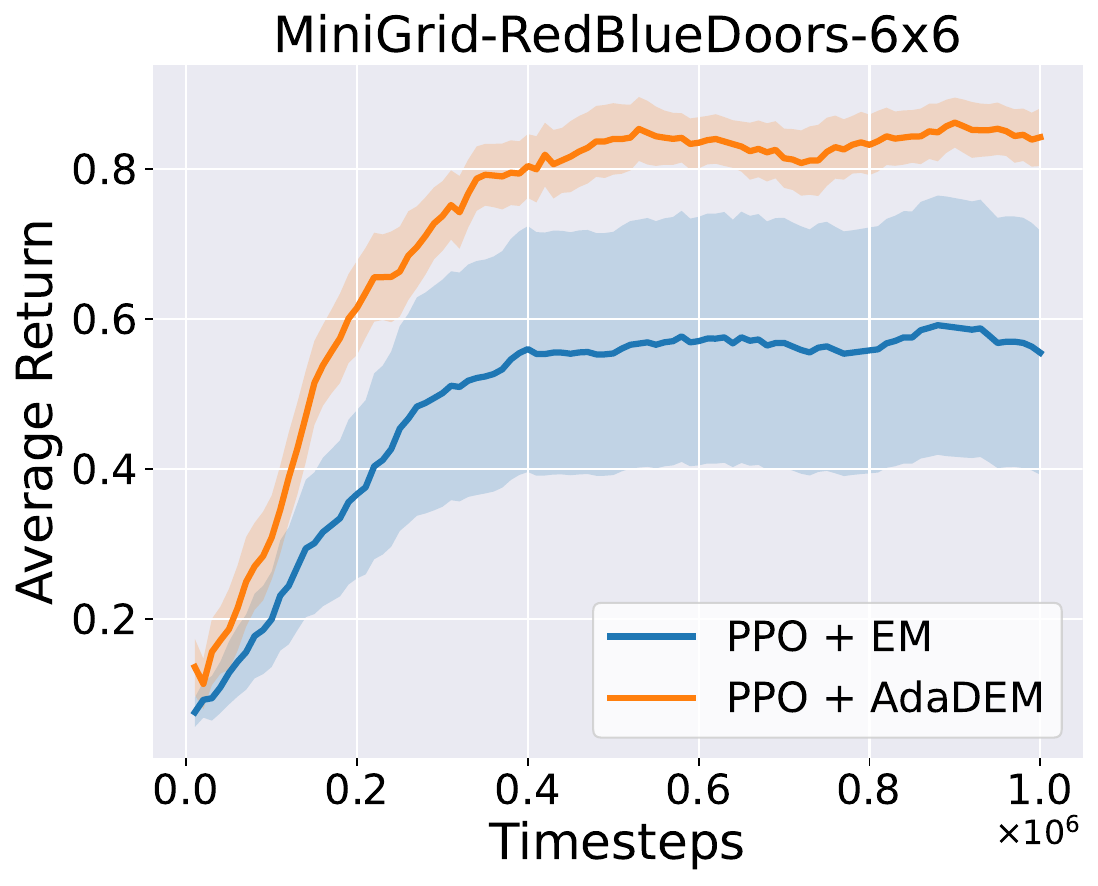}}
\end{minipage}
\hfill
\begin{minipage}{0.325\columnwidth}
\centerline{\includegraphics[width=\columnwidth]{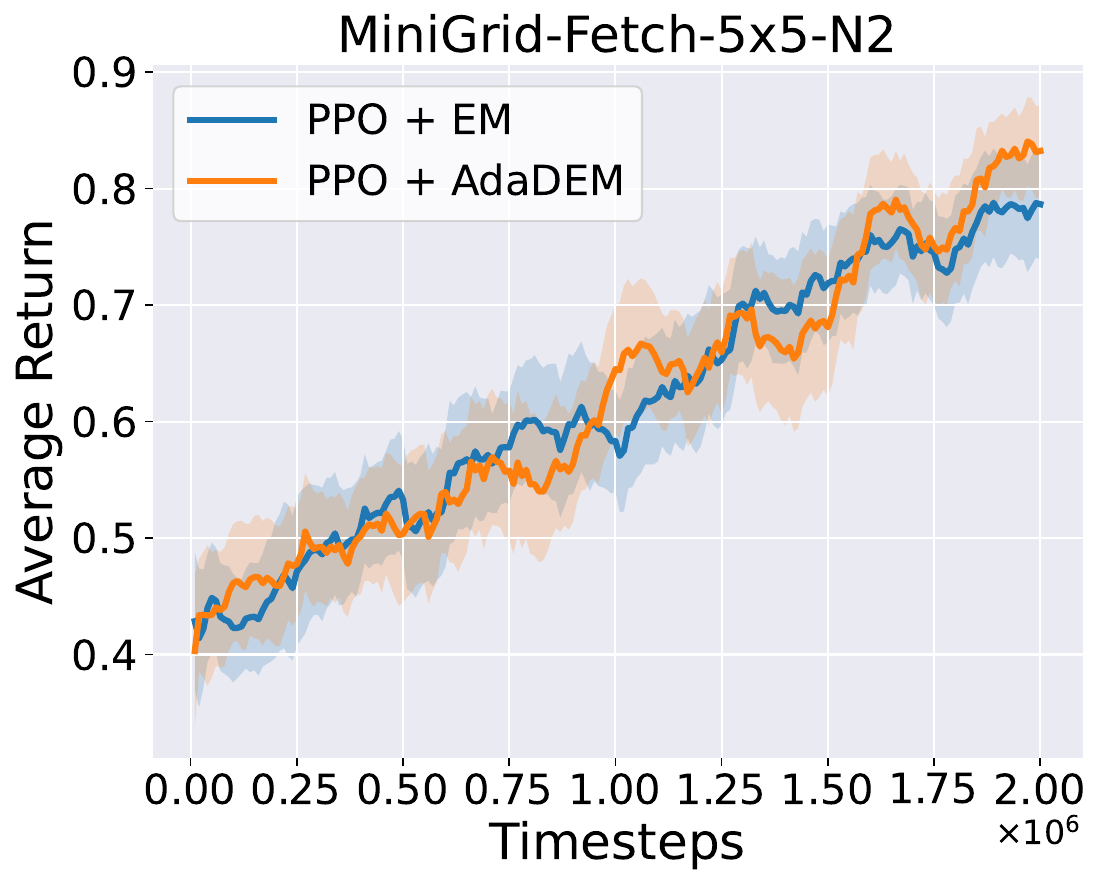}}
\end{minipage}
\hfill
\begin{minipage}{0.325\columnwidth}
\centerline{\includegraphics[width=\columnwidth]{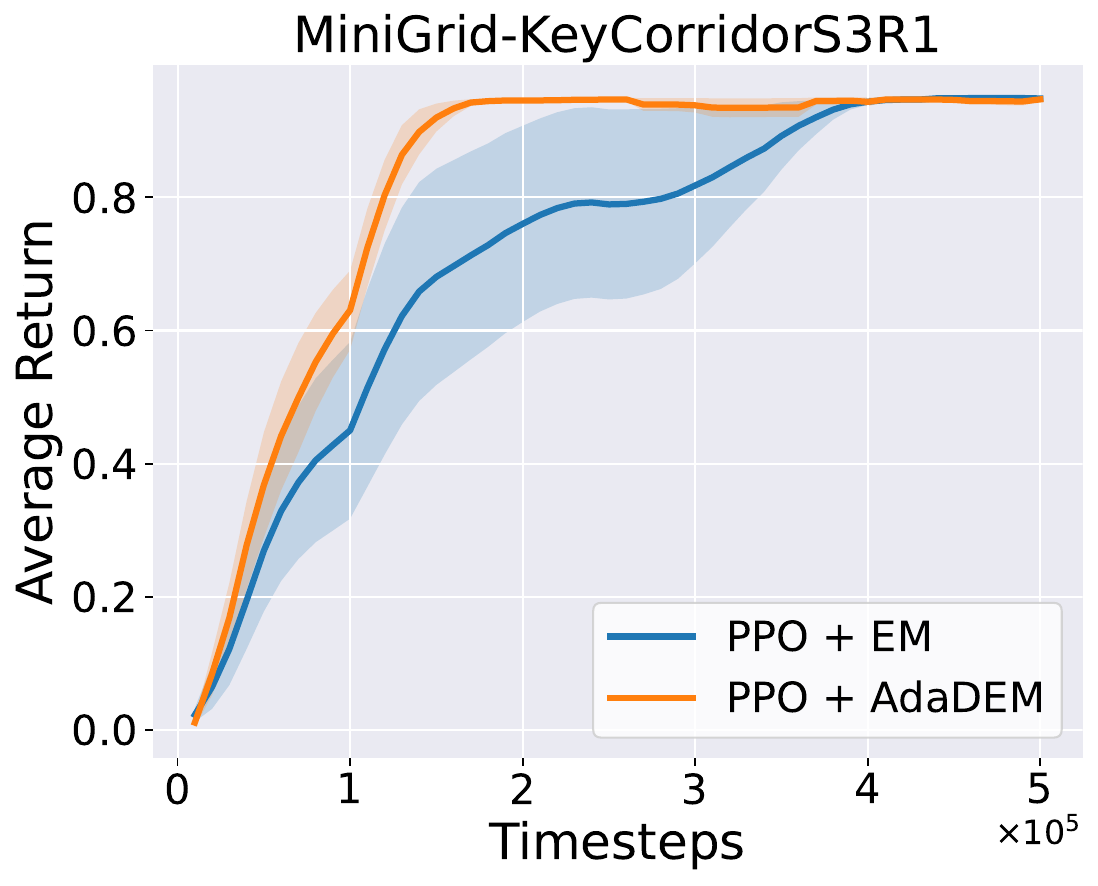}}
\end{minipage}

\begin{minipage}{0.325\columnwidth}
\centerline{\includegraphics[width=\columnwidth]{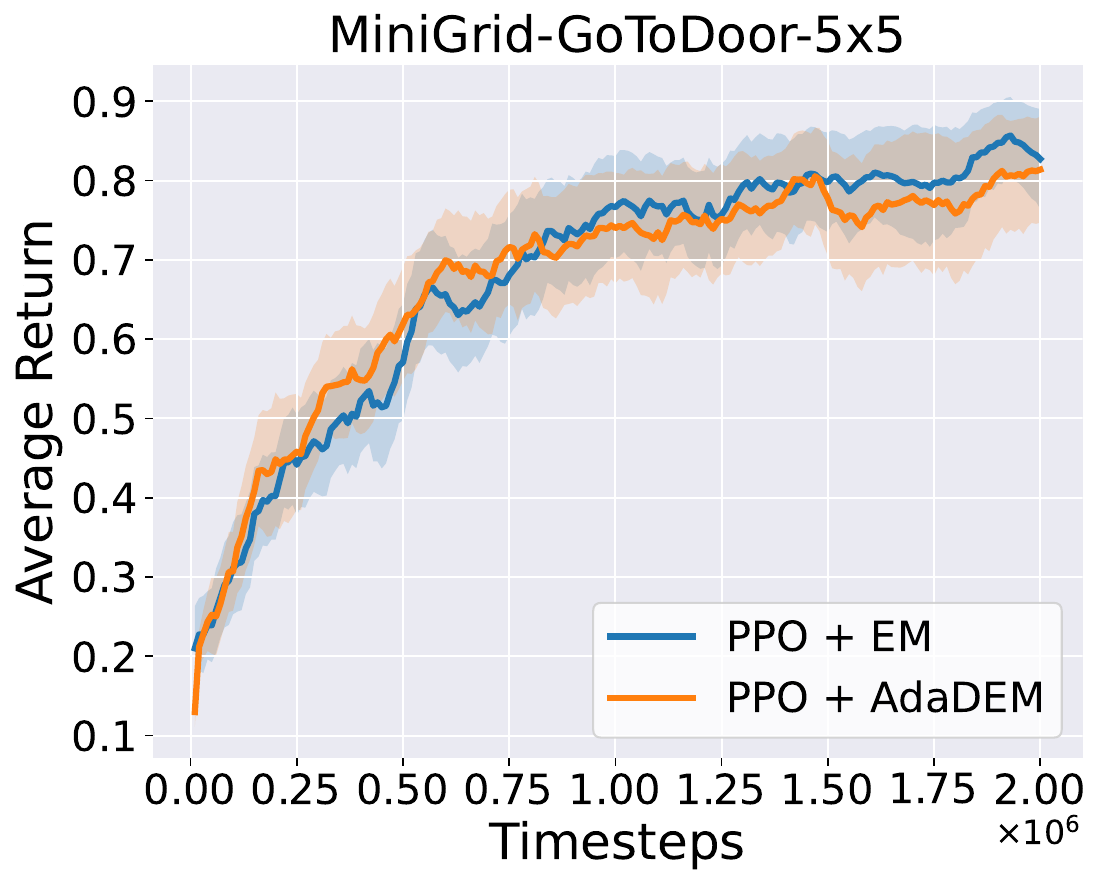}}
\end{minipage}
\hfill
\begin{minipage}{0.325\columnwidth}
\centerline{\includegraphics[width=\columnwidth]{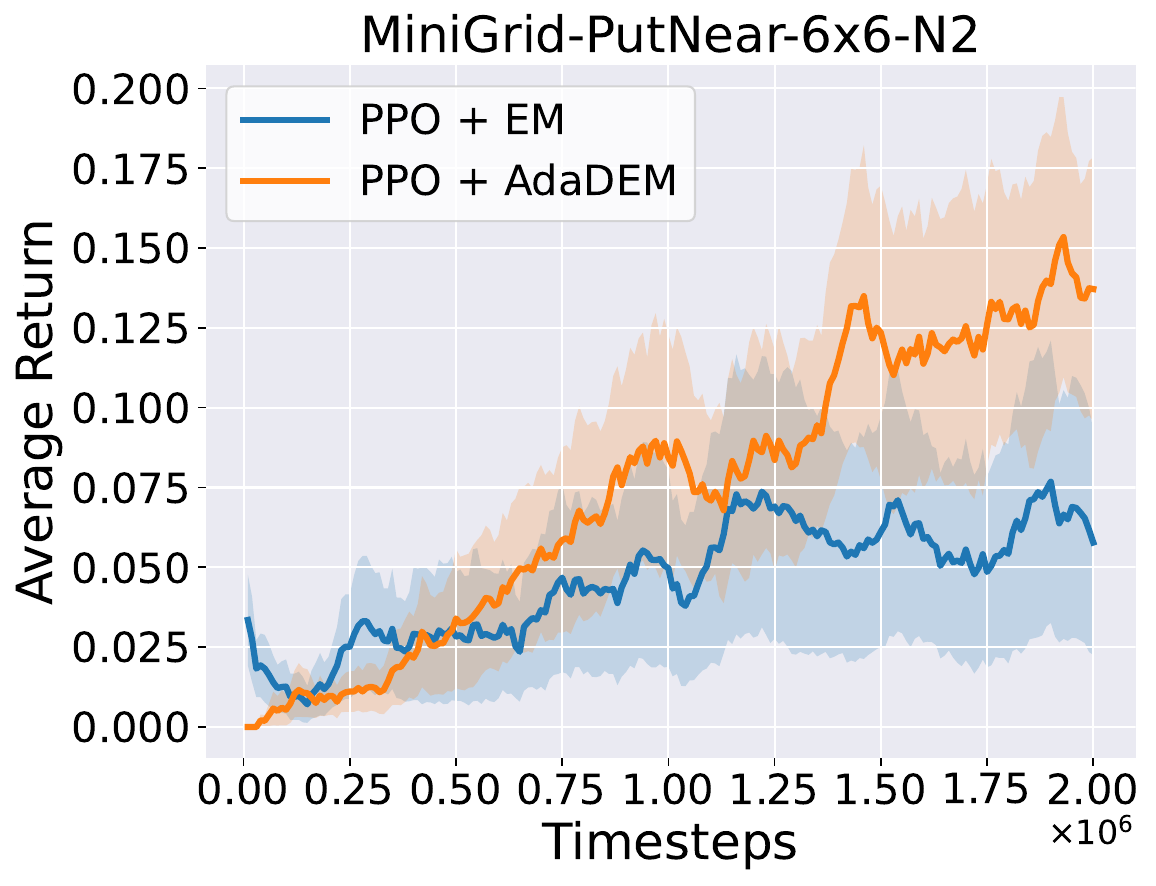}}
\end{minipage}
\hfill
\begin{minipage}{0.325\columnwidth}
\centerline{\includegraphics[width=\columnwidth]{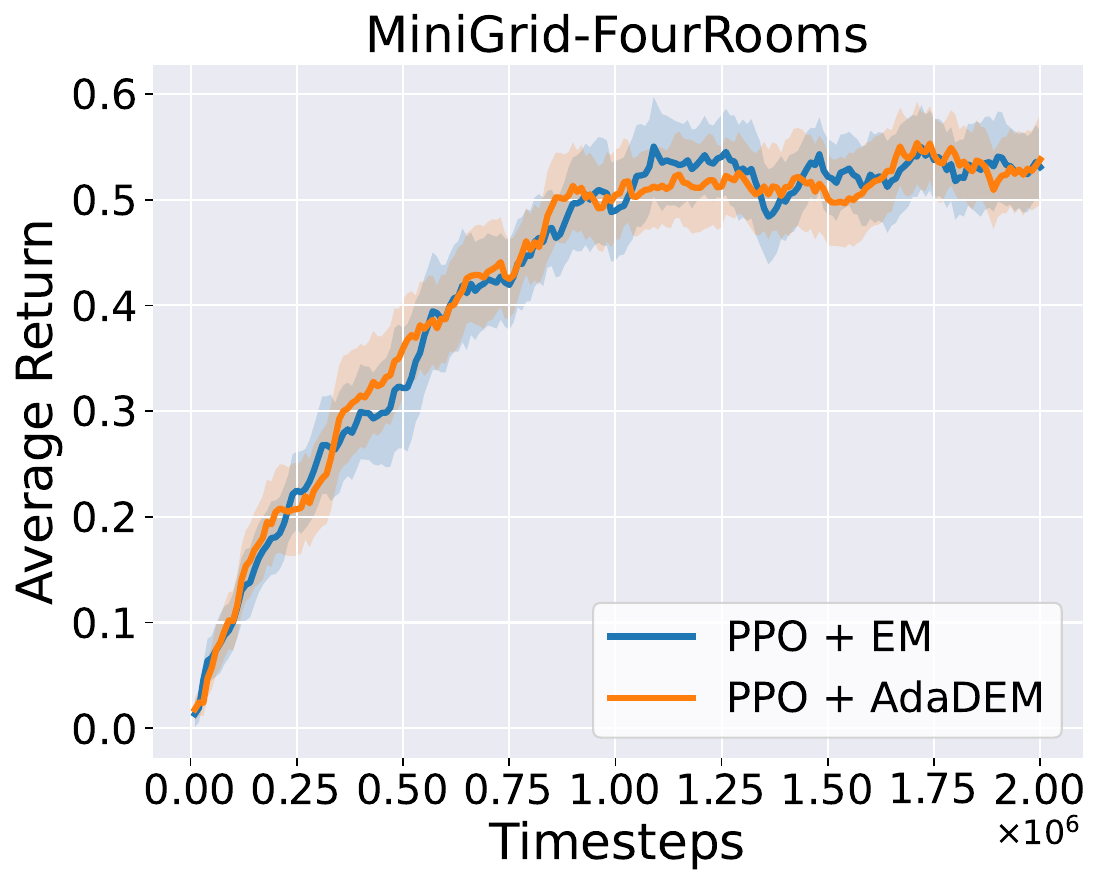}}
\end{minipage}

\end{center}
\vskip -0.1in
\caption{The curves of average return with respect to timesteps in $9$ environments of Minigrid for Reinforcement Learning.}
\label{appendix_fig_reinforcement_learning}
\vspace{-5mm}
\end{figure*}

\subsection{Experiments on Reinforcement Learning}

For reinforcement learning, Entropy Minimization could increase the certainty of policies, thereby limiting the exploration of agents. 
Therefore, reinforcement learning methods commonly use the Entropy Maximization strategy to prevent premature convergence and encourage exploration. 
We aim to prove that AdaDEM is not limited to the Entropy Minimization strategy and is also applicable to the Entropy Maximization strategy in reinforcement learning.
We adopt Proximal Policy Optimization (PPO) as the baseline and compare the experimental results of adding the classical EM and our AdaDEM to PPO, as shown in Fig.~\ref{appendix_fig_reinforcement_learning}.
We keep the Entropy Maximization strategy unchanged. 
The experimental results demonstrate that replacing the classical EM with AdaDEM can effectively improve or maintain a comparable performance of reinforcement learning methods.

\section{Additional Discussions}
\label{appendix additional discussions}

\subsection{Effect of Temperature $\tau$ in DEM*}

As stated in Sec.~\ref{DEM method}, $\tau$ controls the contribution of samples with different certainties to model optimization by reshaping the reward curve. 
As shown in Fig.~\ref{abl for tau} (left), the larger the value of $\tau$ is, the higher the reward given to high-confident samples than to low-confident ones, which makes the model more inclined to learn high-confident samples. 
On the contrary, the lower the value of $\tau$ is, the more the model tends to learn low-confident samples. 
When DNNs classify all samples accurately, a lower value of $\tau$ promotes the model to rapidly improve the prediction probability of samples and accelerate the model optimization.
However, poorly calibrated DNNs often have very low accuracy on out-of-distribution samples. 
In this case, a low value of $\tau$ leads to incorrect model optimization. 
Therefore, setting $\tau$ to a value slightly larger than $1.0$ enables DNNs to stably and effectively learn from data distributions with low certainty (distributions that deviate significantly from the source training distribution).
Additionally, as shown in Fig.~\ref{abl for tau} (center), we find that the best value of $\tau$ is positively correlated with the average prediction probability of DNNs for samples in a specific distribution.
That is, when the model has a higher average prediction accuracy for a certain data distribution, increasing the value of $\tau$ maximizes the efficiency of DNNs in learning this data distribution within a limited optimization time.

\subsection{Effect of Weight $\alpha$ in DEM*}

In GMC, $\alpha$ suppresses the overfitting of DNNs to over-confident samples by scaling the penalty on logits. 
As shown in Fig.~\ref{abl_for_alpha} (left), increasing the value of $\alpha$ causes DNNs to impose greater penalties on samples with prediction probabilities approaching $1.0$.
For poorly calibrated DNNs, increasing the value of $\alpha$ reduces the disruption of over-confident samples to DNNs optimization. 
For data distributions with high prediction accuracy, decreasing the value of $\alpha$ promotes rapid model optimization and enhances the learning efficiency of high-confident samples. 
Therefore, a lower value of $\alpha$ is suitable for the aggressive optimization strategy, while a higher value of $\alpha$ is suitable for alleviating the catastrophic forgetting problem caused by DNNs overfitting. 
The ablation study in Fig.~\ref{abl_for_alpha} (center) validates this point.

\begin{table*}[t]
\caption{Optimal hyperparameters $(\tau^*, \alpha^*)$ of DEM* applied to different methods on various tasks, and the resulting performance improvement brought by DEM*.}
\label{appendix hyperparameters of DEM*}
\vspace{-3mm}
\begin{center}
\begin{small}
\resizebox{1.0\linewidth}{!}{
\setlength{\tabcolsep}{1.5mm}{
\begin{tabular}{c|cc|cc|cc|cc|cc|cc}
\toprule
Methods & 
Tent$^\dag$ (RN) & + DEM* &
Tent$^\dag$ (ViT) & + DEM* &
ETA & + DEM* & 
EATA & + DEM* &
DeYO & + DEM* &
SAR & + DEM* \\
\midrule
\multicolumn{13}{c}{Single-Domain Test-Time Adaptation} \\
\midrule
$\tau^*$ 
& 1.0 & 1.1
& 1.0 & 1.3
& 1.0 & 1.2
& 1.0 & 1.3
& 1.0 & 1.1
& 1.0 & 1.0   \\
$\alpha^*$ 
& 1.0 & 0.0 
& 1.0 & 1.8
& 1.0 & 1.2
& 1.0 & 0.2
& 1.0 & 1.6
& 1.0 & 2.0   \\
\midrule
Acc. 
& 40.0\tiny{±0.03} & 41.8\tiny{±0.05}
& 53.1\tiny{±0.65} & 56.0\tiny{±0.32}
& 65.1\tiny{±0.10} & 66.3\tiny{±0.04}
& 62.2\tiny{±0.14} & 64.4\tiny{±0.30}
& 62.6\tiny{±0.32} & 65.6\tiny{±0.03}
& 54.2\tiny{±0.07} & 57.9\tiny{±0.04}   \\
$\Delta$ 
& +0.0 & \textcolor{myDarkGreen}{+1.8}
& +0.0 & \textcolor{myDarkGreen}{+2.9}
& +0.0 & \textcolor{myDarkGreen}{+1.2}
& +0.0 & \textcolor{myDarkGreen}{+2.2}
& +0.0 & \textcolor{myDarkGreen}{+3.0}
& +0.0 & \textcolor{myDarkGreen}{+3.7}   \\

\midrule
\multicolumn{13}{c}{Continual Test-Time Adaptation} \\
\midrule
$\tau^*$ 
& 1.0 & 1.4
& 1.0 & 0.8
& 1.0 & 1.1
& 1.0 & 1.3
& 1.0 & 0.9
& 1.0 & 1.0  \\
$\alpha^*$ 
& 1.0 & 1.4
& 1.0 & 1.0
& 1.0 & 1.2
& 1.0 & 1.2
& 1.0 & 1.4
& 1.0 & 1.8   \\
\midrule
Acc. 
& 31.2\tiny{±0.11} & 39.0\tiny{±0.02} 
& 58.6\tiny{±0.09} & 64.1\tiny{±0.05}
& 64.2\tiny{±0.04} & 65.7\tiny{±0.04}
& 64.9\tiny{±0.08} & 66.2\tiny{±0.07}
& 57.6\tiny{±0.36} & 65.4\tiny{±0.12}
& 57.0\tiny{±0.05} & 62.4\tiny{±0.03}  
\\
$\Delta$ 
& +0.0 & \textcolor{myDarkGreen}{+7.8}
& +0.0 & \textcolor{myDarkGreen}{+5.5}
& +0.0 & \textcolor{myDarkGreen}{+1.5}
& +0.0 & \textcolor{myDarkGreen}{+1.3}
& +0.0 & \textcolor{myDarkGreen}{+7.8}
& +0.0 & \textcolor{myDarkGreen}{+5.4}
\\

\midrule
\midrule
Methods & 
\multicolumn{2}{c}{CLIP-RN50} &
\multicolumn{2}{c}{TPT} & 
\multicolumn{2}{c|}{+ DEM*} & 
\multicolumn{2}{c}{CLIP-ViT-B/16} &
\multicolumn{2}{c}{TPT} & 
\multicolumn{2}{c}{+ DEM*} \\
\midrule
\multicolumn{13}{c}{Test-Time Prompt Tuning} \\
\midrule
$\tau^*$ 
& \multicolumn{2}{c}{-} 
& \multicolumn{2}{c}{1.0} 
& \multicolumn{2}{c|}{1.0}
& \multicolumn{2}{c}{-}
& \multicolumn{2}{c}{1.0} 
& \multicolumn{2}{c}{0.8}  \\
$\alpha^*$ 
& \multicolumn{2}{c}{-}
& \multicolumn{2}{c}{1.0}
& \multicolumn{2}{c|}{0.6}
& \multicolumn{2}{c}{-}
& \multicolumn{2}{c}{1.0} 
& \multicolumn{2}{c}{1.0}  \\
\midrule
Acc. 
& \multicolumn{2}{c}{44.2\tiny{±0.00}}
& \multicolumn{2}{c}{47.1\tiny{±0.06}}
& \multicolumn{2}{c|}{47.4\tiny{±0.04}}
& \multicolumn{2}{c}{59.1\tiny{±0.00}}
& \multicolumn{2}{c}{62.4\tiny{±0.05}}
& \multicolumn{2}{c}{62.5\tiny{±0.06}}  \\
$\Delta$ 
& \multicolumn{2}{c}{+0.0} 
& \multicolumn{2}{c}{\textcolor{myDarkGreen}{+2.9}} 
& \multicolumn{2}{c|}{\textcolor{myDarkGreen}{+3.2}}
& \multicolumn{2}{c}{+0.0} 
& \multicolumn{2}{c}{\textcolor{myDarkGreen}{+3.3}} 
& \multicolumn{2}{c}{\textcolor{myDarkGreen}{+3.4}}  \\

\bottomrule
\end{tabular}}}
\end{small}
\end{center}
\vskip -0.15in
\end{table*}

\subsection{Optimal Hyperparameters $(\tau^*,\alpha^*)$ for DEM*}

In Table~\ref{appendix hyperparameters of DEM*}, we obtain the optimal hyperparameters $(\tau^*, \alpha^*)$ for different methods on various tasks through hyperparameter search by the fast TPE algorithm. 
For test-time prompt tuning, we employ well-calibrated CLIP models as the source models. 
It is suitable for leveraging an aggressive optimization strategy to enable CLIP models to quickly learn the target data distribution. 
Therefore, a lower value of $\tau$ or $\alpha$ is optimal.
We mainly use ViT-B/16 for TTA.
Hence, for single-domain TTA, a lower $\alpha$ and a higher $\tau$ are suitable for the rapid optimization of ViT-B/16 and enhance the contribution of high-confident samples to model optimization.
For continual TTA, higher values of $\tau$ and $\alpha$ prevent ViT-B/16 from overfitting to a Single static distribution and improve the model's adaptability in dynamically changing environments and its ability to resist catastrophic forgetting.

\begin{table*}[t]
\caption{Detailed ablation studies of DEM and AdaDEM in the single-domain TTA task. We highlight the highest accuracy in \textbf{bold} and the second best as \underline{underline}. $\Delta$ denotes the performance improvement relative to the baselines.}
\label{appendix detailed ablation studies in single-domain TTA}
\vspace{-3mm}
\begin{center}
\begin{small}
\resizebox{1.0\linewidth}{!}{
\setlength{\tabcolsep}{1.5mm}{
\begin{tabular}{l|ccc|cccc|cccc|cccc|cc}
\toprule
\multirow{3}{*}{Methods} & \multicolumn{15}{c|}{Single-Domain TTA} & \multirow{3}{*}{Mean} & \multirow{3}{*}{$\Delta$} \\
& \multicolumn{3}{c|}{Noise} & \multicolumn{4}{c|}{Blur} & \multicolumn{4}{c|}{Weather} & \multicolumn{4}{c|}{Digital} & \\
& Gauss & Shot & Impul & Defcs & Gls & Mtn & Zm & Snw & Frst & Fg & Brt & Cnt & Els & Px & Jpg &  \\
\midrule
\multicolumn{18}{c}{\textit{ResNet50}} \\
\midrule

NoAdapt 
& 15.3 & 15.9 & 15.8 & 15.1 & 15.2 & 26.4 & 38.9 & 34.3 & 33.1 & 48.1 & 65.2 & 16.8 & 44.1 & 48.8 & 39.6 & 31.5\tiny{±0.00} & - \\
EM (Tent) 
& 24.8 & 26.3 & 25.8 & 24.4 & 23.7 & 36.9 & 46.9 & 43.9 & 39.4 & 55.6 & 66.8 & 26.7 & 52.1 & 56.5 & 49.7 & 40.0\tiny{±0.03} & +0.0 \\
CADF
& 28.3 & 29.2 & 29.3 & 27.2 & 26.5 & 40.6 & 48.5 & 46.8 & 40.9 & 56.7 & 66.0 & 22.8 & 54.1 & 57.6 & 51.5 & 41.7\tiny{±0.05} & \textcolor{myDarkGreen}{+1.7} \\
DEM*
& 28.3 & 29.2 & 29.3 & 27.1 & 26.5 & 40.7 & 48.5 & 46.8 & 41.0 & 56.8 & 66.0 & 23.2 & 54.2 & 57.6 & 51.5 & 41.8\tiny{±0.05} & \textcolor{myDarkGreen}{+1.8} \\
AdaDEM-Norm
& 30.5 & 32.7 & 32.0 & 29.6 & 28.5 & \underline{44.0} & \textbf{50.7} & 49.2 & 41.0 & \textbf{58.6} & \textbf{67.6} & 22.1 & \textbf{56.4} & \textbf{59.7} & \textbf{53.6} & 43.7\tiny{±0.10} & \textcolor{myDarkGreen}{+3.7} \\
AdaDEM-MEC 
& \underline{31.7} & \textbf{34.2} & \textbf{33.1} & \underline{30.3} & \underline{29.6} & \textbf{44.8} & 50.0 & \textbf{49.5} & \underline{41.8} & 57.9 & 66.0 & \underline{30.0} & 55.6 & 58.6 & \underline{53.0} & \underline{44.4}\tiny{±0.04} & \textcolor{myDarkGreen}{+\underline{4.4}} \\
AdaDEM 
& \textbf{31.8} & \underline{33.9} & \underline{33.0} & \textbf{30.6} & \textbf{30.0} & \underline{44.0} & \underline{50.3} & \underline{49.3} & \textbf{42.8} & \underline{58.0} & \underline{66.2} & \textbf{34.2} & \underline{55.7} & \underline{58.7} & \underline{53.0} & \textbf{44.8}\tiny{±0.05} & \textcolor{myDarkGreen}{+\textbf{4.8}} \\

\midrule
\multicolumn{18}{c}{\textit{ViT-B/16}} \\
\midrule

NoAdapt 
& 29.1 & 29.6 & 31.6 & 31.2 & 25.1 & 39.3 & 31.5 & \textbf{24.6} & 30.2 & 54.3 & 64.5 & 48.4 & 34.2 & 52.5 & 55.1 & 38.8\tiny{±0.00} & - \\
EM (Tent)
& 54.2 & 54.8 & 55.4 & \underline{56.6} & 54.0 & 60.8 & 34.2 & 3.8 & 9.6 & 70.9 & 76.5 & \textbf{69.4} & 59.5 & 70.0 & 67.2 & 53.1\tiny{±0.65} & +0.0 \\
CADF
& 50.1 & 50.6 & 51.6 & 49.0 & 44.3 & 55.0 & 46.1 & 6.6 & 15.0 & 64.2 & 72.8 & 65.3 & 33.0 & 63.4 & 61.6 & 48.6\tiny{±0.42} & \textcolor{myDarkRed}{-4.5} \\ 
DEM*
& 51.7 & 51.9 & 53.3 & 54.8 & 51.2 & 58.7 & 50.8 & \underline{14.7} & 44.3 & 69.9 & \underline{77.0} & 68.5 & 57.3 & 69.0 & 66.8 & 56.0\tiny{±0.32} & \textcolor{myDarkGreen}{+2.9} \\
AdaDEM-Norm 
& 54.3 & 55.2 & 55.6 & 56.5 & 54.2 & 61.1 & 41.0 & 3.7 & 13.4 & 71.2 & 76.4 & \underline{69.3} & 60.5 & 70.3 & 67.4 & 54.0\tiny{±0.45} & \textcolor{myDarkGreen}{+0.9} \\
AdaDEM-MEC
& \underline{56.0} & \underline{57.1} & \underline{57.0} & 41.4 & \underline{59.1} & \underline{64.1} & \underline{61.2} & 2.7 & \underline{62.5} & \underline{73.4} & \underline{77.0} & 50.2 & \underline{67.9} & \underline{71.9} & \underline{69.7} & \underline{58.1}\tiny{±0.23} & \textcolor{myDarkGreen}{+\underline{5.0}} \\
AdaDEM 
& \textbf{57.2} & \textbf{58.2} & \textbf{58.1} & \textbf{58.9} & \textbf{60.2} & \textbf{65.5} & \textbf{63.3} & 2.0 & \textbf{66.0} & \textbf{73.7} & \textbf{78.0} & 68.7 & \textbf{69.2} & \textbf{73.3} & \textbf{70.4} & \textbf{61.5}\tiny{±0.20} & \textcolor{myDarkGreen}{+\textbf{8.4}} \\

\bottomrule
\end{tabular}}}
\end{small}
\end{center}
\vskip -0.13in
\end{table*}

\begin{table*}[t]
\caption{Detailed ablation studies of DEM and AdaDEM in the continual TTA task. We highlight the highest accuracy in \textbf{bold} and the second best as \underline{underline}. $\Delta$ denotes the performance improvement relative to the baselines.}
\label{appendix detailed ablation studies on continual TTA}
\vspace{-3mm}
\begin{center}
\begin{small}
\resizebox{1.0\linewidth}{!}{
\setlength{\tabcolsep}{1.5mm}{
\begin{tabular}{l|ccc|cccc|cccc|cccc|cc}
\toprule
\multirow{4}{*}{Methods} & \multicolumn{15}{c|}{Continual TTA} & \multirow{4}{*}{Mean} & \multirow{4}{*}{$\Delta$} \\
& \multicolumn{15}{c|}{Time $\xrightarrow{\ \ \ \ \ \ \ \ \ \ \ \ \ \ \ \ \ \ \ \ \ \ \ \ \ \ \ \ \ \ \ \ \ \ \ \ \ \ \ \ \ \ \ \ \ \ \ \ \ \ \ \ \ \ \ \ \ \ \ \ \ \ \ \ \ \ \ \ \ \ \ \ \ \ \ \ \ \ \ \ \ \ \ \ \ \ \ \ \ \ \ \ \ \ \ \ \ \ \ \ \ \ \ \ \ \ \ \ \ \ \ \ \ \ \ \ \ \ \ \ \ \ \ \ \ \ \ \ \ \ \ \ \ \ \ \ \ \ \ \ \ \ \ \ \ \ \ \ }$} & \\
& \multicolumn{3}{c|}{Noise} & \multicolumn{4}{c|}{Blur} & \multicolumn{4}{c|}{Weather} & \multicolumn{4}{c|}{Digital} & \\
& Gauss & Shot & Impul & Defcs & Gls & Mtn & Zm & Snw & Frst & Fg & Brt & Cnt & Els & Px & Jpg &  \\
\midrule
\multicolumn{18}{c}{\textit{ResNet50}} \\
\midrule

NoAdapt 
& 15.3 & 15.9 & 15.8 & 15.1 & 15.2 & 26.4 & 38.9 & 34.3 & 33.1 & 48.1 & \textbf{65.2} & 16.8 & 44.1 & 48.8 & 39.6 & 31.5\tiny{±0.00} & - \\
EM (Tent)
& \underline{24.8} & \textbf{32.9} & \textbf{32.7} & \textbf{24.2} & 25.9 & 30.2 & 37.7 & 30.2 & 28.3 & 36.4 & 49.2 & 17.7 & 32.5 & 35.3 & 30.0 & 31.2\tiny{±0.11} & +0.0 \\
CADF 
& 18.1 & 24.2 & 26.6 & 22.6 & 25.7 & 33.8 & 44.1 & 36.9 & 36.4 & 47.2 & 59.9 & 26.3 & 47.1 & 49.3 & 43.8 & 36.1\tiny{±0.03} & \textcolor{myDarkGreen}{+4.9} \\
DEM* 
& 18.7 & 26.2 & 28.9 & \underline{23.9} & 26.9 & \textbf{36.5} & \textbf{45.6} & \textbf{39.8} & \textbf{39.5} & \textbf{49.8} & \underline{62.8} & \textbf{32.8} & \textbf{49.6} & \textbf{54.0} & \textbf{49.3} & \textbf{39.0}\tiny{±0.02} & \textcolor{myDarkGreen}{+\textbf{7.8}} \\
AdaDEM-Norm
& 18.9 & 25.2 & 27.1 & 22.6 & 25.6 & 34.3 & \underline{44.6} & \underline{38.0} & \underline{37.9} & \underline{48.7} & 62.2 & 28.4 & \underline{49.3} & \underline{52.8} & \underline{47.4} & 37.5\tiny{±0.05} & \textcolor{myDarkGreen}{+6.3} \\
AdaDEM-MEC
& 20.5 & 28.0 & 29.5 & 23.5 & \underline{27.0} & 34.2 & 43.8 & 36.9 & 37.0 & 47.2 & 60.5 & 27.8 & 48.5 & 51.5 & 46.5 & 37.5\tiny{±0.04} & \textcolor{myDarkGreen}{+6.3} \\
AdaDEM 
& \textbf{20.5} & \underline{28.3} & \underline{29.9} & 23.8 & \textbf{27.5} & \underline{34.5} & 44.0 & 36.9 & 36.9 & 47.4 & 60.3 & \underline{28.7} & 48.6 & 51.5 & 46.7 & \underline{37.7}\tiny{±0.05} & \textcolor{myDarkGreen}{+\underline{6.5}} \\

\midrule
\multicolumn{18}{c}{\textit{ViT-B/16}} \\
\midrule

NoAdapt & 29.1 & 29.6 & 31.6 & 31.2 & 25.1 & 39.3 & 31.5 & 24.6 & 30.2 & 54.3 & 64.5 & 48.4 & 34.2 & 52.5 & 55.1 & 38.8\tiny{±0.00} & - \\
EM (Tent)
& 49.4 & 54.2 & 56.3 & 46.9 & 48.1 & 56.7 & 52.0 & 55.7 & 61.0 & 68.2 & 77.2 & 64.5 & 52.8 & 68.1 & 67.5 & 58.6\tiny{±0.09} & +0.0 \\
CADF
& 50.1 & 54.8 & 56.8 & 53.2 & \underline{57.4} & \underline{61.9} & \underline{60.1} & \underline{64.9} & \textbf{66.4} & \underline{71.5} & 77.0 & \textbf{66.7} & \underline{64.0} & 70.7 & 68.9 & 63.0\tiny{±0.01} & \textcolor{myDarkGreen}{+4.4} \\
DEM*
& 50.8 & 57.1 & \underline{59.3} & \textbf{56.4} & \textbf{58.7} & \textbf{62.6} & \textbf{62.0} & \textbf{65.9} & \underline{66.2} & \textbf{72.0} & 76.6 & \underline{66.6} & \textbf{66.5} & \textbf{71.4} & \underline{69.1} & \textbf{64.1}\tiny{±0.05} & \textcolor{myDarkGreen}{+\textbf{5.5}} \\
AdaDEM-Norm 
& 48.9 & 54.4 & 56.5 & 47.7 & 49.5 & 57.4 & 53.1 & 58.1 & 62.0 & 68.5 & 77.3 & 65.1 & 55.2 & 68.4 & 67.9 & 59.3\tiny{±0.04} & \textcolor{myDarkGreen}{+0.7} \\
AdaDEM-MEC 
& \underline{54.4} & \underline{58.0} & 58.7 & 52.7 & 55.5 & 60.6 & 57.2 & 62.4 & 64.1 & 70.1 & \textbf{77.7} & 65.2 & 59.9 & 70.7 & 68.8 & 62.4\tiny{±0.14} & \textcolor{myDarkGreen}{+3.8} \\
AdaDEM
& \textbf{54.9} & \textbf{59.2} & \textbf{60.1} & \underline{53.3} & 56.9 & 61.7 & 58.6 & 63.3 & 65.3 & 70.9 & \underline{77.6} & 64.3 & 61.7 & \underline{71.3} & \textbf{69.3} & \underline{63.2}\tiny{±0.16} & \textcolor{myDarkGreen}{+\underline{4.6}} \\

\bottomrule
\end{tabular}}}
\end{small}
\end{center}
\vskip -0.22in
\end{table*}

\subsection{Ablation Studies of DEM and AdaDEM}

In Tables~\ref{appendix detailed ablation studies in single-domain TTA} and \ref{appendix detailed ablation studies on continual TTA}, we report the detailed ablation experiment results of DEM and AdaDEM, respectively. 
Different from the official Tent \cite{wang2020tent}, which uses SGD with Momentum as the optimizer, we employ vanilla SGD to verify the learning efficiency of TTA algorithms on real-time incoming test samples, while avoiding the influence of historical gradients on the learning of current samples.
This implementation leads to better accuracies than the official Tent.
In some cases, applying CADF alone improves the performance of DNNs on the single-domain TTA task. 
However, CADF suffers from catastrophic forgetting in dynamically changing environments. 
Therefore, DEM* searches the optimal hyperparameters $(\tau^*, \alpha^*)$ to stabilize the model optimization process and enhance the model's learning efficiency for the target distribution.
AdaDEM-Norm solves the reward collapse phenomenon in the classical EM by introducing $\delta$, the L1-norm of the reward brought from CADF. 
It improves the performance of Tent. 
Meanwhile, AdaDEM-MEC uses the estimated prior label distribution in the marginal entropy calibrator to alleviate the model's overfitting to dominant classes, resulting in performance improvement on continual tasks.
AdaDEM comprehensively utilizes AdaDEM-Norm and AdaDEM-MEC as alternatives to the classical EM and DEM. 
It avoids the additional computational overhead caused by hyperparameter search and achieves the best performance on both single-domain and continual tasks.

\begin{figure*}[t]
\vskip 0.1in
\begin{center}
\begin{minipage}{0.49\columnwidth}
\centerline{\includegraphics[width=\columnwidth]{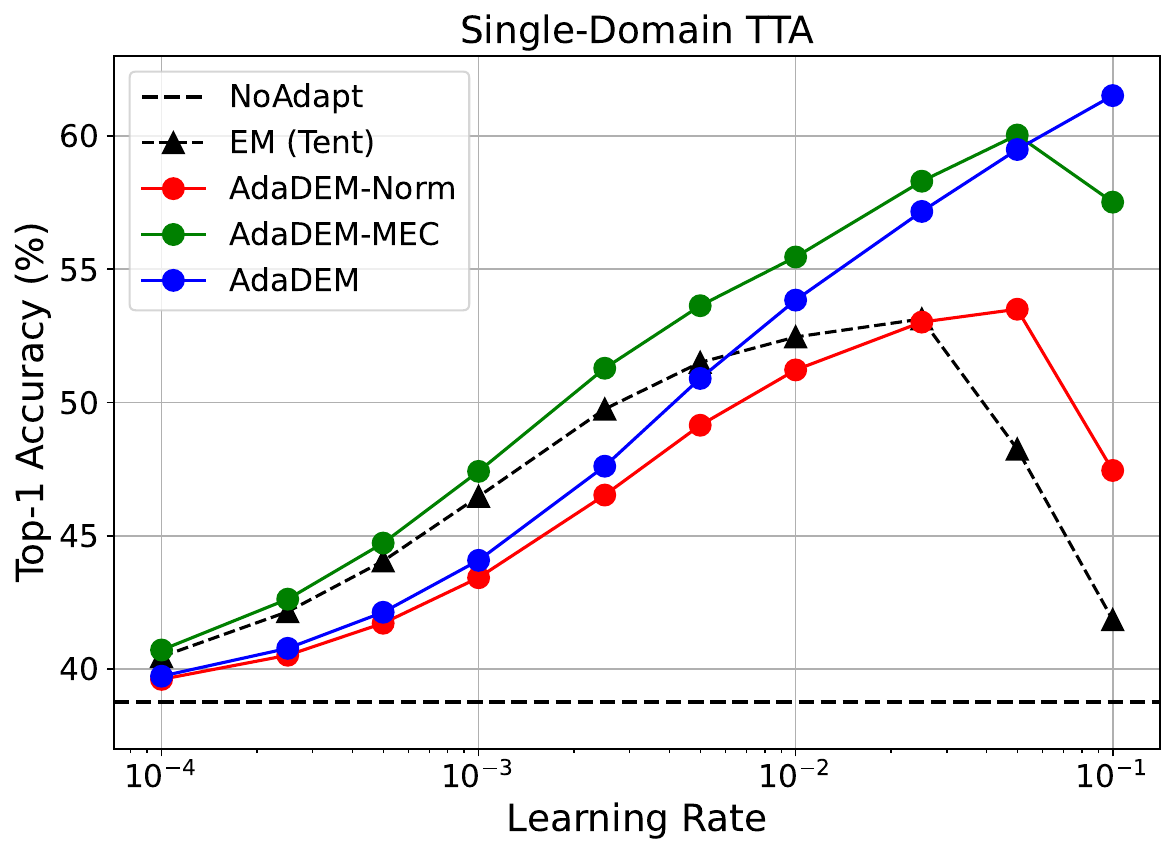}}
\end{minipage}
\hfill
\begin{minipage}{0.49\columnwidth}
\centerline{\includegraphics[width=\columnwidth]{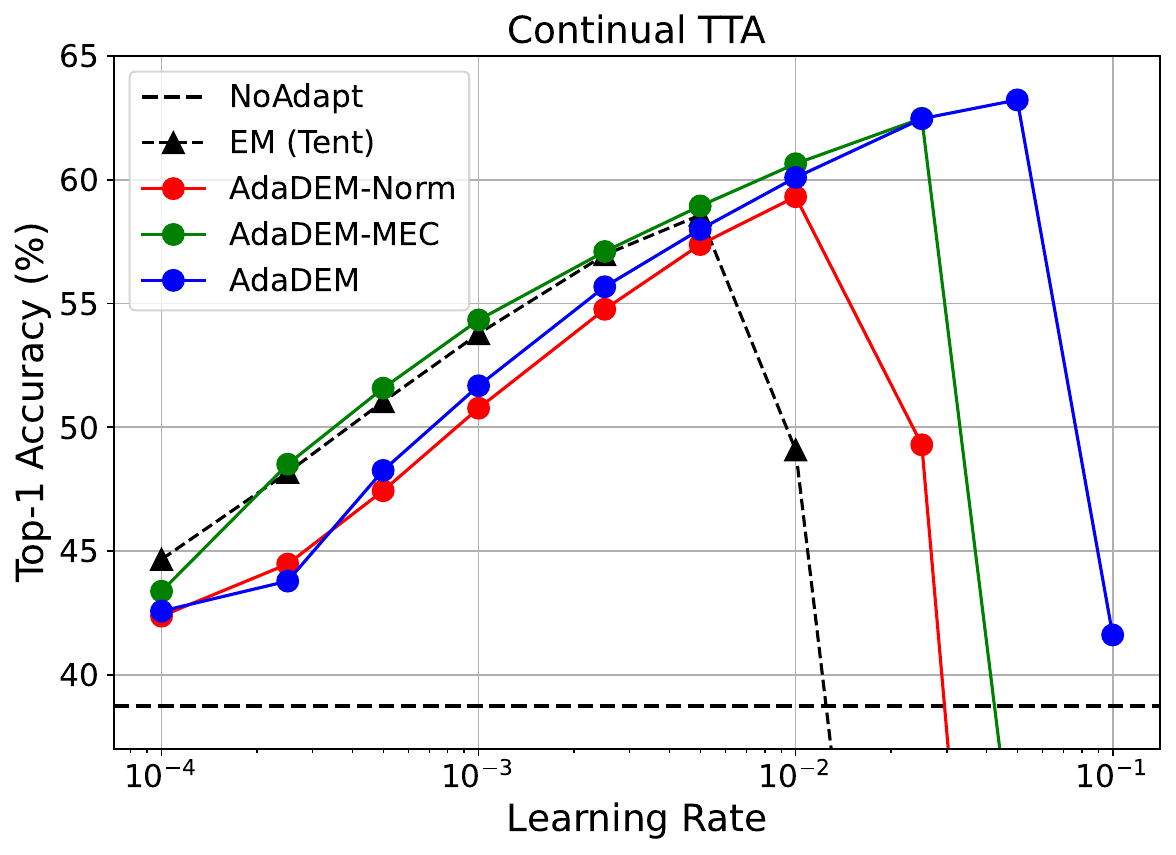}}
\end{minipage}
\end{center}
\vskip -0.1in
\caption{Comparison of the sensitivity to the learning rate among the classical EM, AdaDEM-Norm, AdaDEM-MEC, and AdaDEM for single-domain and continual TTA tasks.}
\label{appendix_sensitivity_of_learning_rate}
\vspace{-1mm}
\end{figure*}

\subsection{Sensitivity of AdaDEM to Optimizers and Learning Rates}

Setting different learning rates for the optimizer leads to different learning efficiencies of the model on unlabeled samples. 
A larger learning rate corresponds to an aggressive optimization strategy, which can effectively promote the model's learning of limited online samples, improving the speed at which DNNs adapt to the target distributions.
But it may also cause the model to easily overfit to a single data distribution, resulting in the catastrophic forgetting problem.
A smaller learning rate adopts a more conservative optimization approach, accelerating the model's adaptation to the target distribution while ensuring that the performance of DNNs does not degrade significantly.
As shown in Fig.~\ref{appendix_sensitivity_of_learning_rate}, we compare the sensitivities of the classical EM, AdaDEM-Norm, AdaDEM-MEC, and AdaDEM to different learning rates. 
Existing TTA algorithms, especially Tent, which simply uses the classical EM to optimize DNNs, are susceptible to the settings of the optimizer and learning rate, making it necessary to carefully adjust the optimization strategy. 
However, we demonstrate that AdaDEM normalizes the contributions of samples with different certainties to model optimization and introduces the estimated prior label distribution as a regularizer.
It alleviates the model's tendency to optimize low-confident samples and the overwhelming influence of dominate classes. 
Therefore, the sensitivity of AdaDEM to the learning rate is significantly lower than that of the classical EM.

\begin{table*}[t]
\caption{Ablation study on the role of $\delta$. We compare two methods for computing: $\delta$ derived from the reward brought from CADF, and $\delta_v$ derived from the reward brought from overall conditional entropy. AdaDEM and AdaDEM-V denote the approaches using $\delta$ and $\delta_v$ as reweighting factors.}
\label{ablation study on delta}
\vspace{-2mm}
\begin{center}
\begin{small}
\resizebox{1.0\linewidth}{!}{
\setlength{\tabcolsep}{2mm}{
\begin{tabular}{c|ccc|c|ccc}
\toprule
\multirow{2}{*}{LR} & \multicolumn{3}{c|}{Single-Domain TTA} & \multirow{2}{*}{LR} & \multicolumn{3}{c}{Continual TTA} \\
& Classical EM & AdaDEM & AdaDEM-V & & Classical EM & AdaDEM & AdaDEM-V \\

\midrule

0.0001 & 40.5 & 39.7 & \underline{39.9} & 0.0001 & 44.7 & 42.6 & \textbf{42.7} \\
0.00025 & 42.2 & 40.8 & \textbf{41.0} & 0.00025 & 48.2 & 43.8 & \underline{31.2} \\
0.0005 & 44.1 & 42.1 & 39.4 & 0.0005 & 51.0 & 48.3 & 26.0 \\
0.001 & 46.5 & 44.1 & 37.7 & 0.001 & 53.8 & 51.7 & 10.8 \\
0.0025 & 49.8 & 47.6 & 29.2 & 0.0025 & \underline{57.0} & 55.7 & 3.8 \\
0.005 & 51.5 & 50.9 & 14.6 & 0.005 & \textbf{58.6} & 58.0 & 1.5 \\
0.01 & \underline{52.5} & 53.8 & 6.3 & 0.01 & 49.1 & 60.1 & 0.7 \\
0.025 & \textbf{53.1} & 57.2 & 2.2 & 0.025 & 6.2 & \underline{62.5} & 0.3 \\
0.05 & 48.3 & \underline{59.5} & 1.2 & 0.05 & 2.3 & \textbf{63.2} & 0.2 \\
0.1 & 41.9 & \textbf{61.5} & 0.7 & 0.1 & 1.1 & 41.6 & 0.2 \\

\bottomrule
\end{tabular}}}
\end{small}
\end{center}
\vskip -0.15in
\end{table*}

\begin{figure*}[t]
\vskip 0.1in
\begin{center}
\begin{minipage}{\columnwidth}
\centerline{\includegraphics[width=0.85\columnwidth]{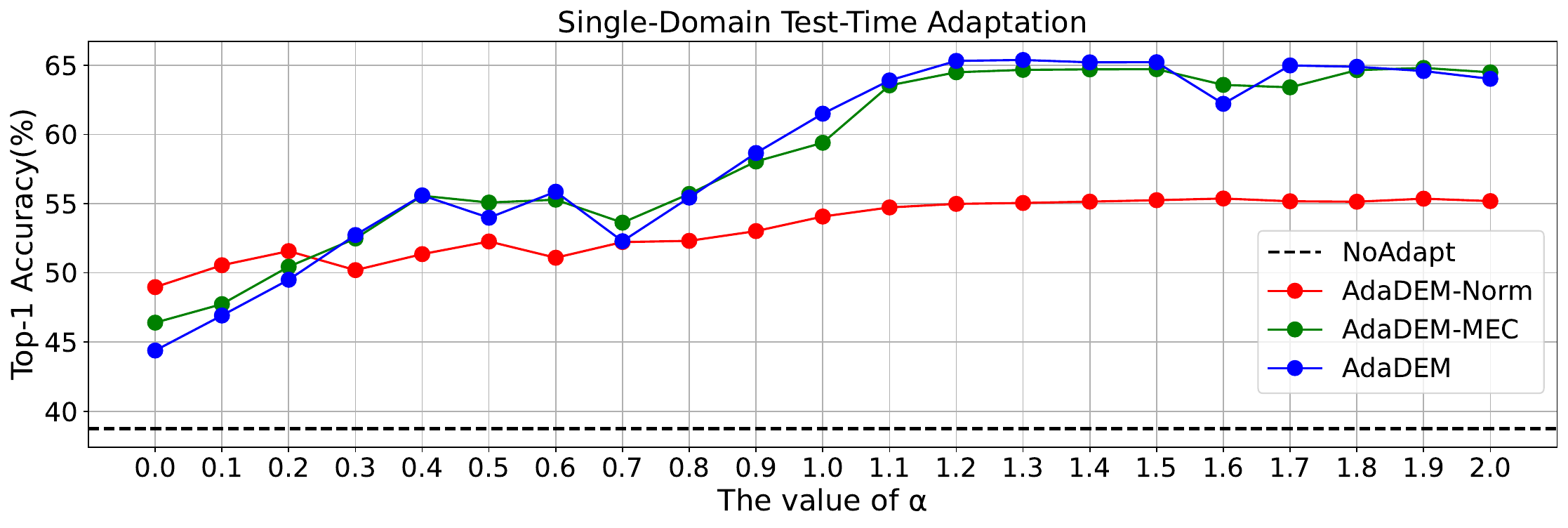}}
\end{minipage}

\vspace{1mm}
\begin{minipage}{\columnwidth}
\centerline{\includegraphics[width=0.85\columnwidth]{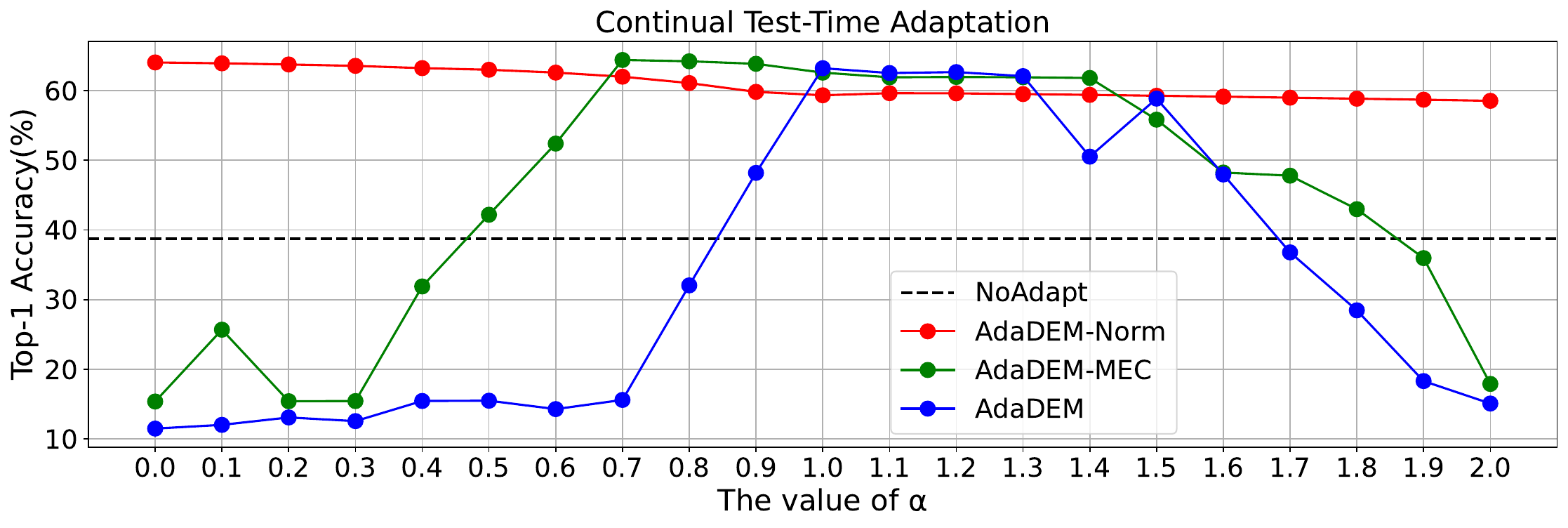}}
\end{minipage}
\end{center}
\vskip -0.1in
\caption{Ablation study of adding different $\alpha$ to scale MEC in AdaDEM.}
\label{appendix_adding_alpha_to_adadem}
\vspace{-4mm}
\end{figure*}

\subsection{Ablation Study on the Role of $\delta$ in AdaDEM}
\label{sec: ablation study of delta}

In Sec.~\ref{adadem method}, the parameter $\delta$ is introduced to reweight the conditional entropy $H(\mathbf{z})$ in AdaDEM. 
$\delta$ is defined as the L1-norm of the reward brought from CADF (\textit{i.e.}, $\delta = \|- \partial T(\mathbf{z} | x, \theta) / \partial \mathbf{z} \|_1$), which quantifies the extent of changes in the model’s output logits before and after EM optimization.
Another common reweighting method uses the L1-norm of the reward from the overall loss function as the reweighting factor, specifically employing the partial derivative of the conditional entropy $H(\mathbf{z})$ with respect to the logits $\mathbf{z}$, \textit{i.e.}, $\delta_v = \|- \partial H(\mathbf{z} | x, \theta) / \partial \mathbf{z} \|_1$.
We compare these two methods for constructing AdaDEM using $\delta$ and $\delta_v$, respectively. 
Experimental results in Table~\ref{ablation study on delta} demonstrate that employing $\delta_v$ leads to degraded performance and robustness, whereas $\delta$ must be computed from CADF gradients. 
Since $\delta_v$ incorporates gradients from the overall loss function, it risks introducing penalties from components such as GMC or MEC into the normalization process. 
This may undermine the targeted mitigation of reward collapse by diluting CADF-specific rewards and altering optimization dynamics, 
such as misaligning sample contributions or impairing robustness in noisy or dynamic environments.

\subsection{Impact of Adding $\alpha$ to Scale MEC in AdaDEM}

Similar to introducing $\alpha$ in the GMC of DEM to scale the penalty on logits as mentioned in Sec.~\ref{DEM method}, we can also introduce an $\alpha$ in AdaDEM to scale the effect of MEC. 
We conduct an ablation study on adding $\alpha$ to AdaDEM for single and continual TTA tasks, as shown in Fig.~\ref{appendix_adding_alpha_to_adadem}.
The results indicate that AdaDEM-Norm is less sensitive to the value of $\alpha$ than AdaDEM-MEC.
Meanwhile, changing the value of $\alpha$ also improves the performance of AdaDEM, but additional computational overhead is required to determine an appropriate value for $\alpha$.

\begin{figure*}[t]
\vskip 0.1in
\begin{center}
\begin{minipage}{0.49\columnwidth}
\centerline{\includegraphics[width=\columnwidth]{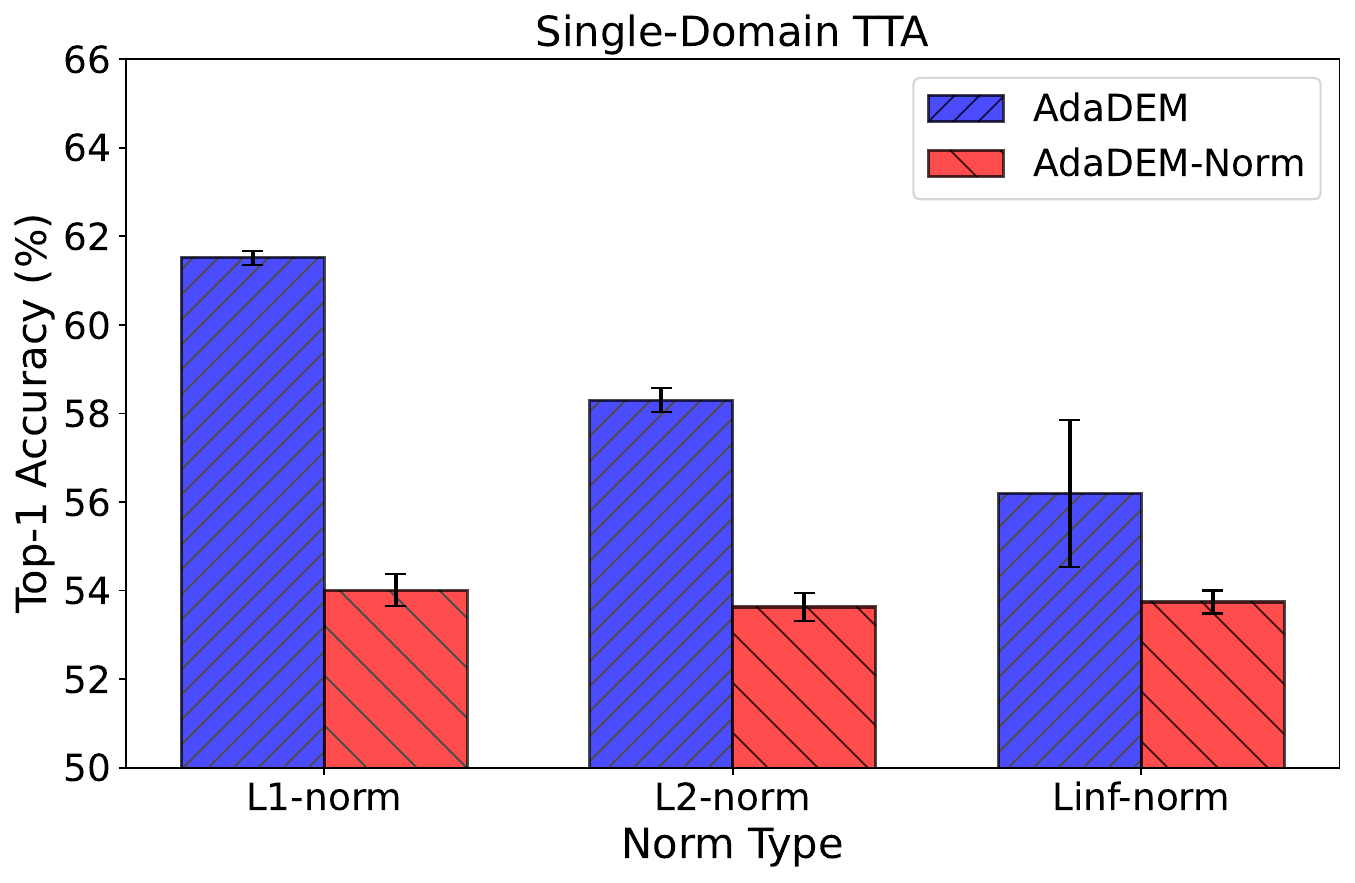}}
\end{minipage}
\hfill
\begin{minipage}{0.49\columnwidth}
\centerline{\includegraphics[width=\columnwidth]{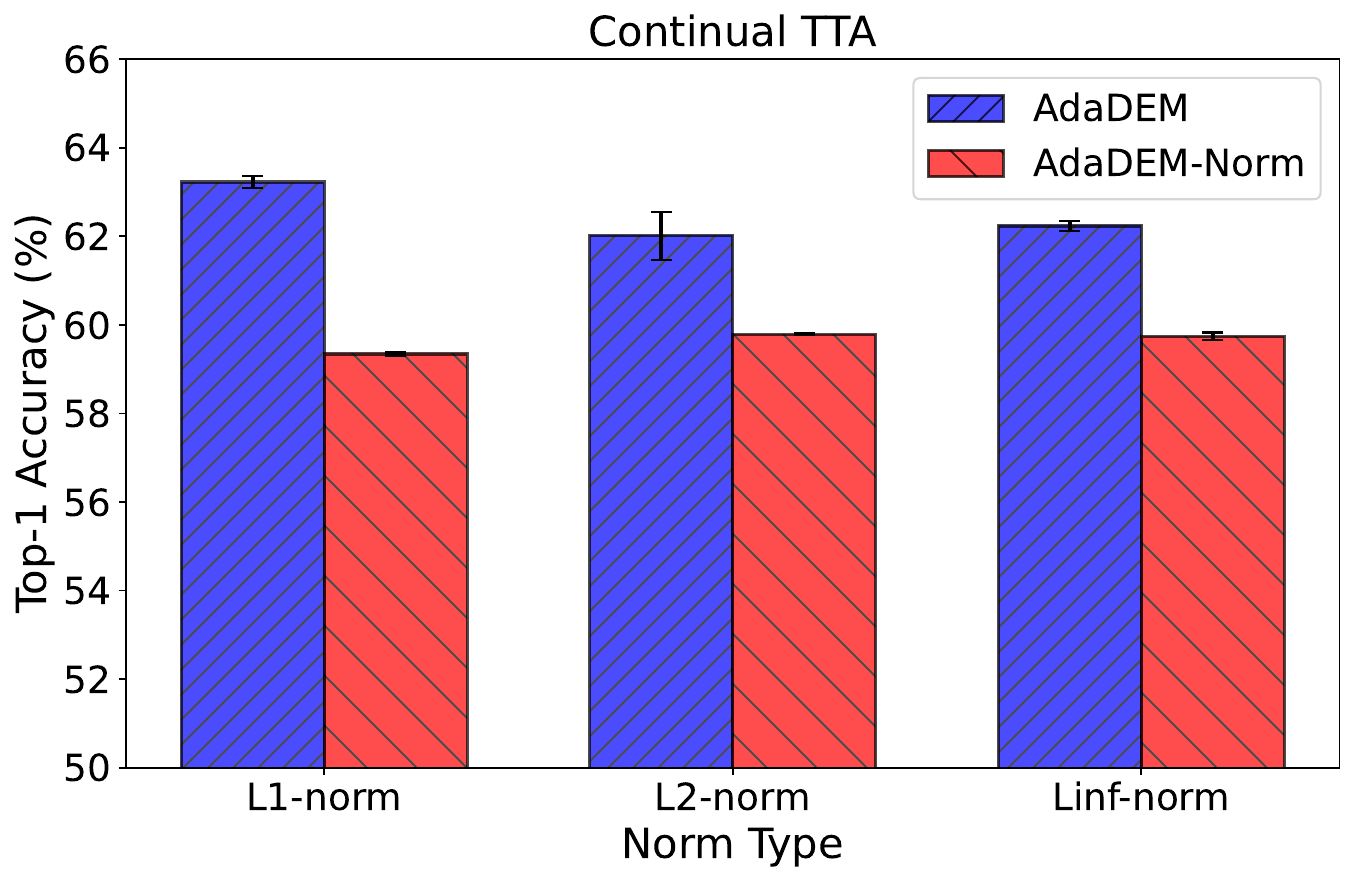}}
\end{minipage}
\end{center}
\vskip -0.1in
\caption{Ablation study of different Ln-norm types to normalize the reward brought from CADF.}
\label{appendix ablation study of norm types}
\vspace{-3mm}
\end{figure*}

\subsection{Impact of Different Norm Types on AdaDEM}

As stated in Sec.~\ref{adadem method}, we default to using the L1-norm of the reward brought from CADF, \textit{i.e.}, $\delta=\|\partial T/\partial \mathbf{z}\|_1$, to normalize the loss of an individual sample.
In Fig.~\ref{appendix ablation study of norm types}, we compare applying L1-norm, L2-norm, and L$\infty$-norm as criteria for measuring the reward of CADF on logits, which are then used to normalize the loss function. 
The results show that the impact of different norm types on AdaDEM-Norm is negligible. 
However, the L1-norm achieves better performance for AdaDEM compared to the other two methods. 
The reason is that the L1-norm directly measures the Manhattan distance of the logits predicted by the model for a sample before and after EM optimization.

\begin{figure*}[t]
\vskip 0.1in
\begin{center}
\begin{minipage}{0.49\columnwidth}
\centerline{\includegraphics[width=\columnwidth]{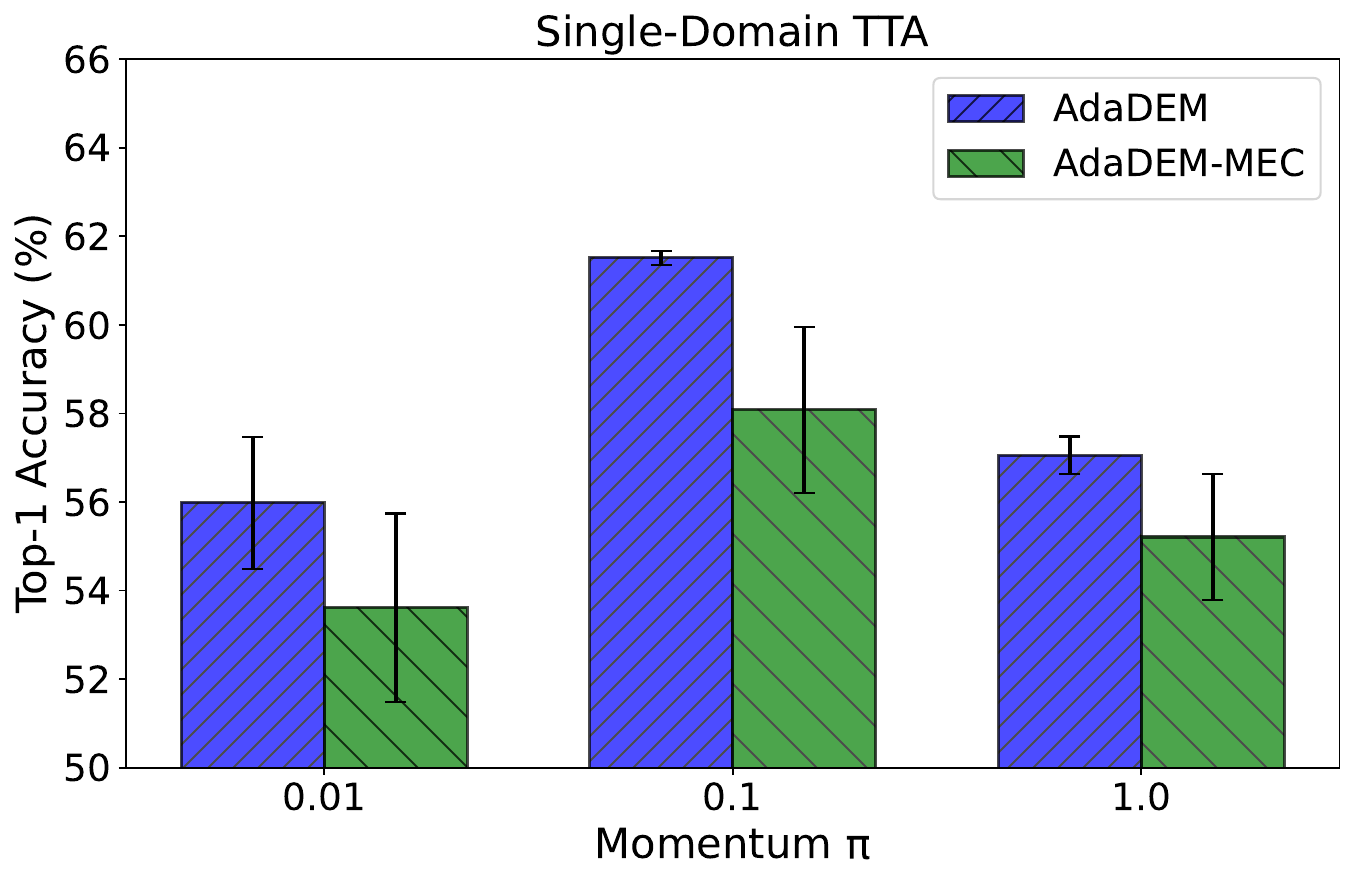}}
\end{minipage}
\hfill
\begin{minipage}{0.49\columnwidth}
\centerline{\includegraphics[width=\columnwidth]{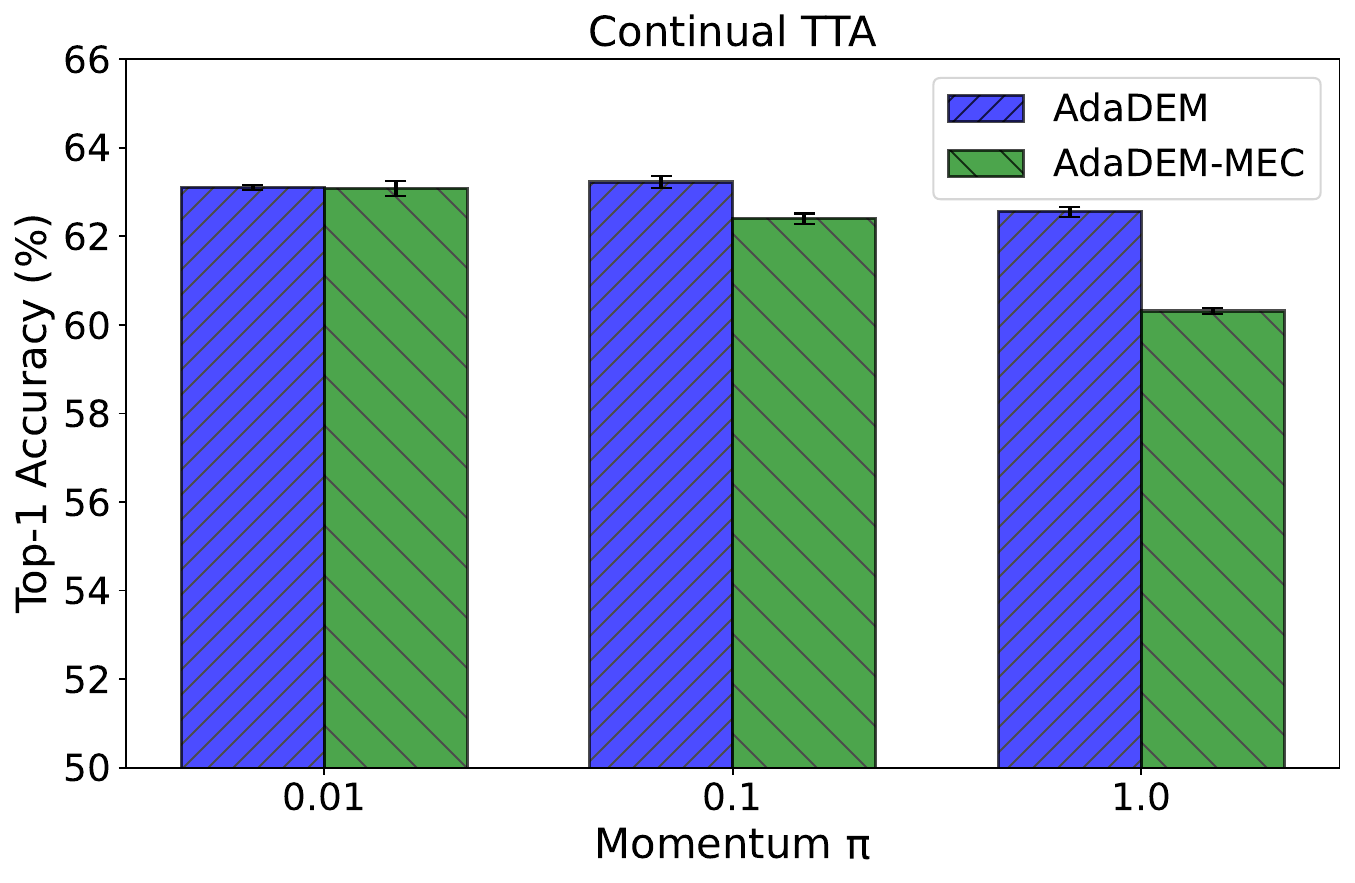}}
\end{minipage}
\end{center}
\vskip -0.1in
\caption{Ablation study of different $\pi$ to estimate prior label distribution for MEC.}
\label{appendix ablation study of pi in MEC}
\end{figure*}

\subsection{Impact of Different $\pi$ on AdaDEM}

As stated in Sec.~\ref{adadem method}, we use the exponential moving average of the marginal entropy predicted by the model for each class as the estimated prior label distribution. 
Further, we can change the momentum $\pi$ in the moving average equation, \textit{i.e.}, $\overline{\mathfrak{p}}_k^t=(1-\pi)\cdot \overline{\mathfrak{p}}_k^{t-1} + \pi \cdot \mathfrak{p}_k$, to alter the follow-up ability of the estimated label distribution to the model's prediction probability. 
Specifically, we set $\pi=1.0$, $0.1$, and $0.01$ for the ablation study, as shown in Fig.~\ref{appendix ablation study of pi in MEC}. 
When $\pi=1.0$, MEC uses the marginal entropy estimated for the samples in the current batch as the regularizer.
When $\pi$ approaches $0.0$, the update of the estimated label distribution becomes slow. 
The experimental results show that in the single-domain task, $\pi=0.1$ is suitable for AdaDEM and AdaDEM-MEC. 
However, for continual tasks, a smaller value of $\pi$ improves the performance of the algorithms for both AdaDEM and AdaDEM-MEC.

\subsection{Ablation Study on Imbalanced Benchmarks}
\label{sec:imbalance study}

Table~\ref{ablation study on imbalanced datasets} presents experimental results on CIFAR-10-LT, CIFAR-100-LT, EuroSat-LT, and TissueMNIST-LT. 
In addition to CIFAR-LT, we employ EuroSat-LT (focusing on land use and land cover classification) and TissueMNIST-LT (targeting biomedical scenarios and tissue type classification). 
We vary the configuration of $\rho$ (the sample ratio between the most and least populous classes).
A larger $\rho$ indicates greater skewness in class sample sizes and higher imbalance severity. 
Results in Table~\ref{ablation study on imbalanced datasets}, along with Fig.~\ref{imb_datasets} in the manuscript, demonstrate that AdaDEM more effectively mitigates easy-class bias and further improves accuracy compared to the classical EM across tasks with varying degrees of imbalance.

\begin{table*}[t]
\caption{Imbalance study on Classical EM and AdaDEM. Top-1 classification accuracy (\%) is reported. $\rho$ denotes the sample ratio between the most and least populous classes. $\Delta$ represents the accuracy improvement of AdaDEM over Classical EM.}
\label{ablation study on imbalanced datasets}
\vspace{-2mm}
\begin{center}
\begin{small}
\resizebox{1.0\linewidth}{!}{
\setlength{\tabcolsep}{3mm}{
\begin{tabular}{c|ccc|c|ccc}
\toprule
\multirow{2}{*}{Methods} & \multicolumn{3}{c|}{
CIFAR-10-LT} & \multirow{2}{*}{Methods} & \multicolumn{3}{c}{CIFAR-100-LT} \\
& $\rho=100$ & $\rho=500$ & $\rho=1000$ & & $\rho=10$ & $\rho=100$ & $\rho=500$ \\

\midrule

Classical EM & 85.4 & 83.9 & 82.1 & Classical EM & 60.0 & 58.6 & 57.6 \\
AdaDEM & 91.4 & 90.7 & 88.6 & AdaDEM & 66.2 & 64.4 & 64.1 \\
$\Delta$ & \textcolor{myDarkGreen}{+6.0} & \textcolor{myDarkGreen}{+6.8} & \textcolor{myDarkGreen}{+6.5} & $\Delta$ & \textcolor{myDarkGreen}{+6.2} & \textcolor{myDarkGreen}{+5.8} & \textcolor{myDarkGreen}{+6.5} \\

\midrule

\multirow{2}{*}{Methods} & \multicolumn{3}{c|}{
EuroSat-LT} & \multirow{2}{*}{Methods} & \multicolumn{3}{c}{TissueMNIST-LT} \\
& $\rho=100$ & $\rho=500$ & $\rho=1000$ & & $\rho=100$ & $\rho=500$ & $\rho=1000$ \\

\midrule

Classical EM & 67.1 & 65.7 & 63.4 & Classical EM & 47.8 & 47.4 & 46.8 \\
AdaDEM & 76.6 & 74.8 & 73.6 & AdaDEM & 49.6 & 48.7 & 48.4 \\
$\Delta$ & \textcolor{myDarkGreen}{+9.5} & \textcolor{myDarkGreen}{+9.1} & \textcolor{myDarkGreen}{+10.2} & $\Delta$ & \textcolor{myDarkGreen}{+1.8} & \textcolor{myDarkGreen}{+1.3} & \textcolor{myDarkGreen}{+1.6} \\

\bottomrule
\end{tabular}}}
\end{small}
\end{center}
\vskip -0.15in
\end{table*}

\section{Broader Impacts}
\label{broader impacts}

This paper aims to improve the widely-used classical Entropy Minimization method in the machine learning community. 
Our proposed method achieves credible performance across multiple machine learning tasks, including semi-supervised and unsupervised learning, domain adaptation, and reinforcement learning, thereby directly facilitating real-world applications of self-supervised entropy minimization strategies. 
Our work involves no human subjects and complies with legal requirements, with no anticipated harmful consequences. 
Through this study, we seek to enhance research and societal awareness regarding imperfectly supervised machine learning and deep learning systems.

\end{document}